\newcommand{\xpt}{\edef\f@size{\@xpt}\rm}
\def\ie{\emph{i.e.}}
\def\etc{\emph{etc}}
\renewcommand\vec[1]{\ensuremath\boldsymbol{#1}}
\renewcommand\cdots{...}
\newcommand{\mY}{\boldsymbol{Y}}
\newcommand{\mZ}{\boldsymbol{Z}}
\newcommand{\vy}{\boldsymbol{y}}
\newcommand{\mX}{\boldsymbol{X}}
\newcommand{\mW}{\boldsymbol{W}}
\newcommand{\vx}{\boldsymbol{x}}
\newcommand{\mbr}[1]{\mathbb{R}^{#1}}
\newcommand{\idx}[1]{\mathcal{I}_{#1}}
\newcommand{\semipd}[1]{\mathcal{S}_{+}^{#1}}
\newcommand{\spd}[1]{\mathcal{S}_{++}^{#1}}
\newcommand{\vz}{\boldsymbol{z}}
\newcommand{\vphi}{\boldsymbol{\phi}}
\newcommand{\bigoh}{\mathcal{O}}
\newcommand{\vj}{\vec{j}}
\newcommand{\fnorm}[1]{\left\|{#1}\right\|_F}
\newcommand{\set}[1]{\left\{#1\right\}}
\DeclareMathOperator*{\argmin}{arg\,min}
\DeclareMathOperator*{\trace}{Tr}
\DeclareMathOperator*{\avg}{Avg}
\newtheorem{proposition}{Proposition}
\newtheorem{remark}{Remark}
\newcommand{\mLambda}{\bm{\lambda}}
\newcommand{\mU}{\bm{U}}
\newcommand{\mV}{\bm{V}}
\newcommand{\mPi}{\bm{\Pi}}
\def\eg{\emph{e.g.}}
\newcommand{\vOnes}{\mathbb{1}}
\newcommand{\mJ}{\boldsymbol{J}}
\newcommand{\overbartwo}[1]{\mkern 2mu\overline{\mkern-2mu#1\mkern-2mu}\mkern 2mu}
\newcommand{\mK}{\boldsymbol{K}}
\newcommand{\cov}{\boldsymbol{\Sigma}}
\newcommand{\mPhi}{\boldsymbol{\Phi}}
\newcommand{\mLam}{\boldsymbol{\Lambda}}
\newcommand{\vmu}{\boldsymbol{\mu}}
\newcommand{\mP}{\boldsymbol{\Theta}}
\newcommand{\stkout}[1]{{\ifmmode\text{\sout{\ensuremath{#1}}}\else\sout{#1}\fi}}
\newcommand{\dsAW}{$\mathcal{A}\!\!\shortrightarrow\!\mathcal{W}$}
\newcommand{\dsAD}{$\mathcal{A}\!\!\shortrightarrow\!\!\mathcal{D}$}
\newcommand{\dsClAr}{$Cl\!\!\shortrightarrow\!\!Ar$}
\newcommand{\dsPrAr}{$Pr\!\!\shortrightarrow\!\!Ar$}
\newcommand{\mR}{\boldsymbol{R}}
\newcommand{\mA}{\boldsymbol{A}}
\newcommand{\comment}[1]{}
\title{\Large Museum Exhibit Identification Challenge for Domain Adaptation and Beyond}
\author{Piotr Koniusz\thanks{Both authors contributed equally.\newline\indent\indent$\!\!$This work is under review and will be updated shortly. Please respect the authors' efforts by not copying/borrowing/plagiarizing bits and pieces of this work for your own gain.}\textsuperscript{$\;\,$,1,2}\qquad Yusuf Tas\textsuperscript{$*$,1,2}\qquad Hongguang Zhang\textsuperscript{2,1}\\\textbf{Mehrtash Harandi\textsuperscript{1,2}\qquad Fatih Porikli\textsuperscript{2}\qquad Rui Zhang\textsuperscript{3}}\\
$^1$Data61/CSIRO, $^2$Australian National University, $^3$Hubei University of Arts and Science\\
firstname.lastname@\{data61.csiro.au\textsuperscript{1}, anu.edu.au\textsuperscript{2}\}, renata\_zhang@sina.com\textsuperscript{3}
}
\newcommand\keywords[1]{}
\begin{document}

\maketitle

\def\arxiv{arxiv.tex}
\begin{abstract}
In this paper, we approach an open problem of artwork identification and propose a new dataset dubbed Open Museum Identification Challenge (Open MIC). It contains photos of exhibits captured in 10 distinct exhibition spaces of several museums which showcase paintings, timepieces, sculptures, glassware, relics, science exhibits
, natural history pieces
, ceramics, pottery, tools and indigenous crafts. The goal of Open MIC is to stimulate research in domain adaptation, egocentric recognition and few-shot learning by providing a testbed complementary to the famous Office dataset which reaches $\sim$90\% accuracy \cite{me_domain}. To form our dataset, we captured a number of images per art piece with a mobile phone and wearable cameras to form the source and target data splits, respectively. To achieve robust baselines, we build on a recent approach that aligns per-class scatter matrices of the source and target CNN streams~\cite{me_domain}. Moreover, we exploit the positive definite nature of such representations by using end-to-end Bregman divergences and the Riemannian metric. We present baselines such as training/evaluation per exhibition and training/evaluation on the combined set covering 866 exhibit identities. 
As each exhibition poses distinct challenges {\em \eg}, quality of lighting, motion blur, occlusions, clutter, viewpoint and scale variations, rotations, glares, transparency, non-planarity, clipping, we break down results w.r.t. these factors.
\end{abstract}

\section{Introduction}
\label{sec:intro}

Domain adaptation and transfer learning  are the problems widely studied in computer vision and machine learning communities~\cite{transfer_workshop_1995, transfer_workshop_2016}. They are  inspired by the human cognitive capacity to learn new concepts from very few data samples (cf. training classifier on millions of labeled images from the ImageNet dataset~\cite{ILSVRC15}). Generally, given a new (target) task to learn, the arising question is how to identify the so-called {\em commonality} \cite{tommasi_cvpr10,me_domain} between this task and previous (source) tasks, and transfer knowledge from the source tasks to the target one. Therefore, one has to address three questions: what to transfer, how, and when~\cite{tommasi_cvpr10}.

Domain adaptation and transfer learning utilize annotated and/or unlabeled data and perform tasks-in-hand on the target data \eg, learning new categories from few annotated samples (supervised domain adaptation~\cite{chopra_icml_workshop, tzeng_transfer}), utilizing available unlabeled data (unsupervised~\cite{frustrating_domain_return, ganin_jmlr_adversal} or semi-supervised domain adaptation~\cite{frustrating_domain, tzeng_transfer}), recognizing new categories in embedded spaces (\eg attribute-based) without any training samples (zero-shot learning~\cite{feifei_oneshot}). Problems such as one- and few-shoot learning attempt to train robust class predictors from at most few data points \cite{feifei_oneshot}. 

Recently, algorithms for supervised domain adaptation such as {\em Simultaneous Deep Transfer Across Domains and Tasks}~\cite{tzeng_transfer} and {\em Second- or Higher-order Transfer (So-HoT)} of knowledge \cite{me_domain} combined with Convolutional Neural Networks (CNN) \cite{krizhevsky_alexnet,simonyan_vgg} in end-to-end fashion have reached state-of-the-art results $\sim$90\% accuracy on classic benchmarks such as the Office dataset \cite{saenko_office}. By and large, such an increase in performance is due to fine-tuning of CNNs on the large-scale datasets such as ImageNet \cite{ILSVRC15} and Places Database \cite{places_dataset}. Indeed, fine-tuning of CNN is a powerful domain adaptation and transfer learning tool by itself~\cite{girshick_rich_feat,sermanet_overfeat}.
Furthermore, recent semi-supervised and unsupervised approach to {\em Learning an Invariant Hilbert Space} \cite{samita_domain} has also reached $\sim$90\% accuracy by using generic CNN descriptors vs. $\sim$56\% for SURF. The gap between CNN-based and simpler representations is also visible in the {\em CORAL} method \cite{frustrating_domain_return}, for which performance varies between 46\% and 70\% accuracy. Thereby, these works exhibit saturation for CNN features when evaluated on the Office \cite{saenko_office} dataset or its newer Office+Caltech 10 variant \cite{office_calt10}. 
 
Therefore, we propose a new dataset for the task of exhibit identification in museum spaces that challenges domain adaptation and fine-tuning due to its significant domain shifts between the source and target subsets.

For the source domain, we captured the photos in a controlled fashion by Android phones \eg, we ensured that each exhibit is centered and non-occluded in photos. We prevented adverse capturing conditions and did not mix multiple objects per photo unless they were all part of one exhibit. We captured 2--30 photos of each art piece from different viewpoints and distances in their natural settings.

For the target domain, we employed an egocentric setup to ensure {\em in-the-wild} capturing process. We equipped 2 volunteers per exhibition with cheap wearable cameras and let them  stroll and interact with artworks at their discretion. Such a capturing setup is applicable to preference and recommendation systems \eg, a curator takes training photos of exhibits with an Android phone while visitors stroll with wearable cameras to capture data from the egocentric perspective for a system to reason about the most popular exhibits. 
Open MIC contains 10 distinct source-target subsets 
of images from 10 different kinds of museum exhibition spaces, each exhibiting various photometric and geometric challenges, as detailed in Section \ref{sec:expts}. 



To demonstrate the intrinsic difficulty of Open MIC, we chose useful baselines in supervised domain adaptation detailed in Section \ref{sec:expts}. They include fine-tuning CNNs on the source and/or target data and training a state-of-the-art So-HoT model \cite{me_domain} which we equip with non-Euclidean distances \cite{anoop_logdet,PEN06} for robust end-to-end learning.

We provide various evaluation protocols which include: (i) training/evaluation per exhibition subset, (ii) training/testing on the combined set that covers all 866 identity labels, (iii) testing w.r.t. various scene factors annotated by us such as quality of lighting, motion blur, occlusions, clutter, viewpoint and scale variations, rotations, glares, transparency, non-planarity, clipping, \etc.

Moreover, we introduce a new evaluation metric inspired by a saliency problem detailed next. As numerous exhibits can be captured in a target image, we asked our volunteers to enumerate in descending order the labels of most salient/central exhibits they had interest in at a given time followed by less salient/distant exhibits. As we ideally want to understand the volunteers' preferences, the classifier has to decide which detected exhibit is the most salient. We note that the annotation- and classification-related processes are not free of noise. Therefore, we propose to not only look at the top-$k$ accuracy known from ImageNet \cite{ILSVRC15} but to also check if any of top-$k$ predictions are contained within the top-$n$ fraction of all ground-truth labels enumerated for a target image. We refer to this as a top-$k$-$n$ measure.

%

To obtain convincing baselines, we balance the use of an existing approach \cite{me_domain} with our mathematical contributions and evaluations. The So-HoT model \cite{me_domain} uses the Frobenius metric for partial alignment of within-class statistics obtained from CNNs. 
The hypothesis behind such modeling is that the partially aligned statistics capture so-called {\em commonality} \cite{tommasi_cvpr10,me_domain} between the source and target domains; thus facilitating knowledge transfer. 
For the pipeline in Figure \ref{fig:cnn_all}, we use two CNN streams of the VGG16 network~\cite{simonyan_vgg} which correspond to the source and target domains. We build scatter matrices, one per stream per class, from feature vectors of the {\em fc} layers. To exploit benefits of geometry of positive definite matrices, we regularize and align scatters  by the Jensen-Bregman LogDet Divergence ({\em JBLD}) \cite{anoop_logdet} in end-to-end manner and compare to the Affine-Invariant Riemannian Metric ({\em AIRM}) \cite{PEN06,bhatia_pdm}. However, evaluations of gradients of non-Euclidean distances are slow for typical $4096\!\times\!4096$ dimensional matrices. We show by the use of Nystr\" om projections that, with typical numbers of data samples per source/target per class being $\sim$50 in domain adaptation, evaluating such distances can be fast and exact. 

To summarize, our contributions are as follows: (i) we collect and annotate a new challenging Open MIC dataset with domains consisting of the pictures taken by Android phones and wearable cameras; the latter exhibiting a series of realistic distortions due to the egocentric capturing process, (ii) we compute useful baselines, provide various evaluation protocols, statistics and top-$k$-$n$ results, as well as include breakdown of results w.r.t. annotated by us scene factors, (iii) we use non-Euclidean JBLD and AIRM distances for end-to-end training of the supervised domain adaptation approach and we exploit the Nystr\" om projections to make this training tractable. 
To our best knowledge, these distances have not been used before in the supervised domain adaptation due to their high computational complexity.

\begin{figure}[t]
\centering
%
\begin{subfigure}[b]{0.99\linewidth}
\centering\includegraphics[trim=0 0 0 0, clip=true, width=8.2cm]{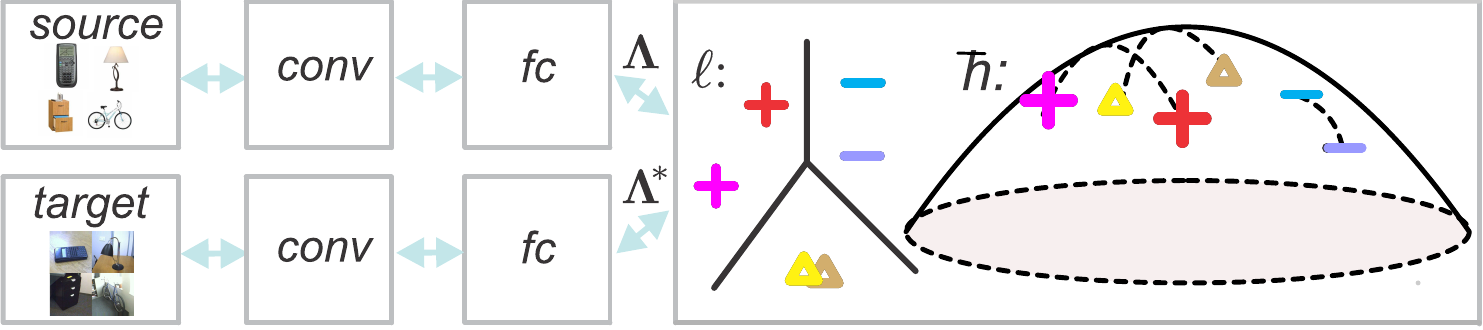}
\caption{\label{fig:cnn1}}
\vspace{-0.1cm}
\end{subfigure}
\begin{subfigure}[b]{0.99\linewidth}
\centering\includegraphics[trim=0 0 0 0, clip=true, width=8.2cm]{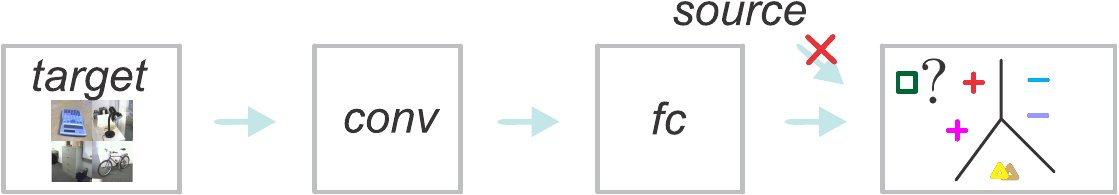}
\caption{\label{fig:cnn2}}
\end{subfigure}
\vspace{-0.2cm}
\caption{The pipeline. Figure \ref{fig:cnn1} shows the source and target network streams which merge at the classifier level. The classification and alignment losses $\ell$ and $\hbar$ take the data  $\mLam$ and $\mLam^{*\!}$ from both streams and participate in end-to-end learning. Loss $\hbar$ aligns covariances on the manifold of $\spd{}$ matrices. At the test time, we use the target stream and the trained classifier as in Figure \ref{fig:cnn2}.
}\vspace{-0.3cm}
\label{fig:cnn_all}
\end{figure}

\section{Related Work}
\label{sec:related_work}

We start by describing the most popular datasets for the problem at hand and explain how the Open MIC dataset differs from them. Subsequently, we describe various domain adaptation approaches which are related to our work.

\vspace{0.05cm}
\noindent{\textbf{Datasets.}}
A popular dataset for evaluating against the effect of domain shift is the Office dataset~\cite{saenko_office} which contains 31 object categories and three domains: Amazon, DSLR and Webcam. The 31 categories in the dataset consist of objects commonly encountered in the office setting, such as keyboards, file cabinets, and laptops. The Amazon domain contains images which were collected from a website of on-line merchants. Its objects appear on clean backgrounds and at a fixed scale. 
The DSLR domain contains low-noise high resolution images of object captured from different viewpoints while Webcam contains low resolution images. 
The Office dataset has been used in numerous publications \cite{frustrating_domain_return, tzeng_transfer, ganin_jmlr_adversal, chopra_icml_workshop, xiong_eccv16, kuzborskij_cvpr16, tommasi_eccv16,samita_domain} that address domain adaptation, to name but a few of approaches. 
Its recent extension includes a new Caltech 10 domain \cite{office_calt10}.

The Office dataset is primarily used for the transfer of knowledge about object categories between domains. In contrast, our dataset addresses the transfer of instances between domains. Each domain of the Open MIC dataset contains 37--166 specific instances to distinguish from (866 in total) compared to relatively low number of 31 classes in the Office dataset. Moreover, our target subsets are captured in an egocentric manner \eg, we did not align objects to the center of images or control the shutter \etc.

A recent large collection of datasets for domain adaptation was proposed in technical report \cite{tomassi_tesbed} to study cross-dataset domain shifts in object recognition with use of 
the ImageNet, Caltech-256, SUN, and Bing datasets. 
Even larger is the latest Visual Domain Decathlon challenge \cite{decathlon_challenge} which combines datasets such as ImageNet, CIFAR--100, Aircraft, Daimler pedestrian classification, Describable textures, German traffic signs, Omniglot, SVHN, UCF101 Dynamic Images, VGG--Flowers. In contrast, our dataset contains highly varied target appearances which are challenging in few-shot learning scenarios. We target the identity recognition across exhibits captured in egocentric setting which vary from paintings to sculptures to glass to pottery to figurines. Moreover, some artworks in our dataset exhibit fine-grained traits as they are hard to distinguish from without the expert knowledge. 

The PIE Multiview dataset \cite{multi_pie} includes face images of 67 subjects and exhibits different viewpoints, varies in illumination and expressions. It has been used in the instance-based domain adaptation \cite{samita_domain}. Our Open MIC however is not limited to instances of faces or controlled capture setting. Open MIC contains diverse 10 subsets with paintings, timepieces, sculptures, science exhibits, glasswork, relics, ancient animals, plants, figurines, ceramics, native arts \etc.

%

\vspace{0.05cm}
\noindent{\textbf{Domain adaptation algorithms.}}  
Deep learning has been used in the context of domain adaptation in numerous recent works \eg,~\cite{tzeng_transfer, ganin_jmlr_adversal, chopra_icml_workshop, xiong_eccv16, kuzborskij_cvpr16, tommasi_eccv16,me_domain}. These works establish the so-called commonality between domains. In~\cite{tzeng_transfer}, the authors propose to align both domains via the cross entropy which `maximally confuses' both domains for supervised and semi-supervised settings. 
%
%
%
\begin{table*}[t]
\vspace{-0.4cm}
\begin{center}
{
\setlength{\tabcolsep}{0.2em}
\centering
\hspace{-0.7cm}
\begin{tabular}{c | c c c c c c c}
\multirow{2}{*}{Dist./Ref.} & \multirow{2}{*}{$d^2(\cov,\cov^{*})$} & \multirow{2}{*}{Invar.} & Tr. & \multirow{2}{*}{Geo.} & $d$ if & $\triangledown_{\cov}$ &\multirow{2}{*}{$\frac{\partial d^2(\cov,\cov^{*})}{\partial\cov}$}\\
& & & \kern-0.4em Ineq. & & $\semipd{}$ & if $\semipd{}$ &\\
\hline
Frobenius & $||\cov\!-\!\cov^{*}||_F^2$ & rot. & yes & no & fin. & fin. & $2(\cov\!-\!\cov^{*})$\\
AIRM \cite{PEN06} & $||\cov^{-\frac{1}{2}}\cov^{*}\cov^{-\frac{1}{2}}||_F^2$ & aff./inv. & yes & yes & $\infty$ & $\infty$ & \kern-0.6em$-2\cov^{-\frac{1}{2}}\!\log(\cov^{-\frac{1}{2}}\cov^{*}\cov^{-\frac{1}{2}})\cov^{-\frac{1}{2}}$\kern-0.9em\\
JBLD \cite{anoop_logdet} & $\log\!\left|\frac{\cov\!+\!\cov^*\!}{2}\right|\!-\!\frac{1}{2}\log\!\left|\cov\!\cov^*\!\right|$ & aff./inv. & no & no & $\infty$ & $\infty$ & $(\cov\!+\!\cov^*\!)^{-1}\!-\!\frac{1}{2}\!\cov^{-1}$\\
\end{tabular}
}
\end{center}
\caption{Frobenius, JBLD and AIRM distances and their properties from the literature. These distances operate between a pair of arbitrary matrices $\cov$ and $\cov^*\!$ which are points in $\spd{}$ (and/or $\semipd{}$ for Frobenius).}
\label{tab:non-euclid}
\vspace{-0.3cm}
\end{table*}
%
%
%
In~\cite{chopra_icml_workshop}, the authors capture the `interpolating path' between the source and target domains using linear projections into a low-dimensional subspace on the Grassman manifold. In~\cite{xiong_eccv16}, the authors propose to learn the transformation between the source and target by the deep regression network. 
Our model differs in that our source and target network streams co-regularize each other via the JBLD or AIRM distance that respects the non-Euclidean geometry of the source and target matrices. We perform an alignment of scatter matrices advocated in \cite{me_domain}. 

For visual domains, the domain adaptation can be applied in the spatially-local sense to target so-called {\em roots} of domain shift. In~\cite{tommasi_eccv16}, the authors utilize so-called `domainness maps' which capture locally the degree of domain specificity. Our work is orthogonal to this method. We perform domain adaptation globally in the spatial sense, however, our ideas can be extended to a spatially-local setting. 

Some recent works enforce correlation between the source and target distributions \eg, the authors of \cite{yeh_cca_hetero} utilize a correlation subspace as a joint representation for associating the data across different domains. They also use kernelized CCA. In~\cite{frustrating_domain_return}, the authors propose an unsupervised domain adaptation by the correlation alignment. In \cite{me_domain}, the authors perform class-specific alignment of source and target distributions with use of tensors and the Frobenius norm. Our work is similar in spirit as it utilizes a similar general setup. However, we first project class-specific vector representations from the {\em fc} layers of the source and target CNN streams to the common space via Nystr\" om projections for tractability and then we combine them with the JBLD or AIRM distance to exploit the (semi)definite positive nature of scatter matrices. We perform end-to-end learning which requires non-trivial derivatives of JBLD/AIRM distance and Nystr\" om projections for computational efficiency. 

\section{Background}
\label{sec:background}

In this section, we review our notations and the necessary background on scatter matrices,
Nystr\" om projections, the Jensen-Bregman LogDet ({\em JBLD}) divergence \cite{anoop_logdet} and the Affine-Invariant Riemannian Metric ({\em AIRM}) \cite{PEN06,bhatia_pdm}. 

\subsection{Notations}
\label{sec:notations}
Let $\vx\in\mbr{d}$ be a $d$-dimensional feature vector. $\idx{N}$ stands for the index set $\set{1, 2,\cdots,N}$. 
The Frobenius norm of matrix is given by  $\fnorm{\mX}\!\!=\!\!\!\sqrt{\sum\limits_{m,n} \!\!X_{mn}^2}$, where $X_{mn}$ represents the $\left(m,n\right)$-th element of $\mX$.  
The spaces of symmetric positive semidefinite and definite matrices are $\semipd{d}$ and $\spd{d}$. A vector with all coefficients equal one is denoted by $\vOnes$ and $\mJ_{mn}$ is a matrix of all zeros with one at position $(m,n)$.

\subsection{Nystr\" om Approximation}
\label{sec:nystrom}
In our domain adaptation model, we rely on Nystr\" om projections, thus, we review their general mechanism first.

\begin{proposition}
Suppose $\mX\!\in\!\mbr{d\times N}$ and $\mZ\!\in\!\mbr{d\times N'\!}$ store $N$ feature vectors and  $N'$ pivots (vectors used in approximation) of dimension $d$ in their  columns, respectively. Let $k:\mbr{d}\times \mbr{d}\to\mathbb{R}$ be a positive definite kernel. 
We form two kernel matrices 
$\mK_{\mZ\mZ}\!\in\!\spd{N'\!}$ and $\mK_{\mZ\mX}\!\in\!\mbr{N'\!\!\times\!N}$ with their $(i,j)$-th elements being
$k(\vz_i,\vz_j)$ and $k(\vz_i,\vx_j)$, respectively. 
Then, the Nystr\" om feature map $\tilde{\mPhi}\!\in\!\mbr{N'\!\!\times\!N}\!\!$, whose columns correspond to the input vectors in $\mX$,
and the Nystr\" om approximation of kernel $\mK_{\mX\mX}$ for which $k(\vx_i,\vx_j)$ is its $(i,j)$-th entry,
are given by:
\begin{align}
& \tilde{\mPhi}= \mK_{\mZ\mZ}^{-0.5}\mK_{\mZ\mX} \quad\text{{\normalfont and}}\quad \mK_{\mX\mX}\approx\tilde{\mPhi}^T\tilde{\mPhi}.\label{eq:nyst2}
\end{align}
\end{proposition}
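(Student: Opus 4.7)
The plan is to work in the RKHS $\mathcal{H}_k$ associated with the positive definite kernel $k$, where $k(\vu,\vv)=\langle\phi(\vu),\phi(\vv)\rangle_{\mathcal{H}_k}$, and to identify $\tilde{\mPhi}^T\tilde{\mPhi}$ as the Gram matrix obtained after orthogonally projecting the feature vectors $\phi(\vx_j)$ onto the subspace $V=\spann\set{\phi(\vz_1),\ldots,\phi(\vz_{N'})}$ spanned by the pivots. My workflow is: (i) solve for the projection coefficients in the basis $\{\phi(\vz_i)\}$, (ii) write the resulting inner-product matrix of the projected features in closed form, and (iii) recognize the factorization that yields the claimed $\tilde{\mPhi}$.

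For step (i), I would write $\hat{\phi}(\vx_j)=\sum_{i=1}^{N'}\alpha^{(j)}_i\phi(\vz_i)$ and impose the orthogonality conditions $\langle\hat{\phi}(\vx_j)-\phi(\vx_j),\,\phi(\vz_\ell)\rangle=0$ for every $\ell\in\idx{N'}$. Expanding the inner products through $k$ yields the linear system $\mK_{\mZ\mZ}\valpha^{(j)}=(\mK_{\mZ\mX})_{:,j}$, where $(\mK_{\mZ\mX})_{:,j}$ denotes the $j$-th column. Positive definiteness of $k$ together with the standing assumption that the pivot features are linearly independent guarantees $\mK_{\mZ\mZ}\in\spd{N'}$, so the system has the unique solution $\valpha^{(j)}=\mK_{\mZ\mZ}^{-1}(\mK_{\mZ\mX})_{:,j}$; stacking these column-wise gives the coefficient matrix $\mA=\mK_{\mZ\mZ}^{-1}\mK_{\mZ\mX}$.

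For steps (ii) and (iii), the approximate Gram matrix has entries $\langle\hat{\phi}(\vx_i),\hat{\phi}(\vx_j)\rangle=(\valpha^{(i)})^T\mK_{\mZ\mZ}\valpha^{(j)}$, which simplifies to $(\mK_{\mX\mZ}\mK_{\mZ\mZ}^{-1}\mK_{\mZ\mX})_{ij}$. Because $\mK_{\mZ\mZ}$ is in $\spd{N'}$ it admits a unique symmetric square root, so I can split $\mK_{\mZ\mZ}^{-1}=\mK_{\mZ\mZ}^{-1/2}\mK_{\mZ\mZ}^{-1/2}$ and absorb one factor into each side. Defining $\tilde{\mPhi}=\mK_{\mZ\mZ}^{-1/2}\mK_{\mZ\mX}$ then gives $\tilde{\mPhi}^T\tilde{\mPhi}=\mK_{\mX\mZ}\mK_{\mZ\mZ}^{-1}\mK_{\mZ\mX}$, which is exactly the Gram matrix of the projected features and therefore the claimed approximation to $\mK_{\mX\mX}$; the relation becomes an equality whenever each $\phi(\vx_j)$ already lies in $V$.

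The main obstacle is bookkeeping rather than depth: I must be careful with the transpose and column conventions distinguishing $\mK_{\mZ\mX}$ from $\mK_{\mX\mZ}=\mK_{\mZ\mX}^T$, argue that $\mK_{\mZ\mZ}$ is strictly positive definite so that $\mK_{\mZ\mZ}^{-1/2}$ is well defined and unique, and emphasise that the statement is only an approximation. Quantifying the residual in Frobenius or spectral norm would require an additional argument bounding $\fnorm{(I-P_V)\phi(\vx_j)}$ for each $j$, but the proposition only asks for the qualitative approximation, so I would stop here.
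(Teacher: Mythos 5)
Your argument is correct: the orthogonal-projection-in-the-RKHS derivation, solving $\mK_{\mZ\mZ}\valpha^{(j)}=(\mK_{\mZ\mX})_{:,j}$ and factoring the resulting Gram matrix $\mK_{\mZ\mX}^T\mK_{\mZ\mZ}^{-1}\mK_{\mZ\mX}$ through the symmetric square root, is precisely the standard Nystr\" om construction, and your invertibility caveat is already covered by the proposition's assumption $\mK_{\mZ\mZ}\in\spd{N'}$. The paper itself supplies no proof and simply defers to the cited reference \cite{bo_nystrom}, whose derivation is essentially the one you give, so there is nothing to contrast.
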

\begin{proof}
\vspace{-0.2cm}
See \cite{bo_nystrom} for details.
\end{proof}
\begin{remark}
The quality of approximation of \eqref{eq:nyst2} depends on the kernel $k$, data points $\mX$, pivots $\mZ$ and their number $N'\!$. In the sequel, we exploit a specific setting under which $\mK_{\mX\mX}\!=\!\tilde{\mPhi}^T\tilde{\mPhi}$ which indicates no approximation loss.
\end{remark}

\subsection{Scatter Matrices}
\label{sec:cov_mat}
We make a frequent use of distances $d^2(\cov,\cov^{*\!})$ that operate between covariances $\cov\!\equiv\!\cov(\mPhi)$ and $\cov^*\!\!\equiv\!\cov(\mPhi^*\!)$ on feature vectors. Therefore, we provide a useful derivative of $d^2(\cov,\cov^{*\!})$ w.r.t. feature vectors $\mPhi$.

\begin{proposition}
\label{prop:chain_for_phi}
Suppose $\mPhi\!=\![\vphi_1,\cdots,\vphi_N]$ and $\mPhi^{*}\!\!=\![\vphi^*_1\!,\cdots,\vphi^*_{N^*}\!]$ are some feature vectors of quantity $N$ and $N^{*\!}$, \eg, formed by Eq. \eqref{eq:nyst2} and used to evaluate $\cov$ and $\cov^{*}\!$ with $\vmu$ and $\vmu^*\!$ being 
the mean of $\mPhi$ and $\mPhi^*\!$, respectively. 
Then, derivatives of $d^2\!\equiv\!d^2(\cov,\cov^{*})$ w.r.t. $\mPhi$ and $\mPhi^{*}\!$ are:
\begin{align}
&\!\!\!\!\textstyle\frac{\partial d^2(\cov,\cov^{*})}{\partial\mPhi}\!=\!\frac{2}{N}\!\frac{\partial d^2}{\partial\cov}\!\scriptstyle\left(\mPhi\!-\!\vmu\vOnes^T\right),
\textstyle\frac{\partial d^2(\cov,\cov^{*})}{\partial\mPhi^*}\!=\!\frac{2}{N^*}\!\frac{\partial d^2}{\partial\cov^*}\!\scriptstyle\left(\mPhi^{*}\!\!-\!\vmu^{*}\vOnes^T\right).
%
\label{eq:general_chain}
\end{align}
Moreover, assume some projection matrix $\stkout{\mZ}$. Then for $\mPhi'\!=\!\stkout{\mZ}[\vphi_1,\cdots,\vphi_N]$ and $\mPhi'^{*}\!\!=\!\stkout{\mZ}[\vphi^*_1\!,\cdots,\vphi^*_{N^*}\!]$ with covariances $\cov'$, $\cov'^*\!$, means $\vmu'$, $\vmu'^*\!$ and $d'^2\!\equiv\!d^2(\cov'\!,\cov'^*\!)$, we obtain:
\begin{align}
&\!\!\!\!\!\!\!\!\textstyle\frac{\partial d^2(\cov,\cov^{*})}{\partial\mPhi}\!=\!\frac{2\stkout{\mZ}^T}{N}\!\frac{\partial d'^2}{\partial\cov'}\!\scriptstyle\left(\mPhi'\!\!-\!\vmu'\vOnes^T\right),
\textstyle\frac{\partial d^2(\cov,\cov^{*})}{\partial\mPhi^*}\!=\!-\frac{2\stkout{\mZ}^T}{N^*}\!\frac{\partial d'^2}{\partial\cov'^*}\!\scriptstyle\left(\mPhi'^{*}\!\!-\!\vmu'^{*}\vOnes^T\right).
\label{eq:general_chain2}
\end{align}
\end{proposition}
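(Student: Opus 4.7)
The plan is a two-stage chain rule. First, rewrite the sample covariance in the compact bilinear form $\cov(\mPhi)=\frac{1}{N}\mPhi\boldsymbol{C}\mPhi^T$, where $\boldsymbol{C}=\mI-\frac{1}{N}\vOnes\vOnes^T$ is the centring matrix, which satisfies $\boldsymbol{C}=\boldsymbol{C}^T=\boldsymbol{C}^2$. This rewriting is essential: it absorbs the $\mPhi$-dependence of $\vmu$ into a constant middle factor, turning $\cov$ into a pure quadratic in $\mPhi$ with no hidden $\mPhi$-dependence inside $\vmu$.

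Next, treat $\mPhi\boldsymbol{C}=\mPhi-\vmu\vOnes^T$ as an intermediate variable. Because every distance in Table \ref{tab:non-euclid} is a symmetric function of its argument, the gradient $\partial d^2/\partial\cov$ is itself symmetric. Standard matrix calculus applied to the map $\boldsymbol{M}\mapsto\frac{1}{N}\boldsymbol{M}\boldsymbol{M}^T$ (with $\boldsymbol{M}$ free) then yields $\partial d^2/\partial\boldsymbol{M}=\frac{2}{N}(\partial d^2/\partial\cov)\,\boldsymbol{M}$, with the factor $2$ arising from the two symmetric contributions of $\boldsymbol{M}$ and $\boldsymbol{M}^T$.

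Composing back to $\mPhi$ through $\boldsymbol{M}=\mPhi\boldsymbol{C}$ right-multiplies by $\boldsymbol{C}^T=\boldsymbol{C}$, and idempotence $\boldsymbol{C}^2=\boldsymbol{C}$ collapses the product to $\partial d^2/\partial\mPhi=\frac{2}{N}(\partial d^2/\partial\cov)\,\mPhi\boldsymbol{C}=\frac{2}{N}(\partial d^2/\partial\cov)(\mPhi-\vmu\vOnes^T)$, which is exactly \eqref{eq:general_chain}. The identity for $\mPhi^{*}$ follows by the same argument applied to the starred stream. For \eqref{eq:general_chain2}, the extra left-multiplication $\mPhi'=\mZ\mPhi$ injects a Jacobian factor $\mZ^T$ under the chain rule, prepending it to the already-derived expression and producing the displayed formulas.

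The main obstacle is conceptual rather than computational: a naive derivation that differentiates $\mPhi-\vmu\vOnes^T$ while pretending $\vmu$ is constant is logically incorrect, although it happens to produce the correct answer because the neglected term lies in the null space of the centring map. The centring-matrix identity makes this coincidence transparent and eliminates the need for bookkeeping. A minor point worth flagging in the write-up is that the factor $2$ depends on $\partial d^2/\partial\cov$ being symmetric, which inspection of Table \ref{tab:non-euclid} confirms for Frobenius, AIRM, and JBLD. Finally, the minus sign appearing before $\frac{2\mZ^T}{N^{*}}$ in the second line of \eqref{eq:general_chain2} deserves a cross-check, as the symmetric derivation sketched above produces a $+$ rather than the printed $-$.
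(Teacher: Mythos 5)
Your proof is correct but takes a genuinely different route from the paper's. The paper (in its supplementary material) works entry-wise: it differentiates the autocorrelation term $\sum_n\vphi_n\vphi_n^T$ and the mean outer product $\vmu\vmu^T$ separately with respect to each scalar $\phi_{m'n'}$, assembles $\frac{\partial\cov}{\partial\phi_{m'n'}}=\frac{1}{N}\bigl(\vj_{m'}(\vphi_{n'}-\vmu)^T+(\vphi_{n'}-\vmu)\vj_{m'}^T\bigr)$, and then contracts against $\frac{\partial d^2}{\partial\cov}$ and simplifies the sums. Your centring-matrix argument, writing $\cov=\frac{1}{N}\mPhi\boldsymbol{C}\mPhi^T$ with $\boldsymbol{C}=\mI-\frac{1}{N}\vOnes\vOnes^T$ idempotent and symmetric, reaches the same formulas with less bookkeeping and, as a bonus, explains \emph{why} the naive computation that freezes $\vmu$ happens to give the right answer (the neglected term is annihilated by $\boldsymbol{C}$) --- an insight the paper's derivation does not surface. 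Your caveat that the factor $2$ requires $\frac{\partial d^2}{\partial\cov}$ to be symmetric is well placed; the paper implicitly relies on the same fact when it collapses the two symmetric contributions of \eqref{eq:der_cov}. You are also right to question the minus sign in the second identity of \eqref{eq:general_chain2}: it is inconsistent with the sign-free starred identity in \eqref{eq:general_chain}, and since any sign flip is already absorbed into $\frac{\partial d^2}{\partial\cov'^*}$ itself (e.g.\ for the Frobenius case), the printed minus appears to be a typo carried over from the supplementary derivation (which additionally writes $N$ where $N^*$ is meant). Your version with a $+$ is the correct one.
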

\begin{proof}
\vspace{-0.2cm}
See our supplementary material. 
\end{proof}

\subsection{Non-Euclidean Distances}
\label{sec:noneuclid}

In Table~\ref{tab:non-euclid}, we list the distances $d$ with derivatives w.r.t. $\cov$ used in the sequel. 
We indicate properties such as invariance to rotation ({\em rot.}), 
affine mainpulations ({\em aff.}) and inversion ({\em inv.}). Moreover, we indicate which distances meet the triangle inequality ({\em Tr. Ineq.}) and which are geodesic distances ({\em Geo.}). Lastly, we indicate if the distance $d$ and its gradient $\triangledown_{\cov}$ are finite ({\em fin.}) or infinite ($\infty$) for $\semipd{}$ matrices. This last property indicates that JBLD and AIRM distances require some regularization as our covariances are $\semipd{}$.

\section{Problem Formulation}
\label{sec:problem}

In this section, we equip the supervised domain adaptation approach So-HoT \cite{me_domain} with the JBLD and AIRM distances. Moreover, we show how to use the Nystr\" om projections to make our computations fast.

\subsection{Supervised Domain Adaptation}
\label{sec:dom_adapt}

Suppose $\idx{N}$ and $\idx{N^*}\!$ are the indexes of $N$ source and $N^*\!$ target training data points. $\idx{N_c}$ and $\idx{N_c^*}\!$ are the class-specific indexes for $c\!\in\!\idx{C}$, where $C$ is the number of classes (exhibit identities). Furthermore, suppose 
we have feature vectors from an {\em fc} layer 
of the source network stream, one per image, and their associated labels. Such pairs are given by $\mLam\!\equiv\!\{(\vphi_n, y_n)\}_{n\in\idx{N}}$, where $\vphi_n\!\in\!\mbr{d}$ and $y_n\!\in\!\idx{C}$, $\forall n\!\in\!\idx{N}$. 
For the target data, by analogy, we define pairs $\mLam^{*\!}\!\equiv\!\{(\vphi^*_n, y^*_n)\}_{n\in\idx{N}^*}$, where $\vphi^*\!\!\in\!\mbr{d}$ and $y^*_n\!\!\in\!\idx{C}$, $\forall n\!\in\!\idx{N}^*$. 
Class-specific sets of feature vectors are given as $\mPhi_c\!\equiv\!\{\vphi^c_n\}_{n\in\idx{N_c}}$ and $\mPhi_c^*\!\!\equiv\!\{\vphi^{*c}_n\}_{n\in\idx{N_c^*\!}}$, $\forall c\!\in\!\idx{C}$. Then, $\mPhi\!\equiv\!(\mPhi_1,\cdots,\mPhi_C)$ and $\mPhi^*\!\!\equiv\!(\mPhi^*_1,\cdots,\mPhi^*_C)$.
 Note that we write the asterisk symbol in superscript (\eg~${\vphi}^*$) to denote variables related to the target network while the source-related and generic variables have no such indicator. 
Figure \ref{fig:cnn_all} shows our setup. We formulate our problem as a trade-off between the classifier and alignment losses $\ell$ and $\hbar$:
\vspace{0.00cm}
\begin{align}
\hspace{0.2cm}
&\!\!\!\!\!\!\!\!\!\!\!\!\!\!\!\!\!\!\argmin\limits_{\;\;\substack{\mW\!,\mW^*\!\!\!\!,\mP,\mP^*\!\!\comment{ ,\vec{\zeta},\vec{\overbartwo{\zeta}} }\\\;\;\;\;\text{s. t. }||\vphi_n||_2^2\leq\tau,\\\;\;\;\;\;\;\;\;\,||\vphi^*_{n'}||_2^2\leq\tau,\\\;\;\;\;\forall n\in\idx{N}\!, n'\!\in\idx{N}^*}} 
\ell\!\left(\mW\!,\mLam\right)\!+\!\ell\!\left(\mW^{*\!}\!,\mLam^{*\!}\right)\!+\!\eta||\mW\!-\!\mW^{*\!}||_F^2\;+\label{eq:main_obj1}\\[-35pt]
%
&\qquad\qquad\quad\!\!\!\!\!\underbrace{\frac{\sigma_1}{C}\!\!\sum_{c\in\idx{C}}\!\comment{ \zeta_{c} }d^2_g\left(\cov_c,\cov^*_c\right)
\!+\!\!\frac{\sigma_2}{C}\!\!\sum_{c\in\idx{C}}\!\comment{ \overbartwo{\zeta}_c}||\vmu_c\!\!-\!\!\vmu_c^*||_2^2.
}_{\hbar(\mPhi,\mPhi^*\!)}\nonumber
%
%
\vspace{0.2cm}
\end{align}
Note that Figure \ref{fig:cnn1} indicates by the elliptical/curved shape that $\hbar$ performs the alignment on the $\semipd{}$ manifold along exact (or approximate) geodesics. 
For $\ell$, we employ a generic loss used by CNNs \eg, Softmax. For the source and target streams, the matrices $\mW,\mW^*\!\!\in\!\mbr{d\times C}$ contain unnormalized probabilities (c.f. hyperplanes of two SVMs). 
In Equation \eqref{eq:main_obj1}, separating the class-specific distributions is addressed by $\ell$ while attracting the within-class scatters of both network streams is handled by $\hbar$. Variable 
$\eta$ controls the proximity between $\mW$ and $\mW^*\!$ which encourages the similarity between decision boundaries of classifiers.

Our loss $\hbar$ depends on two sets of variables $(\mPhi_1,\cdots,\mPhi_C)$ and $(\mPhi^*_1,\cdots,\mPhi^*_C)$ -- one set per network stream. Feature vectors $\mPhi(\mP)$ and $\mPhi^*\!(\mP^*\!)$ depend on the parameters of the source and target network streams $\mP$ and $\mP^*\!$ that we optimize over.
%
$\cov_c\!\equiv\!\cov(\mPi(\mPhi_c))$, $\cov^*_c\!\equiv\!\cov(\mPi(\mPhi^*_c))$, $\vmu_c(\mPhi)$ and $\vmu^*_c(\mPhi^*)$ denote the covariances and means, respectively, one covariance/mean pair per network stream per class. Coeffs. $\sigma_1$, $\sigma_2$ control the degree of the scatter and mean alignment, $\tau$ controls the $\ell_2$-norm of feature vectors. 

The Nystr\" om projections are denoted  by $\mPi$. 
Table \ref{tab:non-euclid} indicates that back-propagation on the JBLD and AIRM distances involves inversions of $\cov_c$ and $\cov^*$ to be performed for each  $c\!\in\!\idx{C}$ according to \eqref{eq:main_obj1}. As these covariances are formed from $4096$ dimensional feature vectors of the {\em fc} layer, such inversions are too costly to run fine-tuning \eg, $4s$ per iteration is prohibitive. 
Thus, we demonstrate next how the Nystr\" om projections can be combined with $d_g$.

\ifdefined\arxiv
\newcommand{\SrcImgW}{0.060}
\newcommand{\SrcImgH}{1.49cm}
\else
\newcommand{\SrcImgW}{0.062}
\newcommand{\SrcImgH}{1.9cm}
\fi

\begin{figure*}[t]
\centering
%
\begin{subfigure}[b]{\SrcImgW\linewidth}
\centering\includegraphics[trim=0 0 0 0, clip=true, height=\SrcImgH]{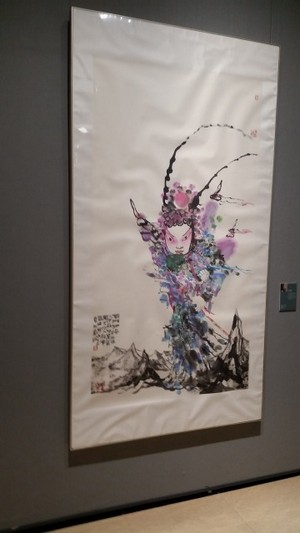}
\end{subfigure}
\begin{subfigure}[b]{\SrcImgW\linewidth}
\centering\includegraphics[trim=0 0 0 0, clip=true, height=\SrcImgH]{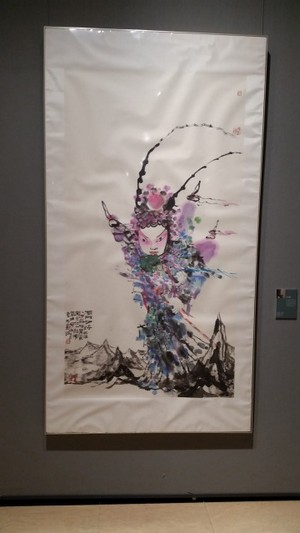}
\end{subfigure}
\begin{subfigure}[b]{\SrcImgW\linewidth}
\centering\includegraphics[trim=0 0 0 0, clip=true, height=\SrcImgH]{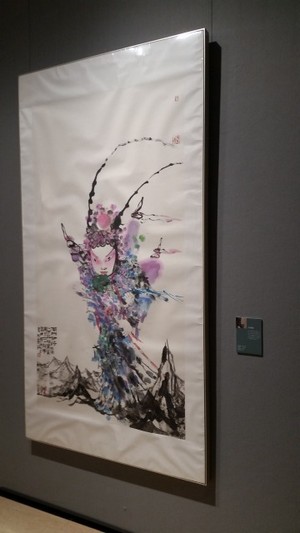}
\end{subfigure}
\begin{subfigure}[b]{\SrcImgW\linewidth}
\centering\includegraphics[trim=0 0 0 0, clip=true, height=\SrcImgH]{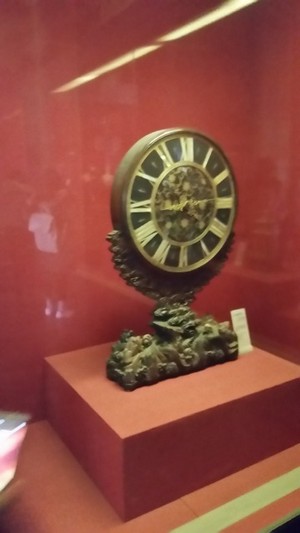}
\end{subfigure}
\begin{subfigure}[b]{\SrcImgW\linewidth}
\centering\includegraphics[trim=0 0 0 0, clip=true, height=\SrcImgH]{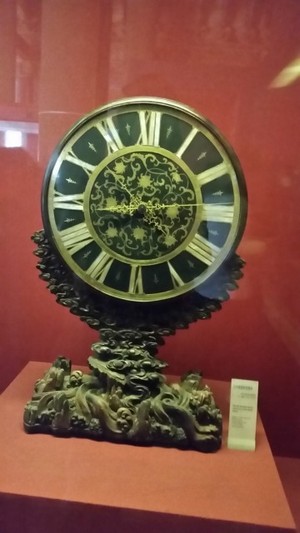}
\end{subfigure}
\begin{subfigure}[b]{\SrcImgW\linewidth}
\centering\includegraphics[trim=0 0 0 0, clip=true, height=\SrcImgH]{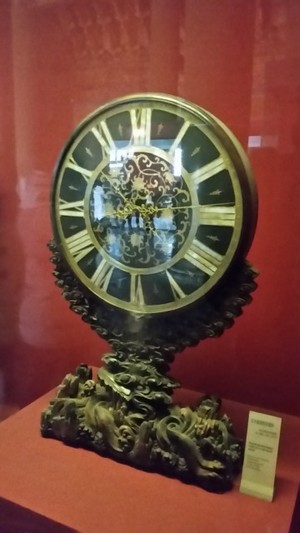}
\end{subfigure}
\begin{subfigure}[b]{\SrcImgW\linewidth}
\centering\includegraphics[trim=0 0 0 0, clip=true, height=\SrcImgH]{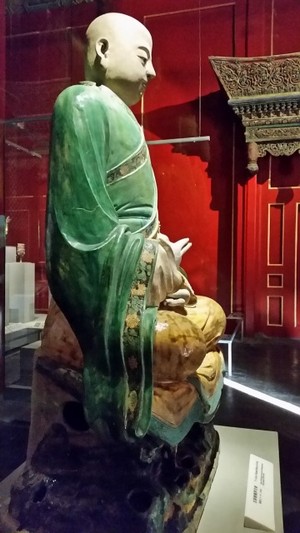}
\end{subfigure}
\begin{subfigure}[b]{\SrcImgW\linewidth}
\centering\includegraphics[trim=0 0 0 0, clip=true, height=\SrcImgH]{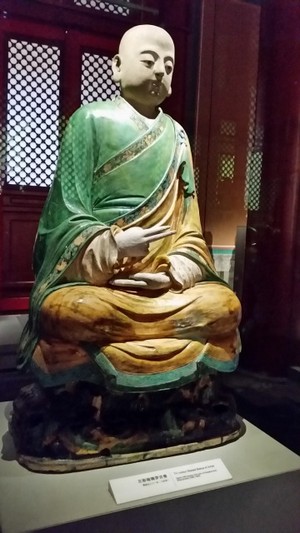}
\end{subfigure}
\begin{subfigure}[b]{\SrcImgW\linewidth}
\centering\includegraphics[trim=0 0 0 0, clip=true, height=\SrcImgH]{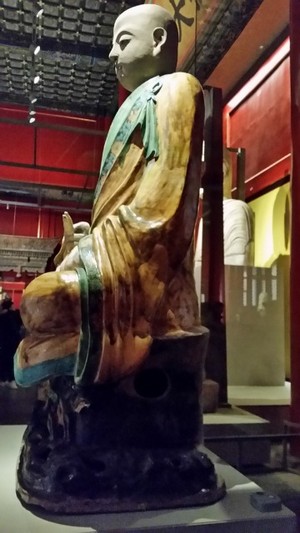}
\end{subfigure}
\begin{subfigure}[b]{\SrcImgW\linewidth}
\centering\includegraphics[trim=0 0 0 0, clip=true, height=\SrcImgH]{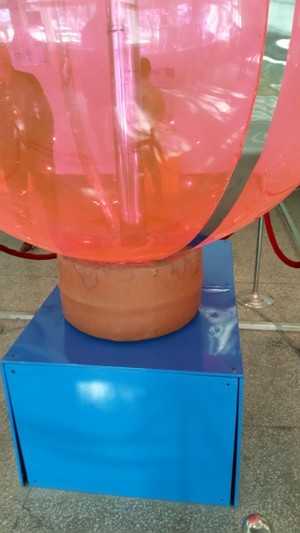}
\end{subfigure}
\begin{subfigure}[b]{\SrcImgW\linewidth}
\centering\includegraphics[trim=0 0 0 0, clip=true, height=\SrcImgH]{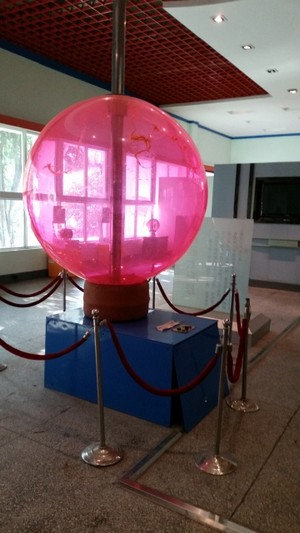}
\end{subfigure}
\begin{subfigure}[b]{\SrcImgW\linewidth}
\centering\includegraphics[trim=0 0 0 0, clip=true, height=\SrcImgH]{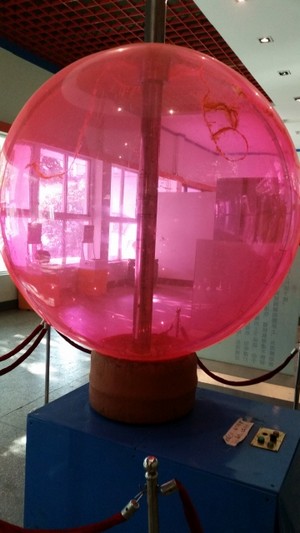}
\end{subfigure}
\begin{subfigure}[b]{\SrcImgW\linewidth}
\centering\includegraphics[trim=0 0 0 0, clip=true, height=\SrcImgH]{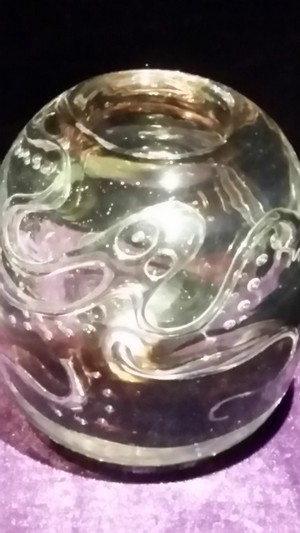}
\end{subfigure}
\begin{subfigure}[b]{\SrcImgW\linewidth}
\centering\includegraphics[trim=0 0 0 0, clip=true, height=\SrcImgH]{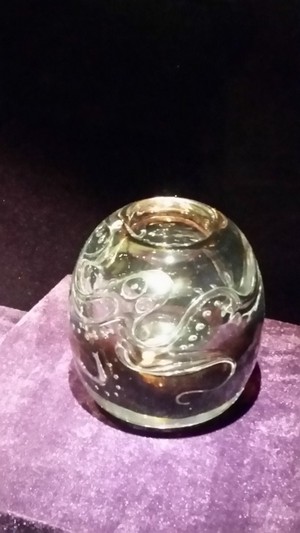}
\end{subfigure}
\begin{subfigure}[b]{\SrcImgW\linewidth}
\centering\includegraphics[trim=0 0 0 0, clip=true, height=\SrcImgH]{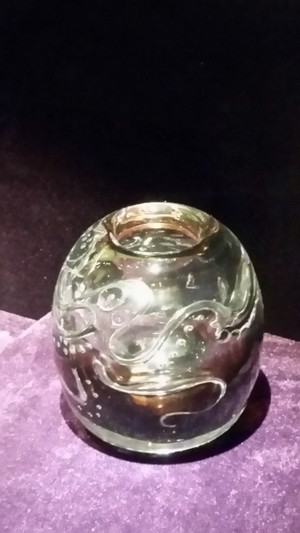}
\end{subfigure}\\
\begin{subfigure}[b]{\SrcImgW\linewidth}
\vspace{0.062cm}
\centering\includegraphics[trim=0 0 0 0, clip=true, height=\SrcImgH]{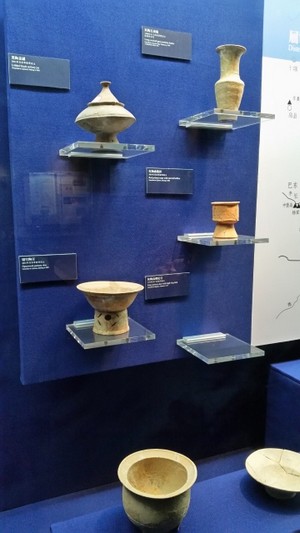}
\end{subfigure}
\begin{subfigure}[b]{\SrcImgW\linewidth}
\centering\includegraphics[trim=0 0 0 0, clip=true, height=\SrcImgH]{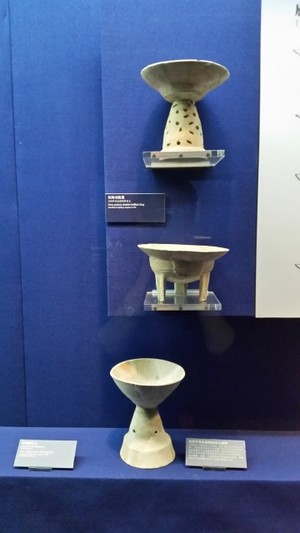}
\end{subfigure}
\begin{subfigure}[b]{\SrcImgW\linewidth}
\centering\includegraphics[trim=0 0 0 0, clip=true, height=\SrcImgH]{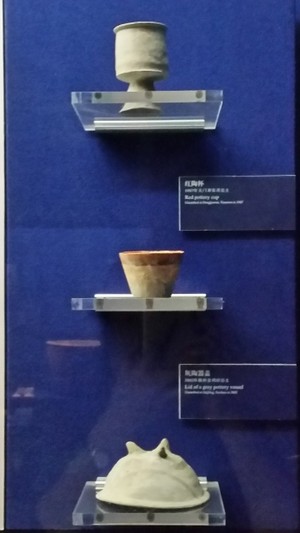}
\end{subfigure}
\begin{subfigure}[b]{\SrcImgW\linewidth}
\centering\includegraphics[trim=0 0 0 0, clip=true, height=\SrcImgH]{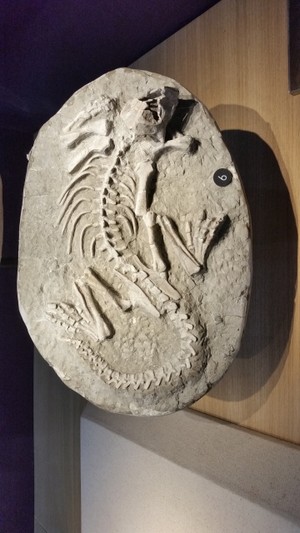}
\end{subfigure}
\begin{subfigure}[b]{\SrcImgW\linewidth}
\centering\includegraphics[trim=0 0 0 0, clip=true, height=\SrcImgH]{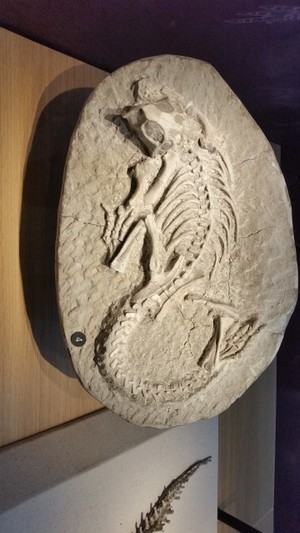}
\end{subfigure}
\begin{subfigure}[b]{\SrcImgW\linewidth}
\centering\includegraphics[trim=0 0 0 0, clip=true, height=\SrcImgH]{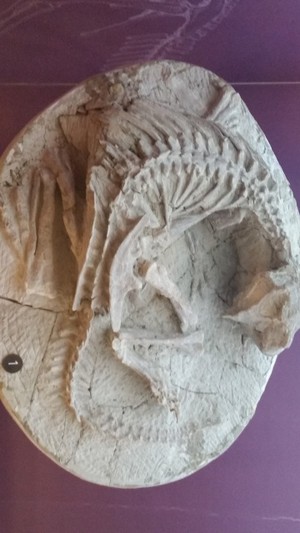}
\end{subfigure}
\begin{subfigure}[b]{\SrcImgW\linewidth}
\centering\includegraphics[trim=0 0 0 0, clip=true, height=\SrcImgH]{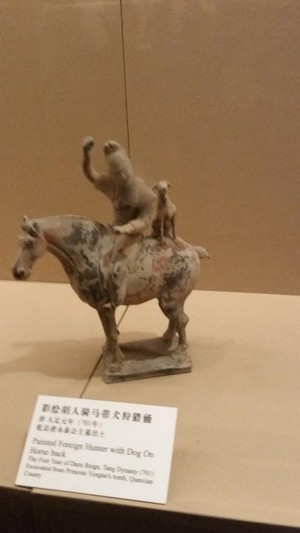}
\end{subfigure}
\begin{subfigure}[b]{\SrcImgW\linewidth}
\centering\includegraphics[trim=0 0 0 0, clip=true, height=\SrcImgH]{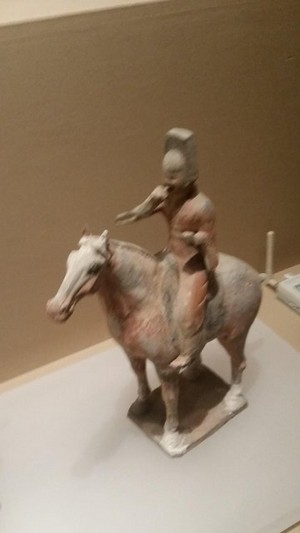}
\end{subfigure}
\begin{subfigure}[b]{\SrcImgW\linewidth}
\centering\includegraphics[trim=0 0 0 0, clip=true, height=\SrcImgH]{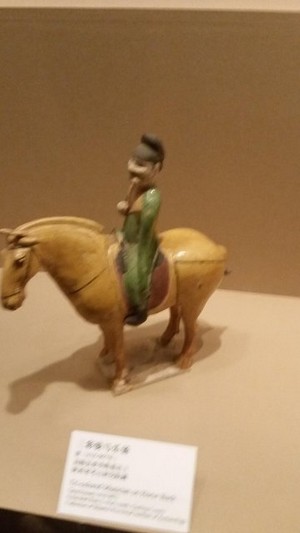}
\end{subfigure}
\begin{subfigure}[b]{\SrcImgW\linewidth}
\centering\includegraphics[trim=0 0 0 0, clip=true, height=\SrcImgH]{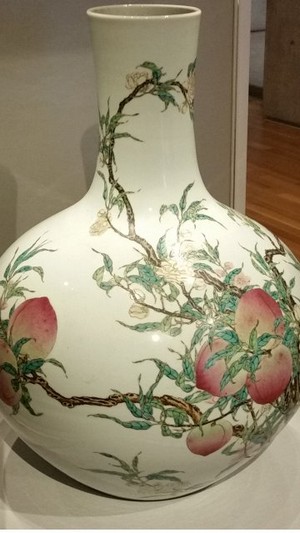}
\end{subfigure}
\begin{subfigure}[b]{\SrcImgW\linewidth}
\centering\includegraphics[trim=0 0 0 0, clip=true, height=\SrcImgH]{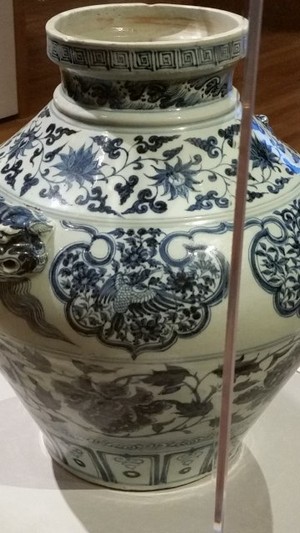}
\end{subfigure}
\begin{subfigure}[b]{\SrcImgW\linewidth}
\centering\includegraphics[trim=0 0 0 0, clip=true, height=\SrcImgH]{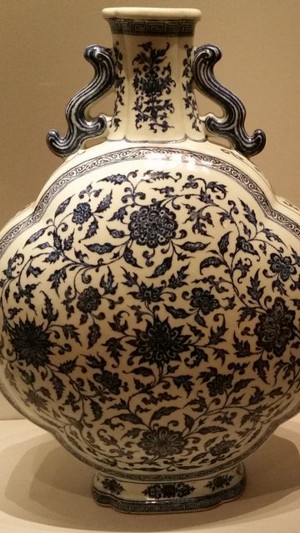}
\end{subfigure}
\begin{subfigure}[b]{\SrcImgW\linewidth}
\centering\includegraphics[trim=0 0 0 0, clip=true, height=\SrcImgH]{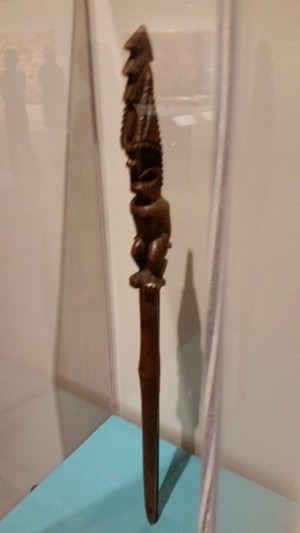}
\end{subfigure}
\begin{subfigure}[b]{\SrcImgW\linewidth}
\centering\includegraphics[trim=0 0 0 0, clip=true, height=\SrcImgH]{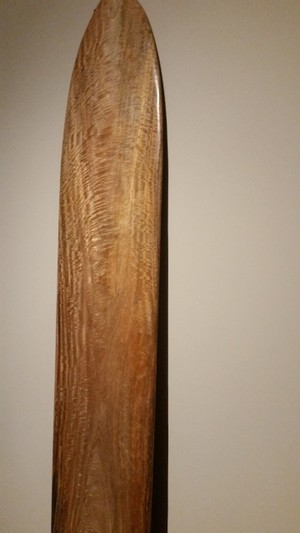}
\end{subfigure}
\begin{subfigure}[b]{\SrcImgW\linewidth}
\centering\includegraphics[trim=0 0 0 0, clip=true, height=\SrcImgH]{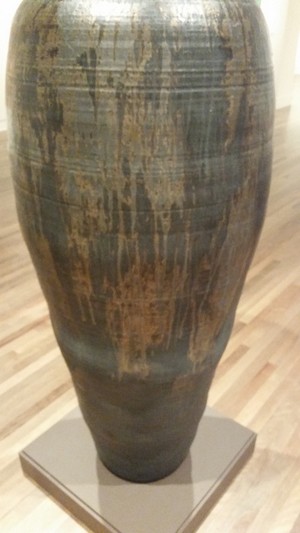}
\end{subfigure}

%
%
\caption{Examples of the source subsets of Open MIC. Top row includes Paintings ({\em Shn}), Clocks ({\em Shg}), Sculptures ({\em Scl}), Science Exhibits ({\em Sci}) and Glasswork ({\em Gls}). As 3 images per exhibit demonstrate, we covered different viewpoints and scales during capturing. Bottom row includes 3 different art pieces per exhibition such as Cultural Relics ({\em Rel}), Natural History Exhibits ({\em Nat}), Historical/Cultural Exhibits ({\em Shx}), Porcelain ({\em Clv}) and Indigenous Arts ({\em Hon}). Note the composite scenes of Relics, fine-grained nature of Natural History and Cultural Exhibits and non-planarity of exhibits.}\vspace{-0.45cm}
\label{fig:images_source}
\end{figure*}

\begin{proposition}
Let us choose $\!\mZ\!=\!\mX\!=[\mPhi\!,\mPhi^*\!]$ for pivots and source/target feature vectors, and kernel $k$ to be linear. Substitute these assumptions into Eq. \eqref{eq:nyst2}. As a result, we obtain $\mPi(\mX)\!=\!(\mZ^T\!\mZ)^{-0.5}\mZ^T\!\mX\!=\stkout{\mZ}\mX\!=\!(\mZ^T\!\mZ)^{0.5}\!=\!(\mX^T\!\mX)^{0.5}\!$ where $\mPi(\mX)$ is 
a projection of $\mX$ on itself that is isometric \eg, distances between column vectors of $(\mX^T\!\mX)^{0.5}$ correspond to distances of column vectors in $\mX$. Thus, $\mPi(\mX)$ is an isometric transformation w.r.t. distances in Table \ref{tab:non-euclid}, that is $d^2_g(\cov(\mPhi),\cov(\mPhi^*\!))\!=\!d^2_g(\cov(\mPi(\mPhi)),\cov(\mPi(\mPhi^*\!)))$.
\label{prop:proj}
\end{proposition}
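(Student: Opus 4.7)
My plan is to dispatch the three assertions of the statement in sequence: the closed form $\mPi(\mX)\!=\!(\mX^T\mX)^{1/2}$, the column-wise isometry of $\mPi$, and the equality $d_g^2(\cov(\mPhi),\cov(\mPhi^{*\!}))\!=\!d_g^2(\cov(\mPi(\mPhi)),\cov(\mPi(\mPhi^{*\!})))$ for each of the three distances of Table~\ref{tab:non-euclid}. Everything reduces to elementary linear algebra once $\mZ\!=\!\mX$ and the linear kernel are substituted into Eq.~\eqref{eq:nyst2}.

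First I would substitute. With $\mZ\!=\!\mX$ and $k(\vu,\vv)\!=\!\vu^T\vv$, both $\mK_{\mZ\mZ}$ and $\mK_{\mZ\mX}$ collapse to $\mX^T\mX$, so Eq.~\eqref{eq:nyst2} delivers $\mPi(\mX)\!=\!(\mX^T\mX)^{-1/2}(\mX^T\mX)\!=\!(\mX^T\mX)^{1/2}\!=\!\stkout{\mZ}\mX$ with $\stkout{\mZ}\!=\!(\mZ^T\mZ)^{-1/2}\mZ^T$. Forming the Gram matrix on both sides gives $\mPi(\mX)^T\mPi(\mX)\!=\!\mX^T\mX$, which already shows that every pairwise inner product, and hence every Euclidean distance, between columns is preserved exactly. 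Equivalently, $\stkout{\mZ}^T\stkout{\mZ}\!=\!\mZ(\mZ^T\mZ)^{-1}\mZ^T$ is the orthogonal projector $\mPP_{\mZ}$ onto $\spann(\mZ)\!\subseteq\!\mbr{d}$, and every column of $\mX$ lies in $\spann(\mZ)$ by construction, so $\stkout{\mZ}$ acts as an isometric linear map on the affine hull of the data.

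I would then lift this vector-level isometry to the scatter-matrix identity. Linearity of $\stkout{\mZ}$ makes centering commute with it, yielding $\cov(\mPi(\mPhi))\!=\!\stkout{\mZ}\cov(\mPhi)\stkout{\mZ}^T$ and its analogue for $\mPhi^{*\!}$. Writing $\mM\!=\!\cov(\mPhi)-\cov(\mPhi^{*\!})$ and noting that each summand is of the form $\mV\mV^T\!/N$ with columns of $\mV$ in $\spann(\mZ)$, the row and column spaces of $\mM$ lie in $\spann(\mZ)$ as well, so $\mPP_{\mZ}\mM\!=\!\mM\!=\!\mM\mPP_{\mZ}$. The Frobenius case then follows from one cyclic-trace line: $\fnorm{\stkout{\mZ}\mM\stkout{\mZ}^T}^2\!=\!\trace(\mM\,\mPP_{\mZ}\,\mM^T\,\mPP_{\mZ})\!=\!\trace(\mM\mM^T)\!=\!\fnorm{\mM}^2$. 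For AIRM and JBLD, after applying the regularization flagged following Table~\ref{tab:non-euclid}, I would split $\mbr{d}\!=\!\spann(\mZ)\oplus\spann(\mZ)^{\perp}$ and observe that $\stkout{\mZ}$ is orthonormal on the first summand and annihilates the second; hence the nontrivial generalized eigenvalues of $(\cov(\mPhi),\cov(\mPhi^{*\!}))$ coincide, up to a unitary change of basis, with those of $(\cov(\mPi(\mPhi)),\cov(\mPi(\mPhi^{*\!})))$, while the $d-(N+N^{*\!})$ extra eigenvalues contributed by the complement are all equal to $1$ and vanish under both the $\log^2(\cdot)$ of AIRM and the log-determinant cancellation of JBLD.

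The main obstacle is precisely this eigen-bookkeeping: before projection the regularized covariances live in $\spd{d}$ with $d\!=\!4096$, while after projection they live in $\spd{N+N^{*\!}}$, so the $d_g$ formulas nominally involve different numbers of generalized eigenvalues. Making the matching rigorous requires the basis change just described together with the observation that the extra eigenvalues are trivial (equal to $1$) and therefore invisible to the $\log$-based distances. Step~1 and the Frobenius calculation are routine; the care goes into verifying that the unused directions in $\mbr{d}$ contribute identically zero on both sides for AIRM and JBLD.
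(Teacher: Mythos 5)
Your proof is correct, and its backbone is the same as the paper's: with $\mZ\!=\!\mX$ and a linear kernel, $\stkout{\mZ}\!=\!(\mZ^T\!\mZ)^{-0.5}\mZ^T$ preserves the Gram matrix of the data, and the paper's ``stricter'' SVD computation $\stkout{\mZ}\!=\!\mV\mU^T$ is just the other face of your observation that $\stkout{\mZ}^T\!\stkout{\mZ}$ is the orthogonal projector onto $\spann(\mZ)$. Where you genuinely depart from the paper is in how the AIRM and JBLD cases are closed. The paper calls $\mV\mU^T$ a ``composite rotation'' and finishes in one line by invoking rotation invariance of all three distances in Table~\ref{tab:non-euclid}; but $\stkout{\mZ}$ is $(N\!+\!N^{*})\!\times\!d$ rather than square, so the two sides of the claimed identity are distances between covariance matrices of different sizes, and the bare appeal to rotation/affine invariance silently relies on the range inclusion $\myspan{\cov(\mPhi)},\myspan{\cov(\mPhi^{*})}\subseteq\spann(\mZ)$. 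You make exactly this explicit: the direct sum $\mbr{d}\!=\!\spann(\mZ)\oplus\spann(\mZ)^{\perp}$, the identification $\cov(\mPi(\mPhi))\!=\!\stkout{\mZ}\cov(\mPhi)\stkout{\mZ}^T$, and the bookkeeping showing that the $d\!-\!(N\!+\!N^{*})$ generalized eigenvalues contributed by the complement of the data span are all $1$ after regularization and hence contribute nothing to $\sum_i\log^2\lambda_i$ (AIRM) or to the log-determinant combination (JBLD). Your cyclic-trace computation for the Frobenius case likewise makes explicit the step $\mPP_{\mZ}\mM\!=\!\mM$ that the paper's analogous identity needs. In short, the two proofs share the same key fact, but yours is the more airtight on the non-Euclidean distances, at the cost of the eigenvalue bookkeeping; the paper's is shorter and conveys the same geometric intuition.
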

\begin{proof}
Firstly, we note that the following holds:
\begin{align}
&  \!\!\!\!\!\!\mK_{\mX\mX}\!=\!\mPi(\mX)^T\!\mPi(\mX)\!=\!(\mX^T\!\mX)^{0.5}(\mX^T\!\mX)^{0.5}\!\!\!\!\!\!=\!\mX^T\!\mX.\!\!\!\label{eq:nyst3}
\end{align}
Note that $\mPi(\mX)\!=\!\stkout{\mZ}\mX$ projects $\mX$ into a more compact subspace of size $d'\!\!=\!N\!+\!N^*\!$ if $d'\!\ll\!d$ which includes the spanning space for $\mX$ by construction as $\mZ\!=\mX$. Eq. \eqref{eq:nyst3} implies that $\mPi(\mX)$ performs at most rotation on $\mX$ as the dot-product (used to obtain entries of $\mK_{\mX\mX}$)  just like the Euclidean distance is rotation-invariant only \eg, has no affine invariance. As spectra of $(\mX^T\!\mX)^{0.5}$ and $\mX$ are equal, this implies $\mPi(\mX)$ performs no scaling, shear or inverse. Distances in Table \ref{tab:non-euclid} are all rotation-invariant, thus
$d^2_g(\cov(\mPhi),\cov(\mPhi^*\!))\!=\!d^2_g(\cov(\mPi(\mPhi)),\cov(\mPi(\mPhi^*\!)))$.

A stricter proof is to show that $\stkout{\mZ}$ performs a composite rotation $\mV\mU^T$. Let us use SVD of $\mZ$ equal $\mU\mLambda\mV^T$. Then:
\begin{align}
& \stkout{\mZ}\!=\!(\mZ^T\!\mZ)^{-0.5}\mZ^T\!=\!(\mV\mLambda\mU^T\mU\mLambda\mV^T)^{-0.5}\,\mV\mLambda\mU^T\\
& \qquad\qquad\qquad\qquad\!=\mV\mLambda^{-1}\mV^T\mV\mLambda\mU^T\!\!=\!\mV\mU^T\nonumber\\[-30pt]\nonumber
\end{align}
\vspace{-0.2cm}
\end{proof}
In practice, for each class $c\!\in\!\idx{C}$, we choose $\mX\!=\!\mZ\!=[\mPhi_c, \mPhi_c^*]$. 
Then, as $\stkout{\mZ}[\mPhi, \mPhi^*\!]\!=\!(\mX^T\!\mX)^{0.5}$, we have $\mPi(\mPhi)\!=\![\vy_1,\cdots,\vy_N]$ and $\mPi(\mPhi^*\!)\!=\![\vy_{N\!+\!1},\cdots,\vy_{N\!+\!N*\!}]$ where $\mY\!=\![\vy_{1},\cdots,\vy_{N\!+\!N*\!}]\!=\!(\mX^T\!\mX)^{0.5}\!$. With typical $N\!\approx\!30$ and $N^*\!\approx\!3$, we obtain covariances of side size $\!d'\!\approx\!33$ rather than $d\!=\!4096$.

\footnotetext[1]{\label{foot:chain_oper}For simplicity of notation, operator $\odot$ denotes the typical summation over multiplications in chain rules.}
\footnotetext[2]{\label{foot:complexity1}We assume that the eigenvalue decomposition of large matrices ($d\!=\!4096$) in CUDA BLAS is fast and efficient--which is not the case.}

\begin{proposition}
Typically, the inverse square root $(\mX^T\!\mX)^{-0.5}$ of $\stkout{\mZ}(\mX)$ can be only differentiated via the costly eigenvalue decomposition.
However, if $\!\mX\!=[\mPhi\!,\mPhi^*\!]$, $\stkout{\mZ}(\mX)\!=\!(\mX^T\!\mX)^{-0.5}\mX^T$  and $\mPi(\mX)\!=\!\stkout{\mZ}(\mX)\mX$ as in Prop. \ref{prop:proj}, and if we consider the chain rule we require:
\begin{align}
&\textstyle\frac{\partial d^2_g(\cov(\mPi(\mPhi)),\cov(\mPi(\mPhi^*\!)))}{\partial \cov(\mPi(\mPhi))}\odot\frac{\partial \cov(\mPi(\mPhi))}{\partial \mPi(\mPhi)}\odot\frac{\partial \mPi(\mPhi)}{\partial \mPhi},\text{{\color{red}\footnotemark[1]}}
\label{eq:chain_rule_proj}
\end{align}
then $\stkout{\mZ}(\mX)$ can be treated as a constant in differentiation:
\begin{align}
&\textstyle\frac{\partial\mPi(\mX)}{\partial X_{mn}}\!=\!\frac{\partial\stkout{\mZ}(\mX)\mX}{\partial X_{mn}}\!=\!\stkout{\mZ}(\mX)\frac{\partial\mX}{\partial X_{mn}}\!=\!\stkout{\mZ}(\mX)\mJ_{mn}.
\label{eq:nyst_diff}
\end{align}
\end{proposition}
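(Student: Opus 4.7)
The product rule applied to $\mPi(\mX)\!=\!\stkout{\mZ}(\mX)\mX$ gives two terms, $\frac{\partial \mPi(\mX)}{\partial X_{mn}}\!=\!\frac{\partial \stkout{\mZ}(\mX)}{\partial X_{mn}}\mX + \stkout{\mZ}(\mX)\mJ_{mn}$, and the first one requires differentiating the inverse square root $(\mX^T\!\mX)^{-0.5}$ through its eigendecomposition---exactly the expensive operation we want to avoid. My plan is to show that, when composed with the outer derivative of $d_g^2$ along the chain rule in Eq.~\eqref{eq:chain_rule_proj}, this first term vanishes, leaving only the cheap linear term $\stkout{\mZ}\mJ_{mn}$ claimed in Eq.~\eqref{eq:nyst_diff}.

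The key observation is already packaged in Proposition~\ref{prop:proj}: since $\stkout{\mZ}(\mX)\!=\!\mV\mU^T$ is (semi-)orthogonal and all distances $d_g$ in Table~\ref{tab:non-euclid} are rotation-invariant, we have $d_g^2(\cov(\stkout{\mZ}\mPhi),\cov(\stkout{\mZ}\mPhi^*\!))\!=\!d_g^2(\cov(\mPhi),\cov(\mPhi^*\!))$ for \emph{any} orthogonal $\stkout{\mZ}$. I therefore view the composite objective as a function of two independent variables, $(\mX,\stkout{\mZ})\!\mapsto\!d_g^2(\cov(\stkout{\mZ}\mPhi),\cov(\stkout{\mZ}\mPhi^*\!))$, which is constant along the orthogonal manifold in the $\stkout{\mZ}$ slot; hence its partial derivative in the $\stkout{\mZ}$ direction is zero.

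The steps I would carry out are: (i) write the total derivative of $d_g^2\!\circ\!\cov\!\circ\!\mPi$ w.r.t.\ $X_{mn}$ as the sum of a ``$\stkout{\mZ}$-variation'' piece (from the implicit dependence of $\stkout{\mZ}$ on $\mX$) and an ``explicit-$\mX$'' piece; (ii) verify that the $\stkout{\mZ}$-variation lies in the tangent space of the (semi-)orthogonal manifold by differentiating $\stkout{\mZ}\stkout{\mZ}^T\!=\!\mI$ to obtain $\tfrac{\partial\stkout{\mZ}}{\partial X_{mn}}\stkout{\mZ}^T\!+\!\stkout{\mZ}\bigl(\tfrac{\partial\stkout{\mZ}}{\partial X_{mn}}\bigr)^T\!=\!\vZeros$, so the perturbation $\tfrac{\partial \stkout{\mZ}}{\partial X_{mn}}\mX$ acts on $\mPi(\mX)$ as an infinitesimal rotation; (iii) invoke Proposition~\ref{prop:proj} to conclude that this infinitesimal rotation produces no first-order change in $d_g^2$; (iv) identify the remaining ``explicit-$\mX$'' piece, which is precisely $\stkout{\mZ}(\mX)\mJ_{mn}$ under the chain rule, yielding Eq.~\eqref{eq:nyst_diff}.

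The main obstacle is handling the smoothness of $\stkout{\mZ}(\mX)$ at configurations where $\mZ\!=\!\mX$ has repeated or vanishing singular values, since then the SVD factors $\mU,\mV$ are not unique. However, the polar factor $\stkout{\mZ}\!=\!\mV\mU^T$ itself is a smooth function of $\mX$ away from rank deficiency, and since we operate on continuous CNN features, such degenerate $\mX$ form a measure-zero set that does not obstruct back-propagation; the invariance argument therefore continues to apply on the open dense set where $\stkout{\mZ}$ is smooth, which is all that is needed for gradient-based training.
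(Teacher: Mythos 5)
Your proposal is correct and rests on exactly the same key fact as the paper's own proof: by Prop.~\ref{prop:proj}, $\stkout{\mZ}\!=\!\mV\mU^T$ is a (semi-)orthogonal map and every distance in Table~\ref{tab:non-euclid} is rotation-invariant, so the dependence of $\stkout{\mZ}$ on $\mX$ cannot contribute to the gradient of $d^2_g$. The paper phrases this globally---$d^2_g(\cov(\mPi(\mPhi)),\cov(\mPi(\mPhi^*\!)))\!=\!d^2_g(\cov(\mPhi),\cov(\mPhi^*\!))$ identically, so any valid choice of the rotation leaves the loss unchanged---whereas you make the same point infinitesimally by killing the product-rule term $\frac{\partial\stkout{\mZ}}{\partial X_{mn}}\mX$ as a tangent motion along the rotation orbit (using, implicitly, that the columns of $\mX$ lie in the row space of $\stkout{\mZ}$ since $\mZ\!=\!\mX$); the two arguments are essentially equivalent.
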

%
%
\begin{proof}
It follows from the rotation-invariance of the Euclidean, JBLD and AIRM distances. Let us write $\stkout{\mZ}(\mX)\!=\!\mR(\mX)\!=\!\mR$, where $\mR$ is a rotation matrix. Thus, we have: $d^2_g(\cov(\mPi(\mPhi)),\cov(\mPi(\mPhi^*\!)))\!=\!d^2_g(\cov(\mR\mPhi),\cov(\mR\mPhi^*\!))\!=\!d^2_g(\mR\cov(\mPhi)\mR^T\!,\mR\cov(\mPhi^*\!)\mR^T)$. Therefore, even if $\mR$ depends on $\mX$, the distance $d^2_g$ is unchanged by any choice of valid $\mR$ \ie, for the Frobenius norm we have:
$||\mR\cov\mR^T\!-\!\mR\cov^*\!\mR^T||_F^2\!=\!\trace\left(\mR\mA^T\!{\mR}^T\!{\mR}\mA{\mR^T}\right)\!=\!\trace\left(\mR^T\!\mR\mA^T\!\mA\right)\!=\!\trace\left(\mA^T\!\mA\right)\!=\!||\cov\!-\!\cov^*||_F^2$, where $\mA\!=\!\cov\!-\!\cov^*\!$.
Therefore, we obtain:
$\frac{\partial ||\mR\cov(\mPhi)\mR^T\!\!-\!\mR\cov(\mPhi^*\!)\mR^T\!||_F^2}{\partial \mR\cov(\mPhi)\mR^T}\odot\frac{\partial \mR\cov(\mPhi)\mR^T}{\partial \cov(\mPhi)}\odot\frac{\partial \cov(\mPhi)}{\partial \mPhi}\!=\!
\frac{\partial ||\cov(\mPhi)\!-\!\cov(\mPhi^*\!)\!||_F^2}{\partial \cov(\mPhi)}\odot\frac{\partial \cov(\mPhi)}{\partial \mPhi}
$ {\color{red}\footnotemark[1]}
which completes the proof.
\end{proof}

\vspace{0.05cm}
\noindent{\textbf{Complexity.}} The Frobenius norm between covariances plus their computation have combined complexity $\bigoh((d'\!\!+\!1)d^2)$, where $d'\!\!=\!N\!+\!N^*\!$. For non-Euclidean distances, we take into account the dominant cost of evaluating the square root of matrix and/or inversions by the eigenvalue decomposition, as well as the cost of building scatter matrices. Thus, we have $\bigoh((d'\!\!+\!1)d^2 + d^\omega)$, where constant $2\!<\!\omega\!<\!2.376$ concerns complexity of eigenvalue decomposition. Lastly, evaluating the Nystr\" om projections combined with building covariances and running a non-Euclidean distance enjoys $\bigoh({d'}^2d + (d'\!\!+\!1){d'}^2 + {d'}^\omega)\!=\!\bigoh({d'}^2d)$ complexity for $d\!\gg\!d'\!$.

For typical $d'\!\!=\!33$ and $d\!=\!4096$,  the non-Euclidean distances are $~1.7\!\times$ slower{\color{red}\footnotemark[2]} than the Frobenius norm. However, non-Eucldiean distances combined with our projections are $210\!\times$ and $124\!\times$ faster than naively evaluated non-Eucldiean distances and the Frobenius norm, resp. This cuts the time of each training from few days to 6--8 hours and makes the cost of our loss negligible compared to CNN fine-tuning.

\newcommand{\RE}{\color{blue!20!black!30!red}}

\ifdefined\arxiv
\newcommand{\SSS}{{\em S}}
\newcommand{\TTT}{{\em T}}
\newcommand{\SPT}{{\em S+T}}
\newcommand{\SOO}{{\em So}}
\newcommand{\JBL}{{\em JBLD}}
\newcommand{\JBD}{{\em JBLD}}
\newcommand{\AIR}{{\em AIRM}}
\else
\newcommand{\SSS}{{\em S}}
\newcommand{\TTT}{{\em T}}
\newcommand{\SPT}{{\footnotesize\em S+T}}
\newcommand{\SOO}{{\em So}}
\newcommand{\JBL}{{\footnotesize\em JBLD}}
\newcommand{\JBD}{{\em JBLD}}
\newcommand{\AIR}{{\footnotesize\em AIRM}}
\fi

\newcommand{\BL}{\color{black!40!green}}
\newcommand{\BU}[1]{\color{black!40!green}\textbf{#1}}
\newcommand{\BO}[1]{\textbf{#1}}
\newcommand{\TKN}[2]{{\footnotesize top-{#1}-{#2}}}
\newcommand{\TKO}{\footnotesize\pbox{3cm}{$\avg_k$\\top-$k$-$k$}}

\newcommand{\Shn}{\multirow{5}{*}{\rotatebox[origin=c]{90}{{\color{blue!20!black!30!red}{\em Shn}}}\kern-0.3em }}
\newcommand{\Clk}{\multirow{5}{*}{\rotatebox[origin=c]{90}{{\color{blue!20!black!30!red}\em Clk}}\kern-0.3em }}
\newcommand{\Scl}{\multirow{5}{*}{\rotatebox[origin=c]{90}{{\color{blue!20!black!30!red}\em Scl}}\kern-0.3em }}
\newcommand{\Sci}{\multirow{5}{*}{\rotatebox[origin=c]{90}{{\color{blue!20!black!30!red}\em Sci}}\kern-0.3em }}
\newcommand{\Gls}{\multirow{5}{*}{\rotatebox[origin=c]{90}{{\color{blue!20!black!30!red}\em Gls}}\kern-0.3em }}

\newcommand{\Rel}{\multirow{5}{*}{\rotatebox[origin=c]{90}{{\color{blue!20!black!30!red}\em Rel}}\kern-0.3em }}
\newcommand{\Nat}{\multirow{5}{*}{\rotatebox[origin=c]{90}{{\color{blue!20!black!30!red}\em Nat}}\kern-0.3em }}
\newcommand{\Shx}{\multirow{5}{*}{\rotatebox[origin=c]{90}{{\color{blue!20!black!30!red}\em Shx}}\kern-0.3em }}
\newcommand{\Clv}{\multirow{5}{*}{\rotatebox[origin=c]{90}{{\color{blue!20!black!30!red}\em Clv}}\kern-0.3em }}
\newcommand{\Hon}{\multirow{5}{*}{\rotatebox[origin=c]{90}{{\color{blue!20!black!30!red}\em Hon}}\kern-0.3em }}

\ifdefined\arxiv
\newcommand{\SrcImgWW}{0.104}
\newcommand{\SrcImgWWW}{0.058}
\newcommand{\SrcImgHH}{1.08cm}
\else
\newcommand{\SrcImgWW}{0.104}
\newcommand{\SrcImgWWW}{0.058}
\newcommand{\SrcImgHH}{1.36cm}
\fi

\begin{figure*}[b]
\centering
\begin{subfigure}[b]{\SrcImgWW\linewidth}
\centering\includegraphics[trim=0 0 0 0, clip=true, height=\SrcImgHH]{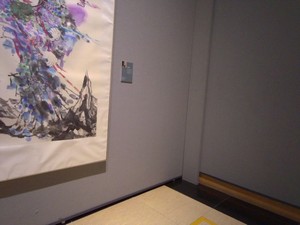}
\end{subfigure}
\begin{subfigure}[b]{\SrcImgWW\linewidth}
\centering\includegraphics[trim=0 0 0 0, clip=true, height=\SrcImgHH]{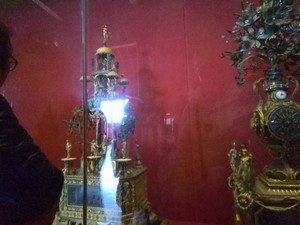}
\end{subfigure}
\begin{subfigure}[b]{\SrcImgWW\linewidth}
\centering\includegraphics[trim=0 0 0 0, clip=true, height=\SrcImgHH]{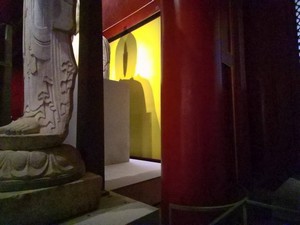}
\end{subfigure}
\begin{subfigure}[b]{\SrcImgWW\linewidth}
\centering\includegraphics[trim=0 0 0 0, clip=true, height=\SrcImgHH]{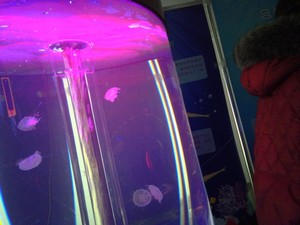}
\end{subfigure}
\begin{subfigure}[b]{\SrcImgWW\linewidth}
\centering\includegraphics[trim=0 0 0 0, clip=true, height=\SrcImgHH]{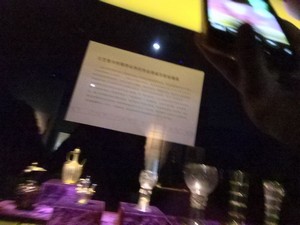}
\end{subfigure}
\begin{subfigure}[b]{\SrcImgWW\linewidth}
\centering\includegraphics[trim=0 0 0 0, clip=true, height=\SrcImgHH]{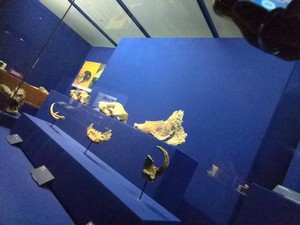}
\end{subfigure}
\begin{subfigure}[b]{\SrcImgWW\linewidth}
\centering\includegraphics[trim=0 0 0 0, clip=true, height=\SrcImgHH]{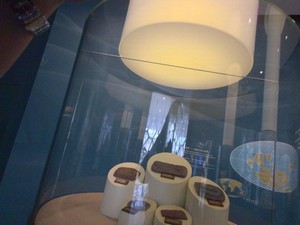}
\end{subfigure}
\begin{subfigure}[b]{\SrcImgWWW\linewidth}
\centering\includegraphics[trim=0 0 0 0, clip=true, height=\SrcImgHH]{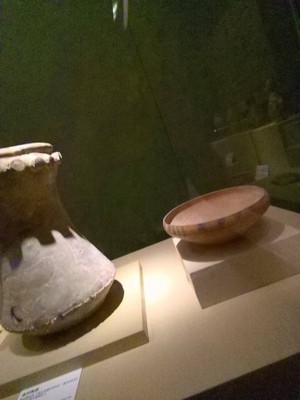}
\end{subfigure}
\begin{subfigure}[b]{\SrcImgWW\linewidth}
\centering\includegraphics[trim=0 0 0 0, clip=true, height=\SrcImgHH]{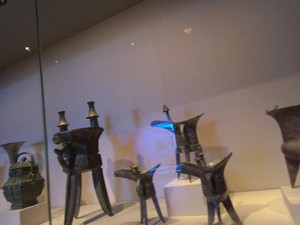}
\end{subfigure}
\begin{subfigure}[b]{\SrcImgWWW\linewidth}
\centering\includegraphics[trim=0 0 0 0, clip=true, height=\SrcImgHH]{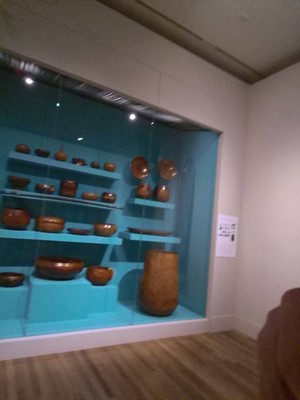}
\end{subfigure}
\\
\vspace{0.062cm}
\begin{subfigure}[b]{\SrcImgWW\linewidth}
\centering\includegraphics[trim=0 0 0 0, clip=true, height=\SrcImgHH]{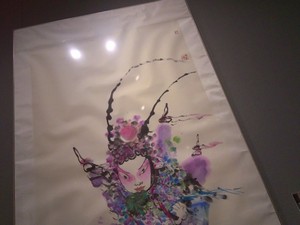}
\end{subfigure}
\begin{subfigure}[b]{\SrcImgWW\linewidth}
\centering\includegraphics[trim=0 0 0 0, clip=true, height=\SrcImgHH]{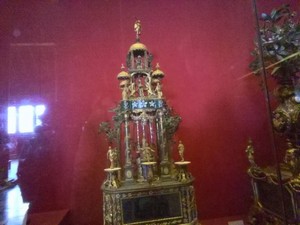}
\end{subfigure}
\begin{subfigure}[b]{\SrcImgWW\linewidth}
\centering\includegraphics[trim=0 0 0 0, clip=true, height=\SrcImgHH]{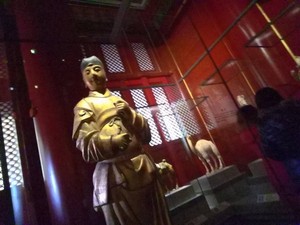}
\end{subfigure}
\begin{subfigure}[b]{\SrcImgWW\linewidth}
\centering\includegraphics[trim=0 0 0 0, clip=true, height=\SrcImgHH]{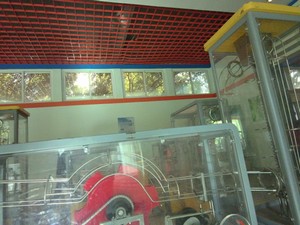}
\end{subfigure}
\begin{subfigure}[b]{\SrcImgWW\linewidth}
\centering\includegraphics[trim=0 0 0 0, clip=true, height=\SrcImgHH]{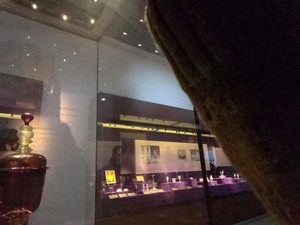}
\end{subfigure}
\begin{subfigure}[b]{\SrcImgWW\linewidth}
\centering\includegraphics[trim=0 0 0 0, clip=true, height=\SrcImgHH]{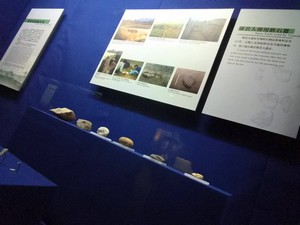}
\end{subfigure}
\begin{subfigure}[b]{\SrcImgWW\linewidth}
\centering\includegraphics[trim=0 0 0 0, clip=true, height=\SrcImgHH]{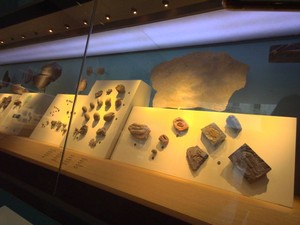}
\end{subfigure}
\begin{subfigure}[b]{\SrcImgWWW\linewidth}
\centering\includegraphics[trim=0 0 0 0, clip=true, height=\SrcImgHH]{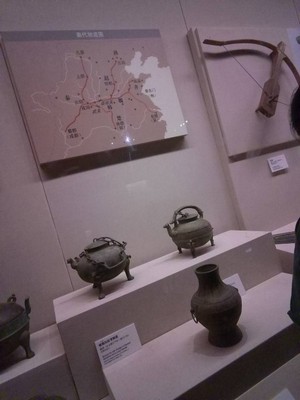}
\end{subfigure}
\begin{subfigure}[b]{\SrcImgWW\linewidth}
\centering\includegraphics[trim=0 0 0 0, clip=true, height=\SrcImgHH]{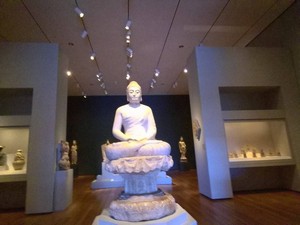}
\end{subfigure}
\begin{subfigure}[b]{\SrcImgWWW\linewidth}
\centering\includegraphics[trim=0 0 0 0, clip=true, height=\SrcImgHH]{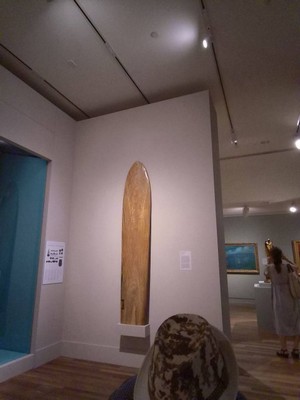}
\end{subfigure}
\\
\vspace{0.062cm}
\begin{subfigure}[b]{\SrcImgWW\linewidth}
\centering\includegraphics[trim=0 0 0 0, clip=true, height=\SrcImgHH]{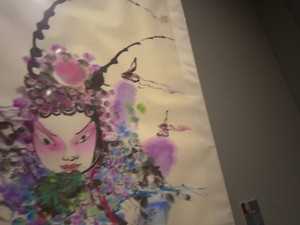}
\end{subfigure}
\begin{subfigure}[b]{\SrcImgWW\linewidth}
\centering\includegraphics[trim=0 0 0 0, clip=true, height=\SrcImgHH]{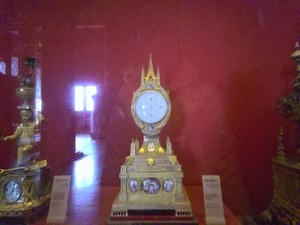}
\end{subfigure}
\begin{subfigure}[b]{\SrcImgWW\linewidth}
\centering\includegraphics[trim=0 0 0 0, clip=true, height=\SrcImgHH]{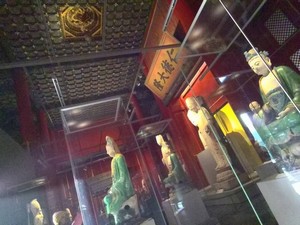}
\end{subfigure}
\begin{subfigure}[b]{\SrcImgWW\linewidth}
\centering\includegraphics[trim=0 0 0 0, clip=true, height=\SrcImgHH]{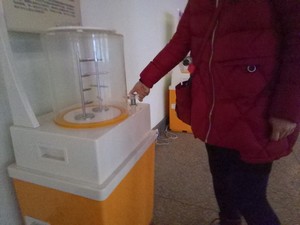}
\end{subfigure}
\begin{subfigure}[b]{\SrcImgWW\linewidth}
\centering\includegraphics[trim=0 0 0 0, clip=true, height=\SrcImgHH]{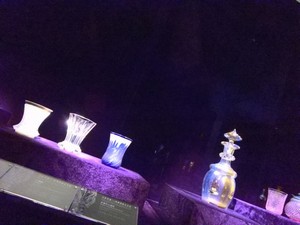}
\end{subfigure}
\begin{subfigure}[b]{\SrcImgWW\linewidth}
\centering\includegraphics[trim=0 0 0 0, clip=true, height=\SrcImgHH]{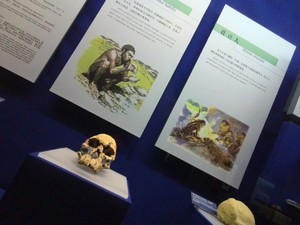}
\end{subfigure}
\begin{subfigure}[b]{\SrcImgWW\linewidth}
\centering\includegraphics[trim=0 0 0 0, clip=true, height=\SrcImgHH]{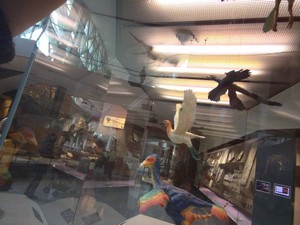}
\end{subfigure}
\begin{subfigure}[b]{\SrcImgWWW\linewidth}
\centering\includegraphics[trim=0 0 0 0, clip=true, height=\SrcImgHH]{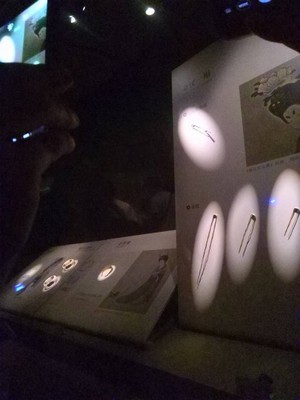}
\end{subfigure}
\begin{subfigure}[b]{\SrcImgWW\linewidth}
\centering\includegraphics[trim=0 0 0 0, clip=true, height=\SrcImgHH]{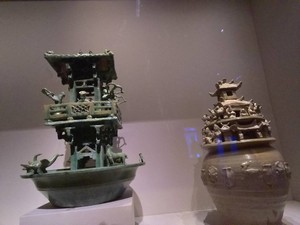}
\end{subfigure}
\begin{subfigure}[b]{\SrcImgWWW\linewidth}
\centering\includegraphics[trim=0 0 0 0, clip=true, height=\SrcImgHH]{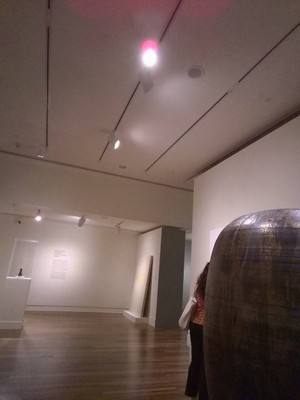}
\end{subfigure}
%
\caption{Examples of the target subsets of Open MIC. From left to right, each column illustrates Paintings ({\em Shn}), Clocks ({\em Shg}), Sculptures ({\em Scl}), Science Exhibits ({\em Sci}) and Glasswork ({\em Gls}), Cultural Relics ({\em Rel}), Natural History Exhibits ({\em Nat}), Historical/Cultural Exhibits ({\em Shx}), Porcelain ({\em Clv}) and Indigenous Arts ({\em Hon}). Note the variety of photometric and geometric distortions due to the use of wearable cameras.
}\vspace{-0.45cm}
\label{fig:images_target}
\end{figure*}

\section{Experiments}
\label{sec:expts}

In this section, we explain our CNN setup and give more details about our Open MIC and present our evaluations.

\vspace{0.05cm}
\noindent{\textbf{Setting.}}  At the training and testing time, we use the setting shown in Figures \ref{fig:cnn1} and \ref{fig:cnn2}, respectively. The images in our dataset are portrait or landscape oriented. Therefore, we extract 3 square patches per image that cover its entire region. For training, these patches serve as training data points. For testing, we average over 3 predictions from a group of patches to label image. 
We briefly compare the VGG16 \cite{simonyan_vgg} and GoogLeNet networks \cite{google_net} as well as the Eucldiean, JBLD and AIRM distances on subsets of the Office and Open MIC dataset. As demonstrated in Table \ref{tab:baseline_check}, VGG16 and GoogLeNet yield similar scores while JBLD and AIRM beat the Euclidean distance. Thus, we employ the VGG16 model and the JBLD distance in what follows.

\vspace{0.05cm}
\noindent{\textbf{Parameters.}} The networks are pre-trained on the ImageNet dataset~\cite{ILSVRC15} for the best results. We set non-zero learning rates on the fully-connected and the last two convolutional layers of the two streams. Subsequently, fine-tuning on the source and target data takes between 30--100K iterations. We set $\tau$ to the average value of the $\ell_2$ norm of {\em fc} feature vectors sampled on ImageNet and 
the hyperplane proximity $\eta\!=\!1$. Inverse in $\stkout{\mZ}(\mX)\!=\!(\mX^T\mX)^{-0.5}\mX^T$ and matrices $\cov$ and $\cov^*\!$ are regularized with a small constant 1e-6 on diagonals. Lastly, we set $\sigma_1$ and $\sigma_2$ between 0.005--1 
to perform cross-validation.

\vspace{0.05cm}
\noindent{\textbf{Office.}}  
This dataset contains three domains: Amazon, DSLR and Webcam. The Amazon and Webcam domains contain 2817 and 795 images. For brevity, we first test our pipeline on the Amazon-Webcam domain shift (\dsAW) to ensure that we match results in the literature.

\noindent{\textbf{Open MIC.}}
The proposed dataset contains 10 distinct source-target subsets of images from 10 different kinds of museum exhibition spaces which are illustrated in Figures \ref{fig:images_source} and \ref{fig:images_target}, respectively. They include Paintings from Shenzhen Museum ({\em Shn}), the Clock and Watch Gallery ({\em Clk}) and the Indian and Chinese Sculptures ({\em Scl}) from the Palace Museum, the Xiangyang Science Museum ({\em Sci}), the European Glass Art ({\em Gls}) and the Collection of Cultural Relics ({\em Rel}) from the Hubei Provincial Museum, the Nature, Animals and Plants in Ancient Times ({\em Nat}) from Shanghai Natural History Museum, the Comprehensive Historical and Cultural Exhibits from Shaanxi History Museum ({\em Shx}), the Sculptures, Pottery and Bronze Figurines from the Cleveland Museum of Arts ({\em Clv}), and Indigenous Arts from Honolulu Museum Of Arts ({\em Hon}).

\begin{table}[t]
\vspace{-0.3cm}
\setlength{\tabcolsep}{0.15em}
\centering
%
\renewcommand{\arraystretch}{0.8}
{
\centering
\begin{tabular}{c|c c c c c c c c c c | c}
				& {\RE\em Shn}   & {\RE\em Clk}   & {\RE\em Scl}   & {\RE\em Sci}   & {\RE\em Gls}   & {\RE\em Rel} & {\RE\em Nat} & {\RE\em Shx} & {\RE\em Clv} & {\RE\em Hon} & Total \\
\hline
\kern-0.4em{\em Inst.}\kern-0.1em & 79 & 113 & 41 & 37 & 98 & 100 & 111 & 166 & 81 & 40 & 866\\
\kern-0.4em{\em Src+}\kern-0.1em & 566 & 413 & 225 & 637 & 601 & 775 & 763 & 2928 & 531 & 1121 & 8560\\
\kern-0.4em{\em Src.}\kern-0.1em & 417 & 650 & 160 & 391 & 575 & 587 & 695 & 2697 & 503 & 970 & 7645\\
\kern-0.4em{\em Tgt+}\kern-0.1em & 515 & 323 & 130 & 1692 & 964 & 1229 & 868 & 776 & 682 & 417 & 7596 \\
\kern-0.4em{\em Tgt.}\kern-0.1em & 404 & 305 & 112 & 1342 & 863 & 863 & 668 & 546 & 625 & 364 & 6092
\end{tabular}
}
\caption{Unique exhibit instances ({\em Inst.}) and numbers of images of Open MIC in the source ({\em Src.}) and target ({\em Tgt.}) subsets including their backgrounds ({\em Src+}) and ({\em Tgt+}).
\label{tab:museum_stats}
\vspace{-0.3cm}
}
\end{table}

For the target data, we annotated each image with labels of art pieces visible in it. The wearable cameras were set to capture an image every 10s and they operated {\em in-the-wild}, \eg, volunteers had no control over shutter, focus, centering, \etc. Therefore, the collected target subsets exhibit many realistic challenges, \eg, sensor noises, motion blur, occlusions, background clutter, varying viewpoints, scale changes, rotations, glares, transparency, non-planar surfaces, clipping, multiple exhibits, active light, color inconstancy, very large or small exhibits, to name but a few phenomena visible in Figure \ref{fig:images_target}. The numbers and statistics regarding the Open MIC dataset are given in  Table \ref{tab:museum_stats}. Every subset contains 37--166 exhibits to identify and 5 train, val., and test splits. In total, our dataset contains 866 unique exhibit labels, 8560 source (7645 exhibits and 915 backgrounds) and 7596 target (6092 exhibits and 1504 backgrounds including a few of unidentified exhibits) images.

\vspace{0.05cm}
\noindent{\textbf{Baselines.}}
To demonstrate the intrinsic difficulty of the Open MIC dataset, we provide the community with baseline accuracies obtained from (i) fine-tuning CNNs on the source subsets ({\em S}) and testing on the randomly chosen target splits, (ii) fine tuning on target only ({\em T}) and evaluating on remaining disjoint target splits, (iii) fine-tuning on the source+target ({\em S+T}) and evaluating on remaining disjoint target splits, (iv) training state-of-the-art domain adaptation So-HoT algorithm \cite{me_domain} equipped by us with non-Euclidean distances \cite{anoop_logdet,PEN06,bhatia_pdm} to enable robust end-to-end learning. 

We include evaluation protocols: (i) training/eval. per exhibition subset, (ii) training/testing on the combined set with all 866 identity labels, (iii) 
testing w.r.t. scene factors annotated by us and detailed in Section \ref{sec:open_mic} (Challenge III).

\ifdefined\arxiv
\newcommand{\SrcTabAW}{0.25}
\newcommand{\SrcTabBW}{0.29}
\else
\newcommand{\SrcTabAW}{0.21}
\newcommand{\SrcTabBW}{0.25}
\fi

\begin{table}[t]
\vspace{-0.3cm}
\setlength{\tabcolsep}{0.12em}
\centering
%
\parbox{\SrcTabAW\textwidth}{
\renewcommand{\arraystretch}{0.80}
{
\centering
\begin{tabular}{c|c c}
	& \multirow{2}{*}{VGG16} & GoogLe\\
	&   & Net\\
\hline
%
\kern-0.3em{\em S+T} & 88.66 & 88.92\\
\kern-0.3em{\em So} & 89.45 & 89.70\\
\kern-0.3em{\em JBLD} & \textbf{90.80} & \textbf{91.33}\\
\kern-0.3em{\em AIRM} & 90.72 & 91.20
\end{tabular}
}
}
%
\parbox{\SrcTabBW\textwidth}{
\renewcommand{\arraystretch}{0.70}
{
\setlength{\tabcolsep}{0.1em}
\centering
\begin{tabular}{c c | c }
\hline
\kern-0.6em DLID & \cite{chopra_icml_workshop} & 51.9\\
\kern-0.6em DeCAF\textsubscript{6} S+T & \cite{donahue_decaf} & 80.7\\
\kern-0.6em DaNN & \cite{dann_com} & 53.6\\
\kern-0.6em Source CNN & \kern-0.0em\cite{tzeng_transfer}\kern-0.0em & 56.5\\
\kern-0.6em Target CNN & \kern-0.0em\cite{tzeng_transfer}\kern-0.0em & 80.5\\
\kern-0.6em Source+Target CNN\kern-0.6em  & \kern+0.0em\cite{tzeng_transfer}\kern-0.0em & 82.5\\
\kern-0.6em Dom. Conf.+Soft Labs.\kern-0.2em & \kern+0.0em\cite{tzeng_transfer}\kern-0.0em	& 82.7\\
\hline
\end{tabular}
}
}
\caption{The Office dataset (\dsAD~domain shift). (Left) Results on the VGG16 and GoogLeNet streams for the baseline fine-tuning on the combined source+target domains ({\em S+T}) and second-order ({\em So}) Euclidean-based method \cite{me_domain} are compared to our JBLD/AIRM dist. (Right) Comparisons to the state of the art.
\label{tab:baseline_check}}
\ifdefined\arxiv\vspace{-0.1cm}\else\vspace{-0.3cm}\fi
\end{table}


\subsection{Comparison to the State of the Art}
\label{sec:valid_setup}

Firstly, we validate that our reference method performs on the par or better than the state-of-the-art approaches. Table \ref{tab:baseline_check} shows that the JBLD and AIRM distances outperform the Euclidean-based So-HoT method ({\em So}) \cite{me_domain} by $\sim\!1.6\%$ and other recent approaches \eg, \cite{tzeng_transfer} by $\sim\!8.6\%$ accuracy. We also observe that GoogLeNet outperforms the VGG16-based model by $\sim\!0.5\%$. Having validated our model, we opt to evaluate our proposed Open MIC dataset on VGG16 streams for consistency with the So-HoT model \cite{me_domain}.

\begin{table}[t]
\ifdefined\arxiv\vspace{-0.1cm}\else\vspace{-0.3cm}\fi
\setlength{\tabcolsep}{0.14em}
\centering
%
\renewcommand{\arraystretch}{0.8}
{
\centering
\begin{tabular}{c|c c c c c | c c | c c | c |}
				                         & sp1   & sp2   & sp3   & sp4   & sp5   & top-1 & \TKN{1}{5} & top-5 & \TKN{5}{5} & \TKO\\
\hline
\kern-0.4em{\em S}\kern-0.1em    & 33.9 & 34.2 & 34.8 & 34.2 & 33.8 & 34.2 & 36.0 & 49.2 & 53.7 & 46.0 \\
\kern-0.4em{\em T}\kern-0.1em    & 56.9 & 55.9 & 58.7 & 56.0 & 55.2 & 56.5 & 64.1 & 76.5 & 80.6 & 72.5 \\
\kern-0.4em{\em S+T}\kern-0.1em  & 56.4 & 55.2 & 57.1 & 56.3 & 54.4 & 55.9 & 62.5 & 75.8 & 79.2 & 71.6 \\
\kern-0.4em{\em So} \kern-0.1em  & 64.2 & 62.4 & 65.0 & 62.7 & 60.0 & 62.8 & 70.4 & 84.0 & 88.5 & 79.5 \\
\kern-0.4em\BL\JBL\kern-0.1em    & \BU{65.7} & \BU{63.8} & \BU{65.7} & \BU{63.7} & \BU{62.0} & \BU{64.2} & \BU{72.0} & \BU{85.7} & \BU{88.6} & \BU{80.8} \\
\end{tabular}
}
\caption{Challenge II. Open MIC performance on the combined set for data 5 splits. Baselines ({\em S}), ({\em T}) and ({\em S+T}) are given as well as second-order ({\em So}) method \cite{me_domain} and our JBLD approach.
\label{tab:museum_all}
}
\vspace{-0.3cm}
\end{table}

\begin{table*}[!b]
\vspace{-0.1cm}
\setlength{\tabcolsep}{0.12em}
\renewcommand{\arraystretch}{0.8}
{
\ifdefined\arxiv\fontsize{7.25}{9}\selectfont\else\fi
\centering
\begin{tabular}{c|c c c c c | c c c c c c c | c c c c c | c c c c c | c c c c c | }
			            &      & \SSS & \TTT & \SPT & \BL\JBL &      & \SSS & \TTT & \SPT & \SOO & \BL\JBL & \AIR &      & \SSS & \TTT & \SPT & \BL\JBL &      & \SSS & \TTT & \SPT & \BL\JBL &      & \SSS & \TTT & \SPT & \BL\JBL \\
\hline
\kern-0.3em sp1   & \Shn & 45.3 & 45.3 & 59.0 & \BU{60.0} & \Clk & 55.8 & 51.9 & 55.8 & 55.8 & \BU{57.7} & 57.2 & \Scl & 56.5 & 60.9 & 65.2 & \BU{65.2} & \Sci & 59.3 & 58.9 & 65.6 & \BU{65.8} & \Gls & 64.1 & 67.1 & 62.8 & \BU{70.3} \\
\kern-0.3em sp2   &      & 48.4 & 52.6 & 53.7 & \BU{62.1} &      & 55.4 & 44.6 & 50.0 & 58.9 & \BU{58.9} & 58.9 &      & 44.4 & 50.0 & 44.4 & \BU{50.0} &      & 56.9 & 57.2 & 67.1 & \BU{69.1} &      & 59.9 & 61.9 & 59.2 & \BU{63.9} \\
\kern-0.3em sp3   &      & 46.1 & 52.7 & 60.4 & \BU{64.8} &      & 58.9 & 58.9 & 67.9 & 69.6 & \BU{71.4} & 71.4 &      & 55.6 & 38.9 & 44.4 & \BU{44.4} &      & 69.9 & 62.0 & 65.7 & \BU{68.2} &      & 65.9 & 69.3 & 64.9 & \BU{69.6} \\
\kern-0.3em sp4   &      & 49.5 & 50.5 & 54.8 & \BU{64.5} &      & 51.9 & 48.1 & 46.1 & 53.8 & \BU{57.7} & 57.7 &      & 55.0 & 55.0 & 55.0 & \BU{50.0} &      & 58.1 & 59.2 & 64.2 & \BU{66.3} &      & 62.3 & 67.0 & 61.6 & \BU{68.7} \\
\kern-0.3em sp5   &      & 49.5 & 57.0 & 63.4 & \BU{69.9} &      & 62.5 & 41.7 & 60.4 & 58.3 & \BU{60.4} & 60.4 &      & 56.2 & 56.2 & 62.5 & \BU{62.5} &      & 57.3 & 53.3 & 61.5 & \BU{64.5} &      & 60.1 & 64.5 & 59.0 & \BU{65.2} \\
\hline
\kern-0.3em top-1 &      & 47.7 & 51.6 & 58.3 & \BU{64.3} &    & 56.9 & 49.1 & 56.0 & 59.3 & \BU{61.2} & 61.1 &    & 53.5 & 52.2 & 54.3 & \BU{54.4} &    & 58.5 & 58.1 & 64.9 & \BU{66.8} &    & 62.5 & 65.9 & 61.6 &\BU{67.5}\\
\kern-0.3em\TKN{1}{5}&   & 48.2 & 54.2 & 60.2 & \BU{66.4} &    & 58.9 & 56.3 & 60.3 & 66.2 & \BU{68.9} & 68.9 &    & 54.7 & 55.4 & 57.3 & \BU{58.4} &    & 60.2 & 61.7 & 67.8 & \BU{70.2} &    & 77.3 & 84.4 & 76.7 &\BU{84.9}\\
\hline
\kern-0.3em top-5 &      & 64.5 & 68.8 & 76.9 & \BU{81.6} &    & 76.7 & 63.8 & 78.2 &\BO{87.5}&\BL{86.9}&87.2 &    & 67.4 & 66.6 & 70.0 & \BU{70.0} &    & 83.3 & 82.7 & 86.0 & \BU{88.6} &    & 85.2 & 89.4 & 83.1 &\BU{89.3}\\
\kern-0.3em\TKN{5}{5}&   & 66.0 & 73.3 & 79.5 & \BU{84.2} &    & 77.8 & 75.0 & 82.7 &\BO{91.6}&\BL{91.0}&91.4 &    & 69.4 & 69.8 & 71.1 & \BU{72.0} &    & 85.6 & 86.3 & 89.4 & \BU{91.3} &    & 87.3 & 95.0 & 89.7 &\BU{93.4}\\
\hline
\kern-0.3em\TKO   &      & 59.0 & 63.4 & 71.0 & \BU{76.6} &    & 69.4 & 65.6 & 73.6 &\BO{81.5}&\BL{81.2}&81.4 &    & 63.7 & 62.5 & 65.1 & \BU{65.1} &    & 75.3 & 76.0 & 80.7 & \BU{82.5} &    & 78.4 & 86.2 & 80.7 &\BU{86.2}\\
\end{tabular}\\
\vspace{0.062cm}
\ifdefined\arxiv\else\hspace{0.015cm}\fi
\begin{tabular}{c|c c c c c | c c c c c c c | c c c c c | c c c c c | c c c c c |}
			            &      & \SSS & \TTT & \SPT & \BL\JBL &      & \SSS & \TTT & \SPT & \SOO & \BL\JBL & \AIR &      & \SSS & \TTT & \SPT & \BL\JBL &      & \SSS & \TTT & \SPT & \BL\JBL &      & \SSS & \TTT & \SPT & \BL\JBL \\
\hline
\kern-0.3em sp1   & \Rel & 62.0 & 65.0 & 63.3 & \BU{66.3} & \Nat & 38.0 & 56.2 & 52.6 & 58.8 & \BU{58.8} & 58.5 & \Shx & 33.3 & 43.2 & 31.5 & \BU{58.6} & \Clv & 47.4 & 65.8 & 66.2 & \BU{71.4} & \Hon & 65.6 & 71.1 & 70.3 & \BU{75.8} \\
\kern-0.3em sp2   &      & 60.9 & 65.7 & 63.0 & \BU{68.0} &      & 39.9 & 52.5 & 52.5 & 59.6 & \BU{59.6} & 59.6 &      & 31.8 & 39.8 & 27.4 & \BU{47.8} &      & 47.0 & 70.2 & 65.1 & \BU{72.2} &      & 63.9 & 67.2 & 70.5 & \BU{74.6} \\
\kern-0.3em sp3   &      & 64.1 & 70.4 & 67.4 & \BU{70.7} &      & 43.7 & 56.2 & 59.4 & 59.9 & \BU{59.9} & 59.9 &      & 25.7 & 47.7 & 31.2 & \BU{47.7} &      & 49.7 & 64.1 & 61.5 & \BU{67.7} &      & 68.5 & 70.2 & 71.8 & \BU{79.0} \\
\kern-0.3em sp4   &      & 61.0 & 68.5 & 62.8 & \BU{67.1} &      & 41.8 & 59.8 & 62.0 & 66.3 & \BU{67.9} & 67.4 &      & 33.0 & 38.8 & 26.2 & \BU{44.7} &      & 48.3 & 63.0 & 64.0 & \BU{68.5} &      & 67.8 & 63.6 & 79.3 & \BU{76.9} \\
\kern-0.3em sp5   &      & 55.4 & 61.0 & 59.3 & \BU{62.6} &      & 44.6 & 62.0 & 63.0 & 66.8 & \BU{67.4} & 66.8 &      & 25.7 & 35.8 & 28.4 & \BU{44.0} &      & 42.3 & 62.8 & 54.1 & \BU{65.8} &      & 67.5 & 65.8 & 75.0 & \BU{80.0} \\
\hline
\kern-0.3em top-1 &      & 60.7 & 66.1 & 63.2 & \BU{67.0} &    & 41.6 & 57.3 & 57.9 & 62.2 & \BU{62.7} & 62.5 &    & 29.9 & 41.1 & 29.0 & \BU{48.5} &    & 47.0 & 65.2 & 62.2 & \BU{69.1} &    & 66.7 & 67.6 & 73.4 &\BU{77.3}\\
\kern-0.3em\TKN{1}{5}&   & 70.1 & 76.8 & 73.2 & \BU{79.5} &    & 43.5 & 62.8 & 61.9 & 67.3 & \BU{67.7} & 67.5 &    & 31.5 & 47.7 & 31.9 & \BU{56.3} &    & 50.8 & 69.5 & 66.6 & \BU{73.9} &    & 70.2 & 70.3 & 76.3 &\BU{79.7}\\
\hline
\kern-0.3em top-5 &      & 82.0 & 87.1 & 85.8 & \BU{90.3} &    & 60.6 & 79.3 & 75.5 &\BO{84.6}&\BL{84.3}& 84.3 &   & 51.6 & 62.5 & 51.2 & \BU{75.0} &    & 65.3 & 84.3 & 79.9 & \BU{87.7} &    & 82.1 & 85.2 & 88.3 &\BU{90.0}\\
\kern-0.3em\TKN{5}{5}&   & 86.3 & 90.0 & 89.4 & \BU{93.7} &    & 65.3 & 82.8 & 80.1 &\BO{87.5}&\BL{87.0}& 86.9 &   & 54.9 & 67.3 & 54.8 & \BU{77.6} &    & 70.5 & 89.2 & 84.4 & \BU{91.0} &    & 88.1 & 88.8 & 91.7 &\BU{92.7}\\
\hline
\kern-0.3em\TKO   &      & 77.4 & 82.8 & 80.5 & \BU{85.2} &    & 55.7 & 74.0 & 72.4 & 79.5 & \BU{79.6} & 79.4 &    & 45.1 & 57.1 & 44.5 & \BU{66.8} &    & 61.5 & 80.6 & 76.5 & \BU{83.5} &    & 79.7 & 81.0 & 84.5 &\BU{86.7}\\
\end{tabular}
}
\caption{Challenge I. Open MIC performance on the 10 subsets for data 5 splits. Baselines ({\em S}), ({\em T}) and ({\em S+T}) are given as well as our JBLD approach. We report top-1, top-1-5, top-5-1, top-5-5 accuracies and the combined scores $\avg_k\text{top-}k\text{-}k$. See Section\ref{sec:open_mic} for details. 
\label{tab:challengeI}}
\vspace{-0.3cm}
\end{table*}

\subsection{Open MIC Challenge}
\label{sec:open_mic}

In what follows, we detail our challenges on the Open MIC dataset and present our experimental results.

\vspace{0.05cm}
\noindent{\textbf{Challenge I.}} For this challenge, we run our supervised domain adaptation algorithm combined with the JBLD distance per subset. We prepare 5 training, validation and testing splits. For the source data, we use all available samples per class. 
For the target data, we use $~$3 samples per class for training and validation, respectively, and the rest for testing. 

We report top-1 and top-5 accuracies. Moreover, as our target images often contain multiple exhibits, we ask a question whether any of top-$k$ predictions match any of top-$n$ image labels ordered by our expert volunteers according to the perceived saliency. If so, we count it as a correctly recognized image. We count these valid predictions and normalize by the total number of testing images. We denote this measure as top-$k$-$n$ where $k,n\!\in\!\idx{5}$. Lastly, we indicate an {\em area-under-curve} type of measure $\avg_k\text{top-}k\text{-}k$ which rewards correct recognition of the most dominant object in the scene and offers some leniency if the order of top predictions is confused and/or if they match less dominant objects--a simple alternative to precision/recall plots. 

We divided Open MIC into {\em Shn}, {\em Clk}, {\em Scl}, {\em Sci}, {\em Gls}, {\em Rel}, {\em Nat}, {\em Shx}, {\em Clv} and {\em Hon} subsets to allow short 6--8 hours long runs per experiment. We ran 150 jobs on ({\em S}), ({\em T}) and ({\em S+T}) baselines and 300 jobs on JBLD: 5 splits $\times$10 subsets $\times$6 hyperp. choices. Table \ref{tab:challengeI} shows that the exhibits in the Comprehensive Historical and Cultural Exhibits ({\em Shx}) and the Sculptures ({\em Scl}) were the hardest to identify given scores of $48.5$ and $54.4\%$ top-1 accuracy. This is consistent with volunteers' reports that both exhibitions were crowded, the lighting was dim, exhibits were occluded, fine-grained and non-planar. The easiest to identify were the
Sculptures, Pottery and Bronze Figurines ({\em Clv}) and the Indigenous Arts ({\em Hon}) as both exhibitions were spacious with good lighting. The average top-1 accuracy across all subsets on JBLD is $63.9\%$. Averages over baselines ({\em S}), ({\em T}) and ({\em S+T}) are $52.5$, $57.4$, and $58.5\%$ top-1 acc. To account for uncertainty of saliency-based labeling and classifier confusing which exhibit to label, we report our proposed average top-1-5 acc. to be $70.6\%$. Our average combined score $\avg_k\text{top-}k\text{-}k$ is $79.3\%$. These results show that Open MIC challenges CNNs due to {\em in-the-wild} capture with wearable cameras.

\noindent{\textbf{Challenge II.}} Having provided the above results per subset, we evaluate the combined set covering 866 exhibit identities. In this setting, a single experiment runs 80--120 hours. We ran 15 jobs on ({\em S}), ({\em T}) and ({\em S+T}) baselines and 60 jobs on ({\em So}) and JBLD: 2 distances $\times$5 splits $\times$6  hyperp. choices. 
Table \ref{tab:museum_all} shows that our JBLD approach scores $64.2\%$ top-1 accuracy and outperforms baselines ({\em S}), ({\em T}) and ({\em S+T}) by $30$, $7.7$ and $8.3\%$. Fine-tuning CNNs on the source and testing on target ({\em S}) is especially a very poor performer due to the significant domain shift in Open MIC.

\noindent{\textbf{Challenge III.}} For this challenge, we break down performance on the combined set covering 866 exhibit identities w.r.t. the following 12 factors:
object clipping ({\em clp}), low lighting ({\em lgt}), blur ({\em blr}), light glares ({\em glr}), background clutter ({\em bgr}), occlusions ({\em ocl}), in-plane rotations ({\em rot}), zoom ({\em zom}), tilted viewpoint ({\em vpc}), small size/far away ({\em sml}), object shadows ({\em shd}), reflections ({\em rfl}) and the clean view ({\em ok}). 
Table \ref{tab:museum_III} shows results averaged over 5 data splits. We note that JBLD outperforms baselines. The factors most affecting the supervised domain adaptation are the small size ({\em sml}) of exhibits/distant view, low light ({\em lgt}) and blur ({\em blr}). The corresponding top-1 accuracies of $34.1$, $48.6$ and $51.6\%$ are below our average top-1 accuracy of $64.2\%$ listed in Table \ref{tab:museum_all}. 
In contrast, images with shadows ({\em shd}), zoom ({\em zom}) and reflections ({\em rfl}) score $70.4$, $70.0$ and $67.5\%$ top-1 accuracy (above avg. $64.2\%$). Our wearable cameras captured also a few of clean shots scoring $81.0\%$ top-1 accuracy. This lets us form a claim that domain adaptation methods should evolve to deal with each of these adverse factors. 
Our suppl. material presents further statistics including the numbers of images per factor, top-$k$-$n$ accuracies, breakdowns per subset and even analysis of combined factors \eg, images that contain small exhibits under poor light and blur score only $30.0\%$ average top-1 accuracy.

\newcommand{\BV}[1]{\color{black!10!blue}{\em #1}}

\ifdefined\arxiv
\newcommand{\ParBoxAW}{0.70}
\else
\newcommand{\ParBoxAW}{0.48}
\fi

\begin{table}[t]
\vspace{-0.3cm}
\setlength{\tabcolsep}{0.10em}
\centering
\ifdefined\arxiv\else\hspace{-0.9cm}\fi
\parbox{\ParBoxAW\textwidth}
{
\renewcommand{\arraystretch}{0.8}
{
\centering
\begin{tabular}{c|c c c c c  c c c c c  c c c |}
				                         & \BV{clp}   & \BV{lgt}  & \BV{blr}   & \BV{glr}  & \BV{bgr}   & \BV{ocl} & \BV{rot} & \BV{zom} & \BV{vpc} & \BV{sml} & \BV{shd} & \BV{rfl} & \BV{ok}\\
\hline
\kern-0.4em{\em S}\kern-0.1em    & 41.4  & 17.0  & 23.8 & 27.3 & 40.3 & 34.5 & 29.7  & 52.7  & 33.4  & 14.2 & 10.4 & 32.3 & 65.5 \\
\kern-0.4em{\em T}\kern-0.1em    & 56.2  & 38.2  & 42.6 & 56.1 & 57.9 & 49.6 & 58.3  & 60.4  & 50.3  & 29.6 & 59.2 & 60.7 & 64.3 \\
\kern-0.4em{\em S+T}\kern-0.1em  & 56.6  & 34.6  & 39.8 & 54.9 & 56.2 & 48.3 & 56.7  & 65.9  & 48.7  & 27.3 & 56.5 & 59.0 & 72.6 \\
\kern-0.4em\BL\JBL\kern-0.1em    &\BU{65.3}&\BU{48.6}&\BU{51.6}&\BU{64.0}&\BU{65.9}&\BU{56.4}&\BU{65.0}&\BU{70.0}&\BU{58.6}&\BU{34.1}&\BU{70.4}&\BU{67.5}&\BU{81.0}\\
\end{tabular}
}
}
\caption{Challenge III. Open MIC performance on the combined set w.r.t. 12 factors detailed in Section \ref{sec:open_mic}. Top-1 accuracies for baselines ({\em S}), ({\em T}), ({\em S+T}), and for our JBLD approach are listed.
\label{tab:museum_III}
}
\vspace{-0.5cm}
\end{table}
\section{Conclusions}
\label{sec:conclude}

We have collected, annotated and evaluated a new challenging Open MIC dataset with the source and target domains formed by images from Android and wearable cameras, respectively. We covered 10 distinct exhibition spaces in 10 different museums to collect a realistic {\em in-the-wild} target data in contrast to typical photos for which the users control the shutter. We have provided a number of useful baselines \eg, breakdowns of results per exhibition, combined scores and analysis of factors detrimental to domain adaptation and recognition. Unsupervised domain adaptation and few-shot learning methods can also be compared to our baselines. 
Moreover, we proposed orthogonal improvements to the supervised domain adaptation \eg, we integrated non-trivial non-Euclidean distances and Nystr\" om projections for better results and tractability. We will make our data and evaluation scripts available to the researchers.
\begin{appendices}

\ifdefined\arxiv
\else
\newcommand{\RE}{\color{blue!20!black!30!red}}

\newcommand{\SSS}{{\em S}}
\newcommand{\TTT}{{\em T}}
\newcommand{\SPT}{{\footnotesize\em S+T}}
\newcommand{\SOO}{{\em So}}
\newcommand{\JBL}{{\footnotesize\em JBLD}}
\newcommand{\JBD}{{\em JBLD}}
\newcommand{\AIR}{{\footnotesize\em AIRM}}

\newcommand{\BL}{\color{black!40!green}}
\newcommand{\BU}[1]{\color{black!40!green}\textbf{#1}}
\newcommand{\BO}[1]{\textbf{#1}}
\newcommand{\TKN}[2]{{\footnotesize top-{#1}-{#2}}}
\newcommand{\TKO}{\footnotesize\pbox{3cm}{$\avg_k$\\top-$k$-$k$}}

\newcommand{\Shn}{\multirow{5}{*}{\rotatebox[origin=c]{90}{{\color{blue!20!black!30!red}{\em Shn}}}\kern-0.3em }}
\newcommand{\Clk}{\multirow{5}{*}{\rotatebox[origin=c]{90}{{\color{blue!20!black!30!red}\em Clk}}\kern-0.3em }}
\newcommand{\Scl}{\multirow{5}{*}{\rotatebox[origin=c]{90}{{\color{blue!20!black!30!red}\em Scl}}\kern-0.3em }}
\newcommand{\Sci}{\multirow{5}{*}{\rotatebox[origin=c]{90}{{\color{blue!20!black!30!red}\em Sci}}\kern-0.3em }}
\newcommand{\Gls}{\multirow{5}{*}{\rotatebox[origin=c]{90}{{\color{blue!20!black!30!red}\em Gls}}\kern-0.3em }}

\newcommand{\Rel}{\multirow{5}{*}{\rotatebox[origin=c]{90}{{\color{blue!20!black!30!red}\em Rel}}\kern-0.3em }}
\newcommand{\Nat}{\multirow{5}{*}{\rotatebox[origin=c]{90}{{\color{blue!20!black!30!red}\em Nat}}\kern-0.3em }}
\newcommand{\Shx}{\multirow{5}{*}{\rotatebox[origin=c]{90}{{\color{blue!20!black!30!red}\em Shx}}\kern-0.3em }}
\newcommand{\Clv}{\multirow{5}{*}{\rotatebox[origin=c]{90}{{\color{blue!20!black!30!red}\em Clv}}\kern-0.3em }}
\newcommand{\Hon}{\multirow{5}{*}{\rotatebox[origin=c]{90}{{\color{blue!20!black!30!red}\em Hon}}\kern-0.3em }}

\newcommand{\BV}[1]{\color{black!10!blue}{\em #1}}

\fi

\newcommand{\BW}[1]{\color{black!40!green}#1}

\section{Additional Results on the OpenMIC dataset}
\label{sec:open_mic_det}

\ifdefined\arxiv
\newcommand{\PBoxAW}{5.75cm}
\newcommand{\VSpaceA}{0.1cm}
\begin{table*}[h]
\vspace{-0.3cm}
\setlength{\tabcolsep}{0.15em}
\centering
\fontsize{7.25}{9}\selectfont
%
\parbox{0.49\textwidth}
{
\renewcommand{\arraystretch}{0.8}
{
\centering
\begin{tabular}{c|l|}
abbr. & $\quad$details \\
\hline
\BV{clp} & \pbox{\PBoxAW}{\vspace{\VSpaceA}object clipping \eg, side, base or top including small or large fragments of an exhibit\vspace{\VSpaceA}}\\
\BV{lgt} & \pbox{\PBoxAW}{\vspace{\VSpaceA}poor lighting \eg, dark exhibition space, dark exhibit casing, strong light sources to which camera adapted leaving exhibit underexposed\vspace{\VSpaceA}}\\
\BV{blr} & \pbox{\PBoxAW}{\vspace{\VSpaceA}blur due to motion and/or poor lighting/long shutter exposure; full blur or part of the exhibit affected\vspace{\VSpaceA}}\\
\BV{glr} & \pbox{\PBoxAW}{\vspace{\VSpaceA}point-wise glares of light reflected from objects\vspace{\VSpaceA}}\\
\BV{bgr} & \pbox{\PBoxAW}{\vspace{\VSpaceA}background clutter: a non-uniform background behind an exhibit that changes with the camera viewpoint \eg, people, other exhibits, furniture \etc\vspace{\VSpaceA}}\\
\BV{ocl} & \pbox{\PBoxAW}{\vspace{\VSpaceA}side, frontal, large or partial exhibit occlusions due to humans, other objects or non-transparent protective casing\vspace{\VSpaceA}}\\
\BV{rot} & \pbox{\PBoxAW}{\vspace{\VSpaceA}in-plane rotations by more than 5 degrees due to a tilted camera or volunteers leaning towards exhibits\vspace{\VSpaceA}}\\
\BV{zom} & \pbox{\PBoxAW}{\vspace{\VSpaceA}large close-ups of an exhibit or a zoom of a part of exhibit\vspace{\VSpaceA}}\\
\BV{vpc} & \pbox{\PBoxAW}{\vspace{\VSpaceA}camera viewpoint that mismatches the normal to the surface of face of an exhibit--some exhibits have no frontal face, some have several faces due to their distinct axes of symmetry\vspace{\VSpaceA}}\\
\BV{sml} & \pbox{\PBoxAW}{\vspace{\VSpaceA}small object: an exhibit captured at a large distance \eg, across a hall; also small scale exhibits which cannot be closely approached\vspace{\VSpaceA}}\\
\BV{shd} & \pbox{\PBoxAW}{\vspace{\VSpaceA}a shadow cast over part of an exhibit\vspace{\VSpaceA}}\\
\BV{rfl} & \pbox{\PBoxAW}{\vspace{\VSpaceA}reflections affecting surfaces such as a protective glass casing of exhibits which acts like a mirror\vspace{\VSpaceA}}\\
\BV{ok} & \pbox{\PBoxAW}{\vspace{\VSpaceA}no visible distortions listed above\vspace{\VSpaceA}}
\end{tabular}
%
%
}
\caption{Challenge III. The 12 factors w.r.t. which we evaluate our dataset.}
\label{tab:museum_stats}
}
\vspace{0.2cm}
\parbox{0.49\textwidth}
{
\renewcommand{\arraystretch}{0.8}
{
\centering
\begin{tabular}{c|l|}
abbr. & $\quad$details \\
\hline
\BV{lcl} & \pbox{\PBoxAW}{\vspace{\VSpaceA}light object clipping \eg, side, base or top including small fragments below 20\% of the exhibit area\vspace{\VSpaceA}}\\
\BV{hcl} & \pbox{\PBoxAW}{\vspace{\VSpaceA}heavy object clipping of large fragments \eg, more than 20\% of the exhibit area\vspace{\VSpaceA}}\\
\BV{bcl} & \pbox{\PBoxAW}{\vspace{\VSpaceA}clipping of the base of sculptures/exhibits \etc\vspace{\VSpaceA}}\\
\BV{scl} & \pbox{\PBoxAW}{\vspace{\VSpaceA}side occlusions of exhibits by humans or other objects\vspace{\VSpaceA}}\\
\BV{fcl} & \pbox{\PBoxAW}{\vspace{\VSpaceA}frontal/central occlusions of exhibit by humans or other objects\vspace{\VSpaceA}}\\
\BV{ooc} & \pbox{\PBoxAW}{\vspace{\VSpaceA}unclassified kind of occlusion\vspace{\VSpaceA}}\\
\BV{lzo} & \pbox{\PBoxAW}{\vspace{\VSpaceA}close-ups of an exhibit\vspace{\VSpaceA}}\\
\BV{hzo} & \pbox{\PBoxAW}{\vspace{\VSpaceA}large close-ups or a heavy zoom on a part of exhibit\vspace{\VSpaceA}}\\
\BV{lro} & \pbox{\PBoxAW}{\vspace{\VSpaceA}small in-plane rotations by no more than 15 degrees due to a tilted camera \etc.\vspace{\VSpaceA}}\\
\BV{hro} & \pbox{\PBoxAW}{\vspace{\VSpaceA}large in-plane rotations by more than 15 degrees due to a tilted camera \etc.\vspace{\VSpaceA}}\\
\BV{lvp} & \pbox{\PBoxAW}{\vspace{\VSpaceA}mismatches by less than 15 degrees between the camera viewpoint and the normal to the surface of face of an exhibit\vspace{\VSpaceA}}\\
\BV{hvp} & \pbox{\PBoxAW}{\vspace{\VSpaceA}mismatches by more than 15 degrees between the camera viewpoint and the normal to the surface of face of an exhibit\vspace{\VSpaceA}}\\
\BV{spc} & \pbox{\PBoxAW}{\vspace{\VSpaceA}light specularities and other reflections from surface\vspace{\VSpaceA}}
\end{tabular}
}
\caption{Challenge III. Additional factors w.r.t. which we evaluate our dataset.}
\label{tab:museum_stats_add}
}
\end{table*}

\else

\begin{table}[t]
\vspace{-0.3cm}
\setlength{\tabcolsep}{0.15em}
\centering
%
\renewcommand{\arraystretch}{0.8}
{
\centering
\begin{tabular}{c|l|}
abbr. & $\quad$details \\
\hline
\BV{clp} & \pbox{7.5cm}{\vspace{0.125cm}object clipping \eg, side, base or top including small or large fragments of an exhibit\vspace{0.125cm}}\\
\BV{lgt} & \pbox{7.5cm}{\vspace{0.125cm}poor lighting \eg, dark exhibition space, dark exhibit casing, strong light sources to which camera adapted leaving exhibit underexposed\vspace{0.125cm}}\\
\BV{blr} & \pbox{7.5cm}{\vspace{0.125cm}blur due to motion and/or poor lighting/long shutter exposure; full blur or part of the exhibit affected\vspace{0.125cm}}\\
\BV{glr} & \pbox{7.5cm}{\vspace{0.125cm}point-wise glares of light reflected from objects\vspace{0.125cm}}\\
\BV{bgr} & \pbox{7.5cm}{\vspace{0.125cm}background clutter: a non-uniform background behind an exhibit that changes with the camera viewpoint \eg, people, other exhibits, furniture \etc\vspace{0.125cm}}\\
\BV{ocl} & \pbox{7.5cm}{\vspace{0.125cm}side, frontal, large or partial exhibit occlusions due to humans, other objects or non-transparent protective casing\vspace{0.125cm}}\\
\BV{rot} & \pbox{7.5cm}{\vspace{0.125cm}in-plane rotations by more than 5 degrees due to a tilted camera or volunteers leaning towards exhibits\vspace{0.125cm}}\\
\BV{zom} & \pbox{7.5cm}{\vspace{0.125cm}large close-ups of an exhibit or a zoom of a part of exhibit\vspace{0.125cm}}\\
\BV{vpc} & \pbox{7.5cm}{\vspace{0.125cm}camera viewpoint that mismatches the normal to the surface of face of an exhibit--some exhibits have no frontal face, some have several faces due to their distinct axes of symmetry\vspace{0.125cm}}\\
\BV{sml} & \pbox{7.5cm}{\vspace{0.125cm}small object: an exhibit captured at a large distance \eg, across a hall; also small scale exhibits which cannot be closely approached\vspace{0.125cm}}\\
\BV{shd} & \pbox{7.5cm}{\vspace{0.125cm}a shadow cast over part of an exhibit\vspace{0.125cm}}\\
\BV{rfl} & \pbox{7.5cm}{\vspace{0.125cm}reflections affecting surfaces such as a protective glass casing of exhibits which acts like a mirror\vspace{0.125cm}}\\
\BV{ok} & \pbox{7.5cm}{\vspace{0.125cm}no visible distortions listed above\vspace{0.125cm}}
\end{tabular}
%
%
}
\caption{Challenge III. The 12 factors w.r.t. which we evaluate our dataset.}
\label{tab:museum_stats}
\vspace{-0.3cm}
\end{table}
\fi

\newcommand{\absval}[1]{\ifnum#1<0 -\fi#1}
\newcommand{\signval}[1]{\ifnum#1<0 -\fi1}

\newcommand\ctr[2]{\pgfmathsetmacro{\compB}{0.5+(#1-#2)/(#1>#2?#1:#2)}\color{black}\edef\x{\noexpand\cellcolor[hsb]{0,\compB,1}}\x#1}

\newcommand\ctb[2]{\pgfmathsetmacro{\compB}{#1/#2}\color{black}\edef\x{\noexpand\cellcolor[hsb]{0.95,\compB,1}}\x#1}

\newcommand\ctc[2]{\pgfmathsetmacro{\compB}{#1/#2}\color{black}\edef\x{\noexpand\cellcolor[hsb]{0.7,\compB,1}}\x#1}

\newcommand\ctd[2]{\pgfmathsetmacro{\compB}{\ifnum#1>#2 0.5+(#1-#2)/#1/2\else 0.5+(#1-#2)/#2/2\fi}\color{black}\edef\x{\noexpand\cellcolor[hsb]{0.55,\compB,1}}\x#1}

\begin{table}[t]
\vspace{-0.3cm}
\setlength{\tabcolsep}{0.10em}
\centering
%
\small
\centering
\begin{tabular}{c|c c c c c  c c c c c  c c c |}
$\cap$                                &\BV{clp}&\BV{lgt}&\BV{blr}&\BV{glr}&\BV{bgr}&\BV{ocl}&\BV{rot}&\BV{zom}&\BV{vpc}&\BV{sml}&\BV{shd}&\BV{rfl}&\BV{ok}\\
\hline
\kern-0.4em{\em all}\kern-0.1em       & \ctc{5136}{7344} & \ctc{335}{7344} & \ctc{1728}{7344} & \ctc{1346}{7344} & \ctc{2290}{7344} & \ctc{1529}{7344} & \ctc{7344}{7344} & \ctc{2278}{7344} & \ctc{4571}{7344} & \ctc{557}{7344} & \ctc{125}{7344} & \ctc{2000}{7344} & \ctc{84}{7344} \\
\hline
\kern-0.4em\BV{clp}\kern-0.1em        & \ctd{5136}{5136} & \ctd{216}{335} & \ctd{770}{1728} & \ctd{572}{1346} & \ctd{1415}{2290}& \ctd{873}{1529} & \ctd{3401}{7344} & \ctd{1803}{2278}& \ctd{2549}{4571}  & \ctd{167}{557}  & \ctd{66}{125} & \ctd{1009}{2000}   & 0     \\
\kern-0.4em\BV{lgt}\kern-0.1em        & \ctd{216}{5136}  & \ctd{335}{335} & \ctd{105}{1728} & \ctd{55}{1346}  & \ctd{92}{2290}  & \ctd{69}{1529}  & \ctd{232}{7344}  & \ctd{9}{2278}   & \ctd{234}{4571}  & \ctd{16}{557}   & \ctd{38}{125} & \ctd{21}{2000}     & 0     \\
\kern-0.4em\BV{blr}\kern-0.1em        & \ctd{770}{5136}  & \ctd{105}{335} & \ctd{1728}{1728}& \ctd{240}{1346} & \ctd{323}{2290} & \ctd{235}{1529} & \ctd{1348}{7344} & \ctd{240}{2278} & \ctd{820}{4571}  & \ctd{152}{557}  & \ctd{23}{125} & \ctd{330}{2000}    & 0     \\
\kern-0.4em\BV{glr}\kern-0.1em        & \ctd{572}{5136}  & \ctd{55}{335}  & \ctd{240}{1728} & \ctd{1346}{1346}& \ctd{183}{2290} & \ctd{143}{1529} & \ctd{1054}{7344} & \ctd{204}{2278} & \ctd{640}{4571}  & \ctd{52}{557}   & \ctd{12}{125} & \ctd{155}{2000}    & 0     \\
\kern-0.4em\BV{bgr}\kern-0.1em        & \ctd{1415}{5136} & \ctd{92}{335}  & \ctd{323}{1728} & \ctd{183}{1346} & \ctd{2290}{2290}& \ctd{565}{1529} & \ctd{1604}{7344} & \ctd{464}{2278} & \ctd{1409}{4571}  & \ctd{227}{557}  & \ctd{49}{125} & \ctd{395}{2000}    & 0     \\
\kern-0.4em\BV{ocl}\kern-0.1em        & \ctd{873}{5136}  & \ctd{69}{335}  & \ctd{235}{1728} & \ctd{143}{1346} & \ctd{565}{2290} & \ctd{1529}{1529}& \ctd{1090}{7344} & \ctd{183}{2278} & \ctd{978}{4571}  & \ctd{253}{557}  & \ctd{33}{125} & \ctd{219}{2000}    & 0     \\
\kern-0.4em\BV{rot}\kern-0.1em        & \ctd{3401}{5136} & \ctd{232}{335} & \ctd{1348}{1728}& \ctd{1054}{1346}& \ctd{1604}{2290}& \ctd{1090}{1529}& \ctd{7344}{7344} & \ctd{1380}{2278}& \ctd{3292}{4571}  & \ctd{405}{557}  & \ctd{113}{125}& \ctd{1522}{2000}   & 0     \\
\kern-0.4em\BV{zom}\kern-0.1em        & \ctd{1803}{5136} & \ctd{9}{335}   & \ctd{240}{1728} & \ctd{204}{1346} & \ctd{464}{2290} & \ctd{183}{1529} & \ctd{1380}{7344} & \ctd{2278}{2278}& \ctd{611}{4571}  & \ctd{0}{557}    & \ctd{18}{125} & \ctd{535}{2000}    & 0     \\
\kern-0.4em\BV{vpc}\kern-0.1em        & \ctd{2549}{5136} & \ctd{234}{335} & \ctd{820}{1728} & \ctd{640}{1346} & \ctd{1409}{2290}& \ctd{978}{1529} & \ctd{3292}{7344} & \ctd{611}{2278} & \ctd{4571}{4571}  & \ctd{370}{557}  & \ctd{39}{125} & \ctd{856}{2000}    & 0     \\
\kern-0.4em\BV{sml}\kern-0.1em        & \ctd{167}{5136}  & \ctd{16}{335}  & \ctd{152}{1728} & \ctd{52}{1346}  & \ctd{227}{2290} & \ctd{253}{1529} & \ctd{405}{7344}  & \ctd{0}{2278}   & \ctd{370}{4571}  & \ctd{557}{557}  & \ctd{0}{125}  & \ctd{69}{2000}     & 0     \\
\kern-0.4em\BV{shd}\kern-0.1em        & \ctd{66}{5136}   & \ctd{38}{335}  & \ctd{23}{1728}  & \ctd{12}{1346}  & \ctd{49}{2290}  & \ctd{33}{1529}  & \ctd{113}{7344}  & \ctd{18}{2278}  & \ctd{39}{4571}   & \ctd{0}{557}   & \ctd{125}{125}& \ctd{15}{2000}     & 0     \\
\kern-0.4em\BV{rfl}\kern-0.1em        & \ctd{1009}{5136} & \ctd{21}{335}  & \ctd{330}{1728} & \ctd{155}{1346} & \ctd{395}{2290} & \ctd{219}{1529} & \ctd{1522}{7344} & \ctd{535}{2278} & \ctd{856}{4571}  & \ctd{69}{557}   & \ctd{15}{125} & \ctd{2000}{2000}   & 0     
\end{tabular}
%
%
\caption{Challenge III. Target image counts for pairs of factors. The top row shows the counts for the 12 factors detailed in Table \ref{tab:museum_stats}. The colors of each column are normalized w.r.t. the top cell in that column.
\label{tab:museum_counts}
}
\end{table}

\begin{table}[t]
\setlength{\tabcolsep}{0.10em}
\centering
%
\renewcommand{\arraystretch}{0.8}
{
\fontsize{9.25}{11}\selectfont
\centering
\begin{tabular}{c|c c c c c  c c c c c  c c c |}
$\cap$                         &\BV{clp}&\BV{lgt}&\BV{blr}&\BV{glr}&\BV{bgr}&\BV{ocl}&\BV{rot}&\BV{zom}&\BV{vpc}&\BV{sml}&\BV{shd}&\BV{rfl}&\BV{ok}\\
\hline
\kern-0.4em{\em all}\kern-0.1em&\ctb{65.3}{81.0}&\ctb{48.6}{81.0}&\ctb{51.6}{81.0}&\ctb{64.0}{81.0}&\ctb{65.9}{81.0}&\ctb{56.4}{81.0}&\ctb{65.0}{81.0}&\ctb{70.0}{81.0}&\ctb{58.6}{81.0}  &\ctb{34.1}{81.0}&\ctb{70.4}{81.0}  &\ctb{67.5}{81.0}  &\ctb{81.0}{81.0}\\
\hline
\kern-0.4em\BV{clp}\kern-0.1em &\ctr{65.3}{65.3}&\ctr{55.1}{48.6}&\ctr{51.8}{51.6}&\ctr{67.5}{64.0}&\ctr{66.8}{65.9}&\ctr{61.5}{56.4}&\ctr{67.2}{65.0}&\ctr{68.1}{70.0}&\ctr{62.3}{58.6}  &\ctr{45.5}{34.1}&\ctr{72.7}{70.4}&\ctr{67.0}{67.5}  &n/a   \\
\kern-0.4em\BV{lgt}\kern-0.1em &\ctr{55.1}{65.3}&\ctr{48.6}{48.6}&\ctr{41.0}{51.6}&\ctr{43.6}{64.0}&\ctr{59.8}{65.9}&\ctr{43.5}{56.4}&\ctr{48.3}{65.0}&\ctr{44.4}{70.0}&\ctr{46.1}{58.6}  &\ctr{31.2}{34.1}&\ctr{57.9}{70.4}&\ctr{80.9}{67.5}  &n/a   \\
\kern-0.4em\BV{blr}\kern-0.1em &\ctr{51.8}{65.3}&\ctr{41.0}{48.6}&\ctr{51.6}{51.6}&\ctr{48.7}{64.0}&\ctr{48.6}{65.9}&\ctr{37.0}{56.4}&\ctr{52.3}{65.0}&\ctr{64.2}{70.0}&\ctr{43.3}{58.6}  &\ctr{21.0}{34.1}&\ctr{39.1}{70.4}&\ctr{59.4}{67.5}  &n/a   \\
\kern-0.4em\BV{glr}\kern-0.1em &\ctr{67.5}{65.3}&\ctr{43.6}{48.6}&\ctr{48.7}{51.6}&\ctr{64.0}{64.0}&\ctr{62.3}{65.9}&\ctr{47.9}{56.4}&\ctr{65.1}{65.0}&\ctr{67.1}{70.0}&\ctr{60.4}{58.6}  &\ctr{13.5}{34.1}&\ctr{50.0}{70.4}&\ctr{64.5}{67.5}  &n/a   \\
\kern-0.4em\BV{bgr}\kern-0.1em &\ctr{66.8}{65.3}&\ctr{59.8}{48.6}&\ctr{48.6}{51.6}&\ctr{62.3}{64.0}&\ctr{65.9}{65.9}&\ctr{59.6}{56.4}&\ctr{66.6}{65.0}&\ctr{76.1}{70.0}&\ctr{61.2}{58.6}  &\ctr{29.9}{34.1}&\ctr{79.6}{70.4}&\ctr{73.2}{67.5}  &n/a   \\
\kern-0.4em\BV{ocl}\kern-0.1em &\ctr{61.5}{65.3}&\ctr{43.5}{48.6}&\ctr{37.0}{51.6}&\ctr{47.9}{64.0}&\ctr{59.6}{65.9}&\ctr{56.4}{56.4}&\ctr{55.6}{65.0}&\ctr{75.4}{70.0}&\ctr{55.9}{58.6}  &\ctr{40.7}{34.1}&\ctr{78.8}{70.4}&\ctr{64.8}{67.5}  &n/a   \\
\kern-0.4em\BV{rot}\kern-0.1em &\ctr{67.2}{65.3}&\ctr{48.3}{48.6}&\ctr{52.3}{51.6}&\ctr{65.1}{64.0}&\ctr{66.6}{65.9}&\ctr{55.6}{56.4}&\ctr{65.0}{65.0}&\ctr{75.5}{70.0}&\ctr{57.6}{58.6}  &\ctr{32.6}{34.1}&\ctr{73.4}{70.4}&\ctr{70.4}{67.5}  &n/a   \\
\kern-0.4em\BV{zom}\kern-0.1em &\ctr{68.1}{65.3}&\ctr{44.4}{48.6}&\ctr{64.2}{51.6}&\ctr{67.1}{64.0}&\ctr{76.1}{65.9}&\ctr{75.4}{56.4}&\ctr{75.5}{65.0}&\ctr{70.0}{70.0}&\ctr{66.3}{58.6}  &n/a             &\ctr{83.3}{70.4}&\ctr{69.7}{67.5}  &n/a   \\
\kern-0.4em\BV{vpc}\kern-0.1em &\ctr{62.3}{65.3}&\ctr{46.1}{48.6}&\ctr{43.3}{51.6}&\ctr{60.4}{64.0}&\ctr{61.2}{65.9}&\ctr{55.9}{56.4}&\ctr{57.6}{65.0}&\ctr{66.3}{70.0}&\ctr{58.6}{58.6}  &\ctr{33.2}{34.1}&\ctr{64.1}{70.4}&\ctr{61.6}{67.5}  &n/a   \\
\kern-0.4em\BV{sml}\kern-0.1em &\ctr{45.5}{65.3}&\ctr{31.2}{48.6}&\ctr{21.0}{51.6}&\ctr{13.5}{64.0}&\ctr{29.9}{65.9}&\ctr{40.7}{56.4}&\ctr{32.6}{65.0}&n/a             &\ctr{33.2}{58.6}  &\ctr{34.1}{34.1}&n/a              &\ctr{46.4}{67.5}  &n/a   \\
\kern-0.4em\BV{shd}\kern-0.1em &\ctr{72.7}{65.3}&\ctr{57.9}{48.6}&\ctr{39.1}{51.6}&\ctr{50.0}{64.0}&\ctr{79.6}{65.9}&\ctr{78.8}{56.4}&\ctr{73.4}{65.0}&\ctr{83.3}{70.0}&\ctr{64.1}{58.6}  &n/a             &\ctr{70.4}{70.4}&\ctr{80.0}{67.5}  &n/a   \\
\kern-0.4em\BV{rfl}\kern-0.1em &\ctr{67.0}{65.3}&\ctr{80.9}{48.6}&\ctr{59.4}{51.6}&\ctr{64.5}{64.0}&\ctr{73.2}{65.9}&\ctr{64.8}{56.4}&\ctr{70.4}{65.0}&\ctr{69.7 }{70.0}& \ctr{61.6}{58.6}  &\ctr{46.4}{34.1}&\ctr{80.0}{70.4}&\ctr{67.5}{67.5}  &n/a   
\end{tabular}
%
%
}
\caption{Challenge III. Open MIC performance on the combined set w.r.t. the pairs of 12 factors detailed in Table \ref{tab:museum_stats}. Top-1 accuracies for our JBLD approach are listed. The top row shows results w.r.t. the original 12 factors. Color-coded cells are normalized w.r.t. entries of this row. For each column, intense/pale red indicates better/worse results compared to the top cell, respectively.
\label{tab:museum_cooc}
}
\vspace{-0.3cm}
\end{table}

Below, we give more details about our Open MIC dataset and present more evaluations. 
Table \ref{tab:museum_stats} contains a more detailed description of the 12 factors which we use to analyze performance on our Open MIC dataset. Additionally to the Table 6 in the main submission, which breaks down the performance w.r.t. these 12 factors, we performed an analysis w.r.t. pairs of factors.

Tables \ref{tab:museum_counts} and \ref{tab:museum_cooc} present the image counts and results w.r.t. pairs of factors co-occurring together. The combination of ({\em sml}) with ({\em glr}), ({\em blr}), ({\em bgr}), ({\em lgt}), ({\em rot}) and ({\em vpc}) results in 13.5, 21.0, 29.9, 31.2, 32.6 and 33.2\% mean top-$1$ accuracy, respectively. Therefore, these pairs of factors affect the quality of recognition the most.

\begin{table}[t]
\vspace{-0.3cm}
\setlength{\tabcolsep}{0.19em}
\centering
%
\renewcommand{\arraystretch}{0.8}
{
\centering
\begin{tabular}{c|c c c c c  c c c c c  c c |}
\multirow{2}{*}{$\cap$}&\BV{sml}&\BV{sml}&\BV{sml}&\BV{sml}&\BV{sml}&\BV{sml}&\BV{blr}&\BV{blr}&\BV{sml}&\BV{lgt}&\BV{lgt}&\BV{lgt}\\
                       &\BV{glr}&\BV{blr}&\BV{bgr}&\BV{lgt}&\BV{rot}&\BV{vpc}&\BV{ocl}&\BV{shd}&\BV{ocl}&\BV{blr}&\BV{ocl}&\BV{glr}\\
\hline
\em{all}&\ctc{52}{405}&\ctc{152}{405}&\ctc{227}{405}& \ctc{16}{405}&\ctc{405}{405}& \ctc{370}{405}&\ctc{235}{405}&\ctc{23}{405}&\ctc{253}{405}& \ctc{105}{405}&\ctc{69}{405}&\ctc{55}{405}\\
\hline
\BV{clp} & \ctd{7}{52}  & \ctd{36}{152}  & \ctd{75}{227} & \ctd{3}{16}  & \ctd{98}{405} & \ctd{124}{370} & \ctd{133}{235} & \ctd{13}{23} & \ctd{90}{253} & \ctd{57}{105}  & \ctd{51}{69} & \ctd{35}{55}\\
\BV{lgt} & \ctd{2}{52}  & \ctd{10}{152}  & \ctd{5}{227}  & \ctd{16}{16} & \ctd{8}{405}  & \ctd{6}{370}   & \ctd{23}{235}  & \ctd{13}{23} & \ctd{7}{253}  & \ctd{105}{105} & \ctd{69}{69} & \ctd{55}{55}\\
\BV{blr} & \ctd{19}{52} & \ctd{152}{152} & \ctd{44}{227} & \ctd{10}{16} & \ctd{122}{405}& \ctd{101}{370} & \ctd{235}{235} & \ctd{23}{23} & \ctd{45}{253} & \ctd{105}{105} & \ctd{23}{69} & \ctd{19}{55}\\
\BV{glr} & \ctd{52}{52} & \ctd{19}{152}  & \ctd{13}{227} & \ctd{2}{16}  & \ctd{38}{405} & \ctd{20}{370}  & \ctd{36}{235}  & \ctd{6}{23}  & \ctd{36}{253} & \ctd{19}{105}  & \ctd{16}{69} & \ctd{55}{55}\\
\BV{bgr} & \ctd{13}{52} & \ctd{44}{152}  & \ctd{227}{227}& \ctd{5}{16}  & \ctd{166}{405}& \ctd{175}{370} & \ctd{78}{235}  & \ctd{10}{23} & \ctd{100}{253}& \ctd{26}{105}  & \ctd{35}{69} & \ctd{19}{55}\\
\BV{ocl} & \ctd{20}{52} & \ctd{45}{152}  & \ctd{100}{227}& \ctd{7}{16}  & \ctd{166}{405}& \ctd{161}{370} & \ctd{235}{235} & \ctd{6}{23}  & \ctd{253}{253}& \ctd{23}{105}  & \ctd{69}{69} & \ctd{16}{55}\\
\BV{rot} & \ctd{38}{52} & \ctd{122}{152} & \ctd{166}{227}& \ctd{8}{16}  & \ctd{405}{405}& \ctd{258}{370} & \ctd{171}{235} & \ctd{18}{23} & \ctd{166}{253}& \ctd{72}{105}  & \ctd{40}{69} & \ctd{31}{55}\\
\BV{zom} & \ctd{0}{52}  & \ctd{0}{152}   & \ctd{0}{227}  & \ctd{0}{16}  & \ctd{0}{405}  & \ctd{0}{370}   & \ctd{20}{235}  & \ctd{1}{23}  & \ctd{0}{253}  & \ctd{2}{105}   & \ctd{0}{69}  & \ctd{0}{55} \\
\BV{vpc} & \ctd{20}{52} & \ctd{101}{152} & \ctd{175}{227}& \ctd{6}{16}  & \ctd{258}{405}& \ctd{370}{370} & \ctd{150}{235} & \ctd{12}{23} & \ctd{161}{253}& \ctd{68}{105}  & \ctd{52}{69} & \ctd{50}{55}\\
\BV{sml} & \ctd{52}{52} & \ctd{152}{152} & \ctd{227}{227}& \ctd{16}{16} & \ctd{405}{405}& \ctd{370}{370} & \ctd{45}{235}  & \ctd{0}{23}  & \ctd{253}{253}& \ctd{10}{105}  & \ctd{7}{69}  & \ctd{2}{55} \\
\BV{shd} & \ctd{0}{52}  & \ctd{0}{152}   & \ctd{0}{227}  & \ctd{0}{16}  & \ctd{0}{405}  & \ctd{0}{370}   & \ctd{6}{235}   & \ctd{23}{23} & \ctd{0}{253}  & \ctd{13}{105}  & \ctd{12}{69} & \ctd{4}{55} \\
\BV{rfl} & \ctd{4}{52}  & \ctd{14}{152}  & \ctd{28}{227} & \ctd{0}{16}  & \ctd{54}{405} & \ctd{42}{370}  & \ctd{23}{235}  & \ctd{2}{23}  & \ctd{22}{253} & \ctd{5}{105}   & \ctd{6}{69}  & \ctd{4}{55}
\end{tabular}
%
%
}
\caption{Challenge III. Target image counts for the selected triplets of 12 factors detailed in Table \ref{tab:museum_stats}.
The top row shows the counts for the pairs of factors we chose to form triplets. The colors of each column are normalized w.r.t. the top cell in that column.
\label{tab:museum_cooc3}
}
\end{table}

\begin{table}[t]
\setlength{\tabcolsep}{0.10em}
\centering
%
\renewcommand{\arraystretch}{0.8}
{
\fontsize{9.25}{11}\selectfont
\centering
\begin{tabular}{c|c c c c c  c c c c c  c c |}
\multirow{2}{*}{$\cap$}&\BV{sml}&\BV{sml}&\BV{sml}&\BV{sml}&\BV{sml}&\BV{sml}&\BV{blr}&\BV{blr}&\BV{sml}&\BV{lgt}&\BV{lgt}&\BV{lgt}\\
                       &\BV{glr}&\BV{blr}&\BV{bgr}&\BV{lgt}&\BV{rot}&\BV{vpc}&\BV{ocl}&\BV{shd}&\BV{ocl}&\BV{blr}&\BV{ocl}&\BV{glr}\\
\hline
\em{all} & \ctb{13.5}{43.6} & \ctb{21.0}{43.6} & \ctb{29.9}{43.6} & \ctb{31.2}{43.6} & \ctb{32.6}{43.6} & \ctb{33.2}{43.6} & \ctb{37.0}{43.6} & \ctb{39.1}{43.6} & \ctb{40.7}{43.6} & \ctb{40.9}{43.6} & \ctb{43.5}{43.6} & \ctb{43.6}{43.6}\\
\hline
\BV{clp}&\ctr{42.8}{13.5}&\ctr{27.8}{21.0} &\ctr{38.7}{29.9}&\ctr{66.7}{31.2}&\ctr{42.8}{32.6}&\ctr{46.0}{33.2}&\ctr{44.4}{37.0}&\ctr{53.8}{39.1}&\ctr{45.5}{40.7}&\ctr{49.1}{40.9}&\ctr{45.1}{43.5}&\ctr{45.7}{43.6}\\
\BV{lgt}&\ctr{0.0}{13.5} &\ctr{30.0}{21.0} &\ctr{40.0}{29.9}&\ctr{31.2}{31.2}&\ctr{37.5}{32.6}&\ctr{50.0}{33.2}&\ctr{52.3}{37.0}&\ctr{38.5}{39.1}&\ctr{10.0}{40.7}&\ctr{40.9}{40.9}&\ctr{43.5}{43.5}&\ctr{43.6}{43.6}\\
\BV{blr}&\ctr{0.0}{13.5} &\ctr{21.0}{21.0} &\ctr{18.2}{29.9}&\ctr{30.0}{31.2}&\ctr{24.6}{32.6}&\ctr{17.8}{33.2}&\ctr{37.0}{37.0}&\ctr{39.1}{39.1}&\ctr{11.1}{40.7}&\ctr{40.9}{40.9}&\ctr{52.2}{43.5}&\ctr{21.0}{43.6}\\
\BV{glr}&\ctr{13.5}{13.5}&\ctr{0.0}{21.0}  &\ctr{7.7}{29.9} &\ctr{0.0}{31.2} &\ctr{10.5}{32.6}&\ctr{15.0}{33.2}&\ctr{27.8}{37.0}&\ctr{33.3}{39.1}&\ctr{27.8}{40.7}&\ctr{21.0}{40.9}&\ctr{31.2}{43.5}&\ctr{43.6}{43.6}\\
\BV{bgr}&\ctr{7.7}{13.5} &\ctr{18.2}{21.0} &\ctr{29.9}{29.9}&\ctr{40.0}{31.2}&\ctr{27.7}{32.6}&\ctr{31.4}{33.2}&\ctr{37.2}{37.0}&\ctr{60.0}{39.1}&\ctr{33.0}{40.7}&\ctr{46.1}{40.9}&\ctr{51.4}{43.5}&\ctr{42.1}{43.6}\\
\BV{ocl}&\ctr{15.0}{13.5}&\ctr{11.1}{21.0} &\ctr{33.0}{29.9}&\ctr{14.3}{31.2}&\ctr{39.7}{32.6}&\ctr{41.0}{33.2}&\ctr{37.0}{37.0}&\ctr{83.3}{39.1}&\ctr{40.7}{40.7}&\ctr{52.2}{40.9}&\ctr{43.5}{43.5}&\ctr{31.2}{43.6}\\
\BV{rot}&\ctr{10.2}{13.5}&\ctr{24.6}{21.0} &\ctr{27.7}{29.9}&\ctr{37.5}{31.2}&\ctr{32.6}{32.6}&\ctr{31.8}{33.2}&\ctr{38.0}{37.0}&\ctr{50.0}{39.1}&\ctr{39.7}{40.7}&\ctr{43.0}{40.9}&\ctr{60.0}{43.5}&\ctr{32.2}{43.6}\\
\BV{zom}&n/a             &n/a              &n/a             &n/a             &n/a             &n/a             &\ctr{75.0}{37.0}&\ctr{100}{39.1} &n/a             &\ctr{100}{40.9} &n/a             &n/a \\
\BV{vpc}&\ctr{15.0}{13.5}&\ctr{17.8}{21.0} &\ctr{31.4}{29.9}&\ctr{50.0}{31.2}&\ctr{31.8}{32.6}&\ctr{33.2}{33.2}&\ctr{35.3}{37.0}&\ctr{58.3}{39.1}&\ctr{41.0}{40.7}&\ctr{35.3}{40.9}&\ctr{40.4}{43.5}&\ctr{46.0}{43.6}\\
\BV{sml}&\ctr{13.5}{13.5}&\ctr{21.0}{21.0} &\ctr{29.9}{29.9}&\ctr{31.2}{31.2}&\ctr{32.6}{32.6}&\ctr{33.2}{33.2}&\ctr{11.1}{37.0}&n/a             &\ctr{40.7}{40.7}&\ctr{30.0}{40.9}&\ctr{14.3}{43.5}&\ctr{0.0}{43.6} \\
\BV{shd}&n/a             &n/a              &n/a             &n/a             &n/a             &n/a             &\ctr{83.3}{37.0}&\ctr{39.1}{39.1}&n/a             &\ctr{38.5}{40.9}&\ctr{75.0}{43.5}&\ctr{50.0}{43.6}\\
\BV{rfl}&\ctr{75.0}{13.5}&\ctr{50.0}{21.0} &\ctr{39.3}{29.9}&n/a             &\ctr{46.3}{32.6}&\ctr{45.2}{33.2}&\ctr{69.6}{37.0}&\ctr{100}{39.1} &\ctr{68.2}{40.7}&\ctr{100}{40.9} &\ctr{50.0}{43.5}&\ctr{100}{43.6}
\end{tabular}
%
%
}
\caption{Challenge III. Open MIC performance on the combined set w.r.t. the selected triplets of 12 factors detailed in Table \ref{tab:museum_stats}. Top-1 accuracies for baselines for our JBLD approach are listed. The top row shows results w.r.t. the most difficult pairs of factors we chose to form triplets. The colors of each column are normalized w.r.t. the top cell in that column.
\label{tab:museum_cooc3_res}
}
\vspace{-0.3cm}
\end{table}

Tables \ref{tab:museum_cooc3} and \ref{tab:museum_cooc3_res} present the image counts and results w.r.t. triplets of factors co-occurring together. To obtain these results, we first selected 12 pairs of most challenging co-occurring factors in Table \ref{tab:museum_cooc} and then we further combined them with the 12 main factors from Table \ref{tab:museum_stats} to obtain triplets. As can be seen, ({\em sml+glr+lgt}) and ({\em sml+glr+blr}) combinations of factors were the most difficult to recognize and resulted in 0\% accuracy. Moreover, ({\em sml+bgr+glr}), ({\em sml+ocl+lgt}), ({\em sml+rot+glr}) and ({\em sml+blr+ocl}) resulted in 7.7, 10.0, 10.5, and 11.1\% accuracy which also highlights the difficult nature of these combinations of factors in domain adaptation and recognition.

\ifdefined\arxiv
\else
\begin{table}[t]
\vspace{-0.3cm}
\setlength{\tabcolsep}{0.15em}
\centering
%
\renewcommand{\arraystretch}{0.8}
{
\centering
\begin{tabular}{c|l|}
abbr. & $\quad$details \\
\hline
\BV{lcl} & \pbox{7.5cm}{\vspace{0.125cm}light object clipping \eg, side, base or top including small fragments below 20\% of the exhibit area\vspace{0.125cm}}\\
\BV{hcl} & \pbox{7.5cm}{\vspace{0.125cm}heavy object clipping of large fragments \eg, more than 20\% of the exhibit area\vspace{0.125cm}}\\
\BV{bcl} & \pbox{7.5cm}{\vspace{0.125cm}clipping of the base of sculptures/exhibits \etc\vspace{0.125cm}}\\
\BV{scl} & \pbox{7.5cm}{\vspace{0.125cm}side occlusions of exhibits by humans or other objects\vspace{0.125cm}}\\
\BV{fcl} & \pbox{7.5cm}{\vspace{0.125cm}frontal/central occlusions of exhibit by humans or other objects\vspace{0.125cm}}\\
\BV{ooc} & \pbox{7.5cm}{\vspace{0.125cm}unclassified kind of occlusion\vspace{0.125cm}}\\
\BV{lzo} & \pbox{7.5cm}{\vspace{0.125cm}close-ups of an exhibit\vspace{0.125cm}}\\
\BV{hzo} & \pbox{7.5cm}{\vspace{0.125cm}large close-ups or a heavy zoom on a part of exhibit\vspace{0.125cm}}\\
\BV{lro} & \pbox{7.5cm}{\vspace{0.125cm}small in-plane rotations by no more than 15 degrees due to a tilted camera \etc.\vspace{0.125cm}}\\
\BV{hro} & \pbox{7.5cm}{\vspace{0.125cm}large in-plane rotations by more than 15 degrees due to a tilted camera \etc.\vspace{0.125cm}}\\
\BV{lvp} & \pbox{7.5cm}{\vspace{0.125cm}mismatches by less than 15 degrees between the camera viewpoint and the normal to the surface of face of an exhibit\vspace{0.125cm}}\\
\BV{hvp} & \pbox{7.5cm}{\vspace{0.125cm}mismatches by more than 15 degrees between the camera viewpoint and the normal to the surface of face of an exhibit\vspace{0.125cm}}\\
\BV{spc} & \pbox{7.5cm}{\vspace{0.125cm}light specularities and other reflections from surface\vspace{0.125cm}}
\end{tabular}
%
%
}
\caption{Challenge III. Additional factors w.r.t. which we evaluate our dataset.}
\label{tab:museum_stats_add}
\end{table}
\fi

Tables \ref{tab:museum_stats_add} presents additional factors that we use in our analysis. We split ({\em clp}), ({\em rot}), ({\em vpc}) and ({\em zoo}) into their light and heavy variants. We also split ({\em occ}) into the side and frontal occlusions. We further combine ({\em glr}) and ({\em rfl}) into specularities ({\em spc}). Table \ref{tab:museum_III_add} shows that  the large/heavy variants of truncation, rotation, viewpoint, zoom and occlusions affect performance more than the small/light variants. This highlights the need to further investigate the aspects of invariance to photometric and geometric transformations in domain adaptation algorithms and CNN representations.

\begin{table}[t]
\setlength{\tabcolsep}{0.10em}
\centering
%
\renewcommand{\arraystretch}{0.8}
{
\centering
\begin{tabular}{l|c c|| l| c c|| l| c c|}
                   &acc.      & files &                  &  acc.      & files & \BV{$\;$zoo=} &  acc.      & files\\
\hline
\BV{clp=lcl+}      &\multirow{2}{*}{\BW{65.3}}& \multirow{2}{*}{5316}&\BV{occ=scl+}     &\multirow{2}{*}{\BW{56.4}}&\multirow{2}{*}{1529}&\BV{zoo=}         &\multirow{2}{*}{\BW{70.0}}&\multirow{2}{*}{2278}\\
\BV{$\;\;$hcl+bcl} &                          &                      &\BV{$\;\;$fcl+ooc}&                          &                     &\BV{$\;\;$lzo+hzo}&                          &                     \\
\BV{lcl}           &\BW{70.6} & 2827  & \BV{scl}         &  \BW{56.0} & 1086  & \BV{lzo}      &  \BW{74.7} & 1173\\
\BV{hcl}           &\BW{59.0} & 2344  & \BV{fcl}         &  \BW{44.8} & 268   & \BV{hzo}      &  \BW{65.0} & 1106\\
\BV{bcl}           &\BW{65.4} & 739   & \BV{ooc}         &  \BW{56.9} & 851   &               &            &     \\
\hline
\BV{rot=}          &\multirow{2}{*}{\BW{65.0}}&\multirow{2}{*}{7344}& \BV{vpc=} &\multirow{2}{*}{\BW{58.6}}&\multirow{2}{*}{4571} & \BV{spc=} &\multirow{2}{*}{\BW{66.2}}&\multirow{2}{*}{3191} \\
\BV{$\;\;$lro+hro} &                          &                     & \BV{$\;\;$lvp+hvp} & & & \BV{$\;\;$glr+rfl} & &\\
\BV{lro}           &\BW{65.4} & 6724  & \BV{lvp}           &\BW{60.8} & 3241 & \BV{glr} &\BW{64.0} & 1346\\
\BV{hro}           &\BW{60.3} & 622   & \BV{hvp}           &\BW{53.0} & 1345 & \BV{rfl} &\BW{67.5} & 2000
%
\end{tabular}
}
\caption{Challenge III. Open MIC performance on the combined set w.r.t. additional factors detailed in Table \ref{tab:museum_stats_add}. Top-1 accuracies for our JBLD approach are listed.
\label{tab:museum_III_add}
}
\ifdefined\arxiv\else\vspace{-0.3cm}\fi
\end{table}
\ifdefined\arxiv
\newcommand{\FigBW}{5.0cm}
\else
\newcommand{\FigBW}{3.6cm}
\fi
\begin{figure}[t]
\centering
\vspace{-0.3cm}
\centering\includegraphics[trim=0 0 0 0, clip=true, height=\FigBW]{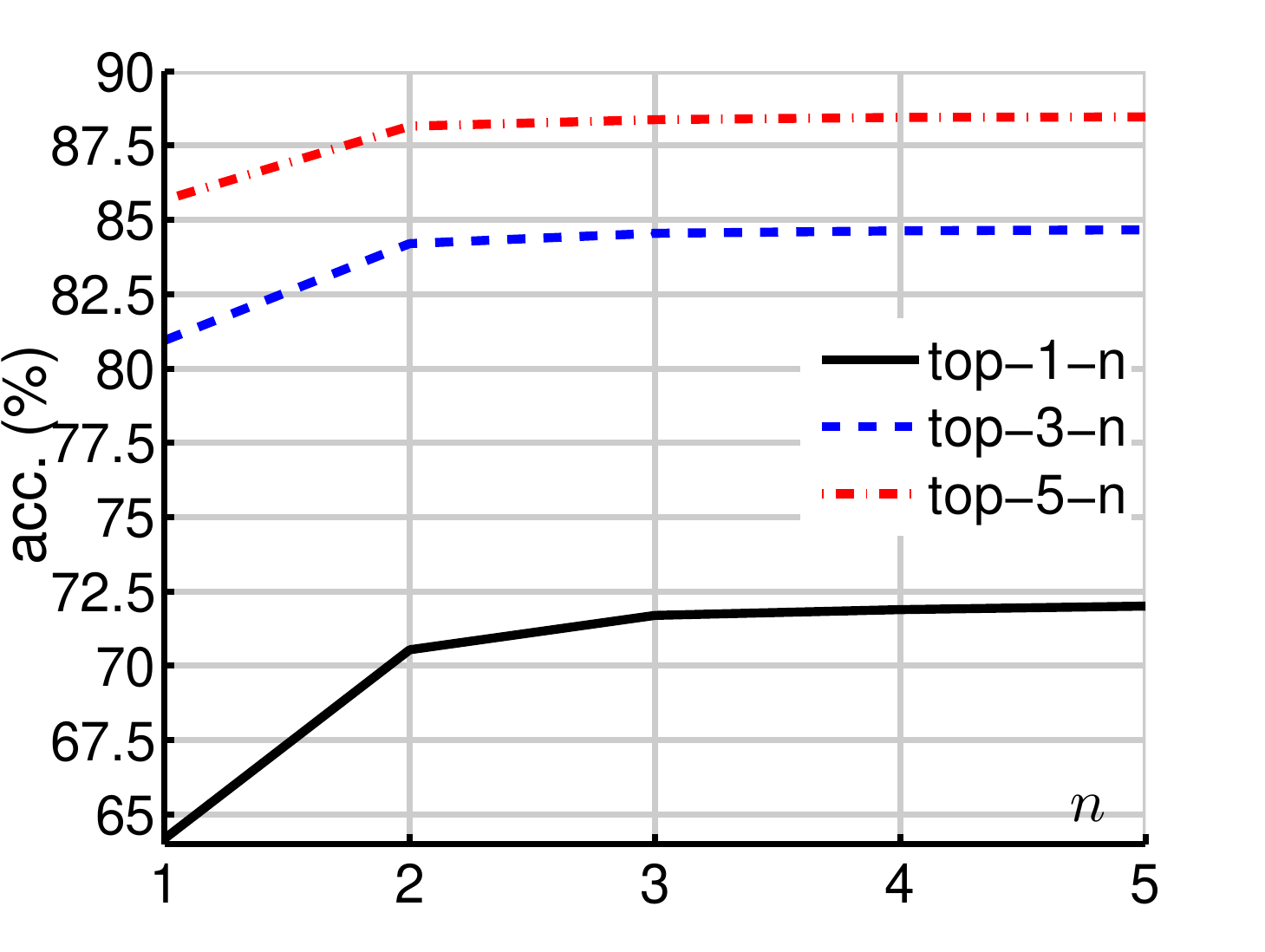}
\ifdefined\arxiv\else\vspace{-0.2cm}\fi
\caption{Challenge II. Open MIC performance on the combined set. In the plot, we list the mean top-$k$-$n$ accuracy (averaged over 5 data splits) w.r.t. $k$ and $n$ for our JBLD approach. We vary $k\!\in\!\{1,3,5\}$ and $n\!\in\!\idx{5}$.
}\vspace{-0.3cm}
\label{fig:supp1}
\end{figure}

Additionally, we revisit Challenge II and present the curves for our proposed top-$k$-$n$ measure on the combined set. Figure \ref{fig:supp1} shows how the performance of our JBLD approach varies w.r.t. $k$ and $n$ variables detailed in Section 5.2 of our main submission. By increasing $n$, we can see the gradual increase in accuracy which means that the classifier sometimes confuses the most salient exhibits in images with less salient objects. Nonetheless, even if $n\!=\!5$, the results on our new dataset are far from saturation leaving the scope for the future works to improve upon our baselines.

Lastly, we investigate the use of Mean Average Precision (MAP) in place of the accuracy as MAP can quantify the quality of recognition for datasets with multiple labels per image. For ({\em Shn}), ({\em Clk}) and ({\em Shx}) subsets, we obtain 71.5, 68.1 and 64.8\% MAP in contrast to 64.3, 61.2 and 48.5\% mean top-$1$ accuracy, respectively. Such results support our claim that the Open MIC dataset is challenging and the results are far from being saturated; making it a good choice for studying domain adaptation and few-shot learning.

\section{Derivatives of $d^2$ and $d'^2$ w.r.t. feat. vectors}
\label{app:der_sec_ord}

Suppose $\mPhi\!=\![\vphi_1,\cdots,\vphi_N]$ and $\mPhi^{*}\!\!=\![\vphi^*_1\!,\cdots,\vphi^*_N\!]$ are some feature vectors of quantity $N$ and $N^{*\!}$, respectively, which are used to evaluate $\cov$ and $\cov^{*}\!$. 
We have to first compute the derivative of the covariance matrix $\cov$ w.r.t. $\vphi_{m'n'}$. We proceed by computing der. of: i) the autocorrelation matrix in \eqref{eq:der_auto} and ii) the outer product of means $\vmu$ in \eqref{eq:der_meanout} and \eqref{eq:der_meanout2}:
\begin{align}
& \!\!\!\frac{\partial\sum_n\!\vphi_n\vphi_n^T}{\partial \phi_{m'n'}}\!=\!\vj_{m'}\vphi_{n'}^T\!+\!\vphi_{n'}\vj_{m'}^T,\label{eq:der_auto}\\
& \!\!\!\frac{\partial\vmu\vmu^T}{\partial \mu_{m'}}\!=\!\vj_{m'}\vmu^T\!+\!\vmu\vj_{m'}^T,\label{eq:der_meanout}\\
& \!\!\!\frac{\partial\vmu\vmu^T}{\partial\phi_{m'n'}}\!=\!\sum_m\!\frac{\partial\vmu\vmu^T}{\partial \mu_m}\frac{\partial \mu_m}{\partial\phi_{m'n'}}\!=\!\frac{1}{N}\!\left(\vj_{m'}\vmu^T\!+\!\vmu\vj_{m'}^T\right),\label{eq:der_meanout2}
\end{align}
where $\vj_{m'}$ is a vector of zero entries except for position $m'$ which is equal one. 
Putting together \eqref{eq:der_auto}, \eqref{eq:der_meanout} and \eqref{eq:der_meanout2} yields the derivative of $\cov$ w.r.t. $\vphi_{m'n'}$:
\begin{align}
& \frac{\partial\left(\frac{1}{N}\!\sum_n\!\vphi_n\vphi_n^T\right)\!-\!\vmu\vmu^T}{\partial \phi_{m'n'}}\!=\!\frac{1}{N}\!\left(\vj_{m'}\left(\vphi_{n'}\!-\!\vmu\right)^T\!+\!\left(\vphi_{n'}\!-\!\vmu\right)\vj_{m'}^T\right).\label{eq:der_cov}
\end{align}

\vspace{0.05cm}
\noindent{\textbf{The derivatives}} of $d^2_g$ w.r.t. covariance $\cov$ as well as $\vphi_{m'n'}$ and $\vphi^*_{m'n'}$ are provided below:
\begin{align}
& \frac{\partial d^2(\cov,\!\cov^{*})}{\partial\cov}\!=\!2d\left(\cov,\cov^{*}\right)\!\frac{\partial d(\cov,\!\cov^{*})}{\partial\cov}\\
& \frac{\partial d^2(\cov,\!\cov^{*})}{\partial\phi_{m'n'}}\!=\!\sum_{m,n}\!\frac{\partial d^2(\cov,\!\cov^{*})}{\partial\Sigma_{mn}}\left(\frac{\partial\cov}{\partial \phi_{m'n'}}\right)_{mn}\nonumber\\
&\;=\!\frac{1}{N}\!\sum_{m,n}\!\frac{\partial d^2(\cov,\!\cov^{*})}{\partial\Sigma_{mn}}\left(\vj_{m'}\left(\vphi_{n'}\!-\!\vmu\right)^T\!\!+\!\left(\vphi_{n'}\!-\!\vmu\right)\vj_{m'}^T\right)_{mn}.
\end{align}
\noindent{\textbf{The derivatives}} of $d^2(\cov,\!\cov^{*})$ (after simplifying summations)  w.r.t. $\mPhi$ and $\mPhi^{*}\!$ are:
\begin{align}
& \!\!\!\!\frac{\partial d^2(\cov,\cov^{*})}{\partial\mPhi}\!=\!\frac{2}{N}\!\frac{\partial d^2(\cov,\cov^{*})}{\partial\cov}\!\left(\mPhi\!-\!\vmu\vOnes^T\right),\\
&\frac{\partial d^2(\cov,\cov^{*})}{\partial\mPhi^*}\!=\!\frac{2}{N^*}\!\frac{\partial d^2(\cov,\cov^{*})}{\partial\cov^*}\!\left(\mPhi^{*}\!\!-\!\vmu^{*}\vOnes^T\right).
\end{align}

\noindent{\textbf{The derivatives}} of $d'^2$ w.r.t. $\mPhi$ and $\mPhi^{*}\!$ are derived from:
\begin{align}
& \!\!\!\!\sum_{m,n}\!\frac{\partial d'^2}{\partial\Phi'_{mn}}\!\frac{\partial(\stkout{\mZ}\Phi)_{mn}}{\partial\mPhi}\!=\!\frac{2\stkout{\mZ}^T}{N}\!\frac{\partial d^2(\cov'\!,\cov^{'*})}{\partial\cov'}\!\left(\mPhi'\!\!-\!\vmu'\!\vOnes^T\right),
\end{align}
where $\mPhi'\!\!=\!\stkout{\mZ}\mPhi$, $\mPhi'^*\!\!=\!\stkout{\mZ}\mPhi^*\!$, $\vmu'\!\!=\!\stkout{\mZ}\vmu$ and $\vmu'^*\!\!=\!\stkout{\mZ}\vmu^*\!$ and $\stkout{\mZ}$ is some projection matrix. We get the following derivatives:
\begin{align}
& \!\!\!\!\!\!\!\!\frac{\partial d'^2}{\partial\mPhi}\!=\!\frac{2\stkout{\mZ}^T}{N}\!\frac{\partial d'^2}{\partial\cov'}\!\left(\mPhi'\!\!-\!\vmu'\!\vOnes^T\right),\;\;
\frac{\partial d'^2}{\partial\mPhi^*\!}\!=\!-\frac{2\stkout{\mZ}^T}{N}\!\frac{\partial d'^2}{\partial\cov'^*\!}\!\left(\mPhi'^*\!\!-\!\vmu'^*\!\vOnes^T\right).
\end{align}
Lastly, based on our Proposition 4 in the main submission, we know that our particular choice $\stkout{\mZ}$ deems $d^2\!=\!d'^2\!$, therefore $\frac{\partial d^2}{\partial\mPhi}\!=\!\frac{\partial d'^2}{\partial\mPhi}$ and $\frac{\partial d^2}{\partial\mPhi^*\!}\!=\!\frac{\partial d'^2}{\partial\mPhi^*\!}$.

\vspace{0.3cm}
\noindent{\textbf{The derivatives}} of $||\vmu\!-\!\vmu^{*}||_2^2$ w.r.t. $\vmu$, $\vphi_n$ and $\vphi^{*}_{n'}$ are:
\begin{align}
& \!\!\!\!\!\frac{\partial ||\vmu\!-\!\vmu^{*}||_2^2}{\partial\vmu}\!\!=\!2\left(\vmu\!-\!\vmu^{*}\right),\\
& \!\!\!\!\!\frac{\partial ||\vmu\!-\!\vmu^{*}||_2^2}{\partial\vphi_{n'}}\!\!=\!\!\frac{2\left(\vmu\!-\!\vmu^{*}\right)}{N},\;\frac{\partial ||\vmu\!-\!\vmu^{*}||_2^2}{\partial\vphi_{n'}^{*}}\!\!=\!\!\frac{2\left(\vmu\!-\!\vmu^{*}\right)}{N^*}.\!\!
\end{align}

\ifdefined\arxiv
\renewcommand{\SrcImgWW}{0.15}
\renewcommand{\SrcImgWWW}{0.15}
\renewcommand{\SrcImgHH}{2.78cm}
\else
\newcommand{\SrcImgWW}{0.14}
\newcommand{\SrcImgWWW}{0.15}
\newcommand{\SrcImgHH}{1.5cm}
\fi

\begin{figure}[t]
\centering
\ifdefined\arxiv\hspace{-1.5cm}\else\hspace{-0.8cm}\fi
%
\begin{subfigure}[b]{\SrcImgWW\linewidth}
\centering\includegraphics[trim=0 0 0 0, clip=true, height=\SrcImgHH]{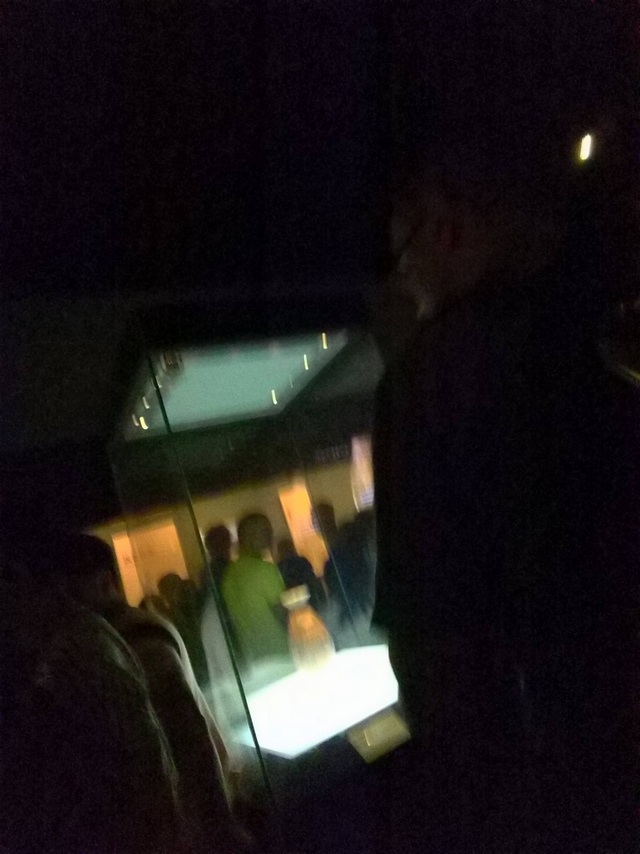}
\end{subfigure}
\begin{subfigure}[b]{\SrcImgWW\linewidth}
\centering\includegraphics[trim=0 0 0 0, clip=true, height=\SrcImgHH]{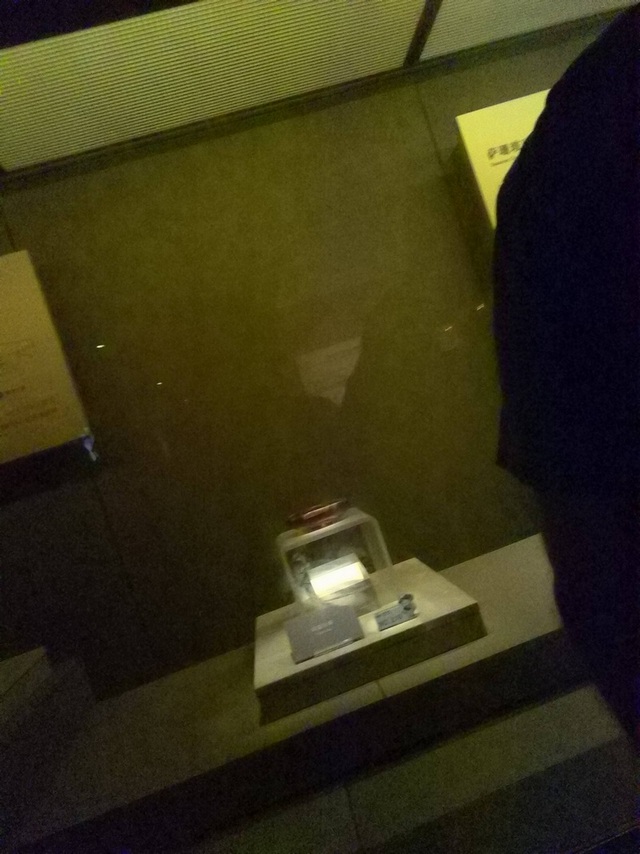}
\end{subfigure}
\begin{subfigure}[b]{\SrcImgWW\linewidth}
\centering\includegraphics[trim=0 0 0 0, clip=true, height=\SrcImgHH]{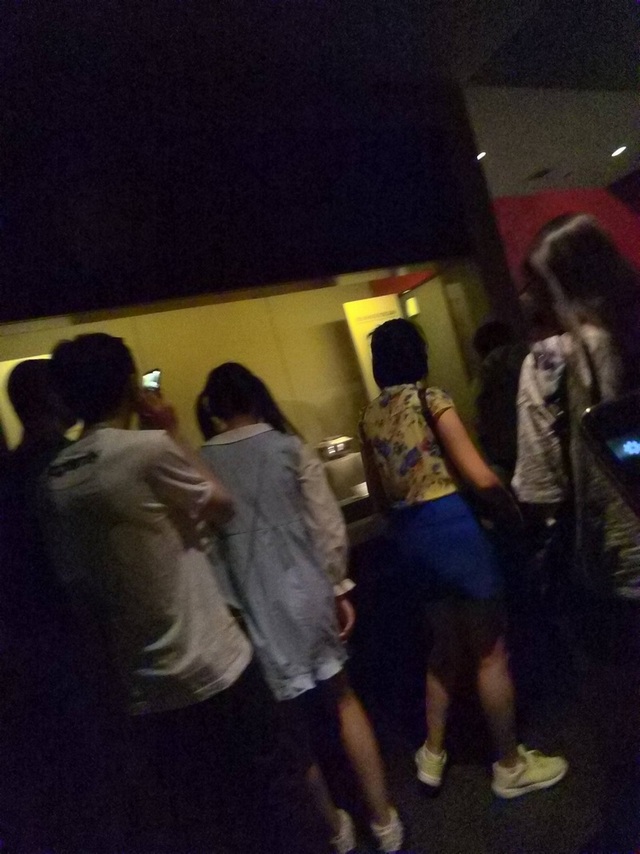}
\end{subfigure}
\begin{subfigure}[b]{\SrcImgWW\linewidth}
\centering\includegraphics[trim=0 0 0 0, clip=true, height=\SrcImgHH]{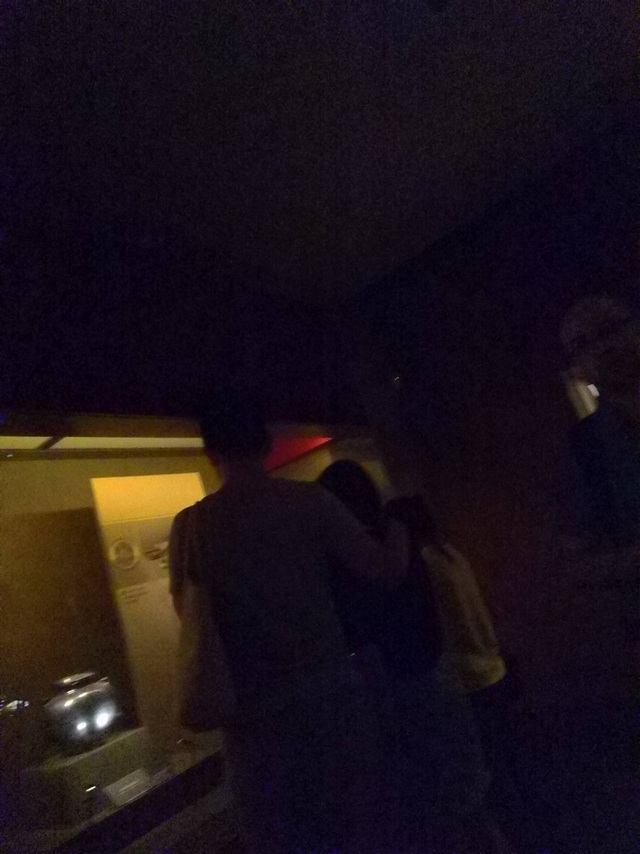}
\end{subfigure}
\begin{subfigure}[b]{\SrcImgWW\linewidth}
\centering\includegraphics[trim=0 0 0 0, clip=true, height=\SrcImgHH]{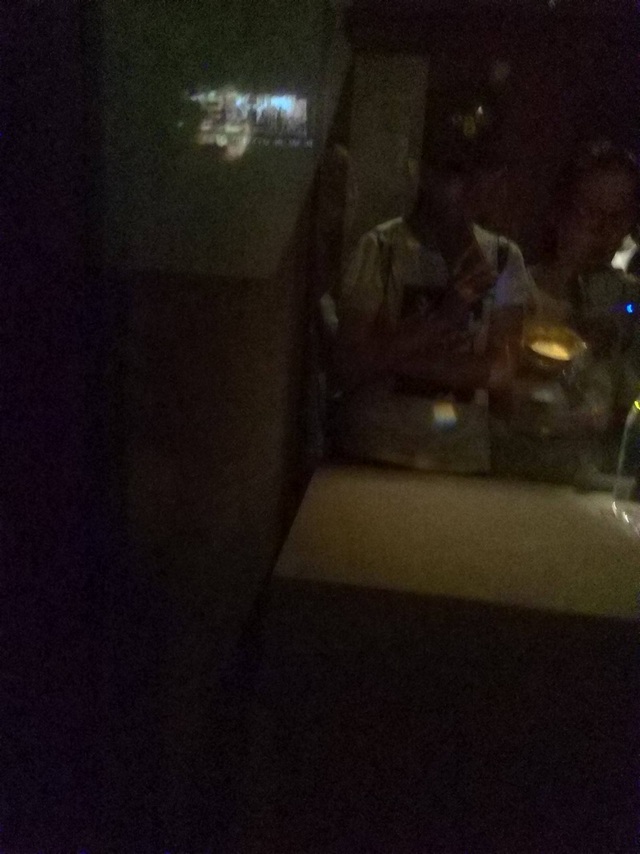}
\end{subfigure}
\begin{subfigure}[b]{\SrcImgWW\linewidth}
\centering\includegraphics[trim=0 0 0 0, clip=true, height=\SrcImgHH]{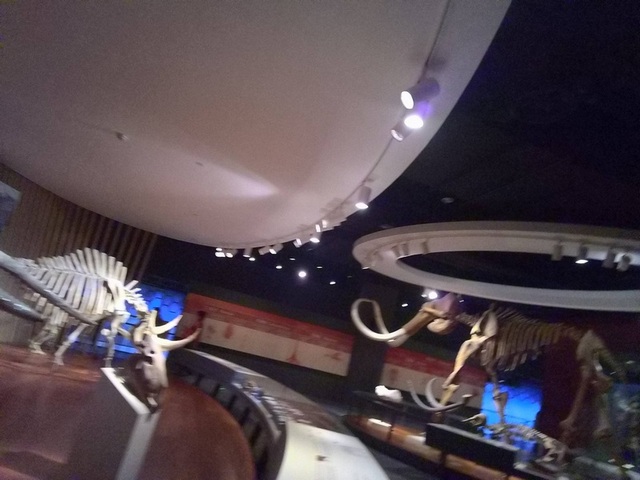}
\end{subfigure}
\caption{Some of the most difficult to identify exhibits from the target domain in the Open MIC dataset.}
\label{fig:images_target_hard}
\vspace{-0.3cm}
\end{figure}

\section{Comparison with the unsupervised domain adaptation.}
\label{sec:unsupervised}

\newcommand{\ShnII}{\color{blue!20!black!30!red}\em Shn}
\newcommand{\ClkII}{\color{blue!20!black!30!red}\em Clk}
\newcommand{\SclII}{\color{blue!20!black!30!red}\em Scl}
\newcommand{\SciII}{\color{blue!20!black!30!red}\em Sci}
\newcommand{\GlsII}{\color{blue!20!black!30!red}\em Gls}

\newcommand{\RelII}{\color{blue!20!black!30!red}\em Rel}
\newcommand{\NatII}{\color{blue!20!black!30!red}\em Nat}
\newcommand{\ShxII}{\color{blue!20!black!30!red}\em Shx}
\newcommand{\ClvII}{\color{blue!20!black!30!red}\em Clv}
\newcommand{\HonII}{\color{blue!20!black!30!red}\em Hon}

While the protocol for supervised domain utilizes the labeled source and the labeled target training data (a few of datapoints per class), the unsupervised domain adaptation assumes larger unannotated target dataset. Below, we evaluate methods such as the Unsupervised Domain Adaptation with Residual Transfer Networks ({\em RTN}) \cite{DRT_DA}, Deep Transfer Learning with Joint Adaptation Networks ({\em JAN}) \cite{JAN_DA} and Deep Hashing Network for Unsupervised Domain Adaptation ({\em DHN}) \cite{DHN_DA} on the ({\ShnII}), ({\ClkII}), and ({\HonII}) subsets of Open MIC. Table \ref{tab:unsuperv} shows that the unsupervised approaches score lower than JBLD despite we used ResNet-50 for all methods, increased numbers of target datapoints and tweaked all hyper-parameters. However, lower results compared to the supervised domain adaptation are expected as the supervised and unsupervised approaches follow very different training protocols.

\begin{table}[h]
\setlength{\tabcolsep}{0.10em}
\centering
\parbox{\ParBoxAW\textwidth}
{
\renewcommand{\arraystretch}{0.8}
{
\centering
\hspace{0.05cm}
\begin{tabular}{c c | c c | c c | c c }
\hline
{\em RTN+\ShnII} & 51.0 & {\em JAN+\ShnII} & 49.5 & {\em DHN+\ShnII} & 49.0 & {\em{\BL JBLD}+\ShnII} & {\bf 64.3}  \\
{\em RTN+\ClkII} & 54.7 & {\em JAN+\ClkII} & 51.0 & {\em DHN+\ClkII} & 52.2 & {\em{\BL JBLD}+\ClkII} & {\bf 61.2}  \\
{\em RTN+\HonII} & 66.0 & {\em JAN+\RelII} & 65.2 & {\em DHN+\RelII} & 64.6 & {\em{\BL JBLD}+\RelII} & {\bf 77.3}  \\
\hline
\end{tabular}
}
}
\caption{Evaluation of the unsupervised domain adaptation on the Open MIC dataset.}
\label{tab:unsuperv}
\vspace{-0.3cm}
\end{table}

\section{Evaluations on the Office-Home dataset.}
\label{sec:office-home}

For evaluation on the Office-Home dataset \cite{office_home}, we chose \dsClAr/\dsPrAr~domain pairs, 38 source and 12 target train images per class. The baseline ({\em So}) approach scored 59.5/60.0\% accuracy. For JBLD, we obtained {\bf 61.6}/{\bf 62.2\%}. For 20 source and 3 target training images per class, we obtained 48.1/49.3 ({\em So}) and {\bf 49.2}/{\bf 50.5\%} (JBLD) accuracy. Unsupervised approach ({\em DHN}) scored only 34.69/29.91\% in this setting.

\section{Other recent datasets.}
\label{sec:fine-grained_data}

A complementary to ours is a dataset for fine-grained domain adaptation \cite{multi_task} which contains `easily acquired' $\sim$1M cars of 2657 classes from websites for `fine-grained' domain adaptation on 170 classes and $\sim$100 samples per class. 
In contrast, it took us 6 months and 10 visits to several museums with volunteers to collect a specialist data which cannot be simply found on flicker. We used wearable cameras to capture the target images \eg, skeletons, pottery, tools, jewelery, which all are made of varied materials. Some pieces of art are non-rigid, some emit light, some contain moving parts, some looks extremely similar \etc. The target data exhibits big scale and viewpoint changes as well as occlusions, motion blur and light glares \etc.
\end{appendices}


{\small

\begin{thebibliography}{38}
\providecommand{\natexlab}[1]{#1}
\providecommand{\url}[1]{\texttt{#1}}
\expandafter\ifx\csname urlstyle\endcsname\relax
  \providecommand{\doi}[1]{doi: #1}\else
  \providecommand{\doi}{doi: \begingroup \urlstyle{rm}\Url}\fi

\bibitem[Baxter et~al.(1995)Baxter, Caruana, Mitchell, Pratt, Silver, and
  Thrun]{transfer_workshop_1995}
Jonathan Baxter, Rich Caruana, Tom Mitchell, Lorien~Y. Pratt, Daniel~L. Silver,
  and Sebastian Thrun.
\newblock Learning to learn: {K}nowledge consolidation and transfer in
  inductive systems.
\newblock NIPS Workshop,
  \url{http://plato.acadiau.ca/courses/comp/dsilver/NIPS95_LTL/transfer.workshop.1995.html},
  1995.
\newblock Accessed: 30-10-2016.

\bibitem[Bhatia(2007)]{bhatia_pdm}
R.~Bhatia.
\newblock Positive definite matrices.
\newblock \emph{Princeton Univ Press}, 2007.

\bibitem[Bo and Sminchisescu(2009)]{bo_nystrom}
Liefeng Bo and Cristian Sminchisescu.
\newblock Efficient match kernels between sets of features for visual
  recognition.
\newblock \emph{NIPS}, 2009.

\bibitem[Cherian et~al.(2013)Cherian, Sra, Banerjee, and
  Papanikolopoulos]{anoop_logdet}
Arun Cherian, Suvrit Sra, Adrish Banerjee, and Nikolaos Papanikolopoulos.
\newblock {Jensen-Bregman LogDet Divergence with Application to Efficient
  Similarity Search for Covariance Matrices}.
\newblock \emph{TPAMI}, 35\penalty0 (9):\penalty0 2161--2174, 2013.
\newblock ISSN 0162-8828.
\newblock \doi{10.1109/tpami.2012.259}.
\newblock URL \url{http://dx.doi.org/10.1109/tpami.2012.259}.

\bibitem[Chopra et~al.(2013)Chopra, Balakrishnan, and
  Gopalan]{chopra_icml_workshop}
Sumit Chopra, Suhrid Balakrishnan, and Raghuraman Gopalan.
\newblock Dlid: Deep learning for domain adaptation by interpolating between
  domains.
\newblock \emph{ICML Workshop}, 2013.

\bibitem[Daum{\'e} et~al.(2010)Daum{\'e}, Kumar, and Saha]{frustrating_domain}
Hal Daum{\'e}, III, Abhishek Kumar, and Avishek Saha.
\newblock Frustratingly easy semi-supervised domain adaptation.
\newblock \emph{Proceedings of the 2010 Workshop on Domain Adaptation for
  Natural Language Processing}, pages 53--59, 2010.

\bibitem[Donahue et~al.(2014)Donahue, Jia, Vinyals, Hoffman, Zhang, Tzeng, and
  Darrell]{donahue_decaf}
Jeff Donahue, Yangqing Jia, Oriol Vinyals, Judy Hoffman, Ning Zhang, Eric
  Tzeng, and Trevor Darrell.
\newblock Decaf: A deep convolutional activation feature for generic visual
  recognition.
\newblock \emph{ICML}, 2014.

\bibitem[Ganin et~al.(2016)Ganin, Ustinova, Ajakan, Germain, Larochelle,
  Laviolette, Marchand, and Lempitsky]{ganin_jmlr_adversal}
Yaroslav Ganin, Evgeniya Ustinova, Hana Ajakan, Pascal Germain, Hugo
  Larochelle, Fran\c{c}ois Laviolette, Mario Marchand, and Victor Lempitsky.
\newblock Domain-adversarial training of neural networks.
\newblock \emph{JMLR}, 17\penalty0 (1):\penalty0 2096--2030, 2016.
\newblock ISSN 1532-4435.

\bibitem[Gebru et~al.(2017)Gebru, Hoffman, and Fei{-}Fei]{multi_task}
Timnit Gebru, Judy Hoffman, and Li~Fei{-}Fei.
\newblock Fine-grained recognition in the wild: {A} multi-task domain
  adaptation approach.
\newblock \emph{CoRR}, abs/1709.02476, 2017.
\newblock URL \url{http://arxiv.org/abs/1709.02476}.

\bibitem[Ghifary et~al.(2014)Ghifary, Kleijn, and Zhang.]{dann_com}
M.~Ghifary, W.~B. Kleijn, and M.~Zhang.
\newblock Domain adaptive neural networks for object recognition.
\newblock \emph{CoRR}, abs/1409.6041, 2014.

\bibitem[Girshick et~al.(2014)Girshick, Donahue, Darrell, and
  Malik]{girshick_rich_feat}
Ross Girshick, Jeff Donahue, Trevor Darrell, and Jitendra Malik.
\newblock Rich feature hierarchies for accurate object detection and semantic
  segmentation.
\newblock \emph{CVPR}, pages 580--587, 2014.
\newblock \doi{10.1109/CVPR.2014.81}.
\newblock URL \url{http://dx.doi.org/10.1109/CVPR.2014.81}.

\bibitem[Gong et~al.(2012)Gong, Shi, Sha, and Grauman]{office_calt10}
B.~Gong, Y.~Shi, F.~Sha, and K.~Grauman.
\newblock Geodesic flow kernel for unsupervised domain adaptation.
\newblock \emph{CVPR}, pages 2066--2073, 2012.

\bibitem[Gross et~al.(2010)Gross, Matthews, Cohn, Kanade, and Baker]{multi_pie}
Ralph Gross, Iain Matthews, Jeffrey Cohn, Takeo Kanade, and Simon Baker.
\newblock Multi-pie.
\newblock \emph{Image Vision Comput.}, 28\penalty0 (5):\penalty0 807--813,
  2010.
\newblock ISSN 0262-8856.
\newblock \doi{10.1016/j.imavis.2009.08.002}.
\newblock URL \url{http://dx.doi.org/10.1016/j.imavis.2009.08.002}.

\bibitem[Herath et~al.(2017)Herath, Harandi, and Porikli]{samita_domain}
S.~Herath, M.~Harandi, and F.~Porikli.
\newblock Learning an invariant hilbert space for domain adaptation.
\newblock \emph{CVPR}, 2017.

\bibitem[Koniusz et~al.(2016)Koniusz, Tas, and Porikli]{me_domain}
Piotr Koniusz, Yusuf Tas, and Fatih Porikli.
\newblock Domain adaptation by mixture of alignments of second- or higher-order
  scatter tensors.
\newblock \emph{CoRR}, abs/1409.1556, 2016.

\bibitem[Krizhevsky et~al.(2012)Krizhevsky, Sutskever, and
  Hinton]{krizhevsky_alexnet}
Alex Krizhevsky, Ilya Sutskever, and Geoffrey~E. Hinton.
\newblock {ImageNet} classification with deep convolutional neural networks.
\newblock \emph{NIPS}, pages 1106--1114, 2012.

\bibitem[Kuzborskij et~al.(2016)Kuzborskij, Carlucci, and
  Caputo]{kuzborskij_cvpr16}
Ilja Kuzborskij, Fabio~Maria Carlucci, and Barbara Caputo.
\newblock When na{\"i}ve bayes nearest neighbors meet convolutional neural
  networks.
\newblock \emph{CVPR}, 2016.

\bibitem[L.~Fei-Fei;~Fergus(2006)]{feifei_oneshot}
R.;~Perona L.~Fei-Fei;~Fergus.
\newblock One-shot learning of object categories.
\newblock \emph{TPAMI}, 28:\penalty0 594--611, April 2006.

\bibitem[Li et~al.(2016)Li, Tommasi, Orabona, V{\'a}zquez, L{\'o}pez, Xu, and
  Larochelle]{transfer_workshop_2016}
W.~Li, T.~Tommasi, F.~Orabona, D.~V{\'a}zquez, M.~L{\'o}pez, J.~Xu, and
  H.~Larochelle.
\newblock Task-cv: Transferring and adapting source knowledge in computer
  vision.
\newblock ECCV Workshop, \url{http://adas.cvc.uab.es/task-cv2016}, 2016.
\newblock Accessed: 22-11-2016.

\bibitem[Long et~al.(2016{\natexlab{a}})Long, Wang, and Jordan]{DRT_DA}
Mingsheng Long, Jianmin Wang, and Michael~I. Jordan.
\newblock Unsupervised domain adaptation with residual transfer networks.
\newblock \emph{CoRR}, abs/1602.04433, 2016{\natexlab{a}}.
\newblock URL \url{http://arxiv.org/abs/1602.04433}.

\bibitem[Long et~al.(2016{\natexlab{b}})Long, Wang, and Jordan]{JAN_DA}
Mingsheng Long, Jianmin Wang, and Michael~I. Jordan.
\newblock Deep transfer learning with joint adaptation networks.
\newblock \emph{CoRR}, abs/1605.06636, 2016{\natexlab{b}}.
\newblock URL \url{http://arxiv.org/abs/1605.06636}.

\bibitem[Pennec et~al.(2006)Pennec, Fillard, and Ayache]{PEN06}
Xavier Pennec, Pierre Fillard, and Nicholas Ayache.
\newblock {A Riemannian Framework for Tensor Computing}.
\newblock \emph{IJCV}, 66\penalty0 (1):\penalty0 41--66, 2006.
\newblock ISSN 0920-5691.
\newblock \doi{10.1007/s11263-005-3222-z}.
\newblock URL \url{http://dx.doi.org/10.1007/s11263-005-3222-z}.

\bibitem[Rebuffi et~al.(2017)Rebuffi, Bilen, and Vedaldi]{decathlon_challenge}
Sylvestre-Alvise Rebuffi, Hakan Bilen, and Andrea Vedaldi.
\newblock Learning multiple visual domains with residual adapters.
\newblock Part of the PASCAL in Detail Workshop Challenge,
  \url{http://www.robots.ox.ac.uk/~vgg/decathlon/}, 2017.
\newblock Accessed: 30-10-2017.

\bibitem[Russakovsky et~al.(2015)Russakovsky, Deng, Su, Krause, Satheesh, Ma,
  Huang, Karpathy, Khosla, Bernstein, Berg, and Fei-Fei]{ILSVRC15}
Olga Russakovsky, Jia Deng, Hao Su, Jonathan Krause, Sanjeev Satheesh, Sean Ma,
  Zhiheng Huang, Andrej Karpathy, Aditya Khosla, Michael Bernstein,
  Alexander~C. Berg, and Li~Fei-Fei.
\newblock {ImageNet} large scale visual recognition challenge.
\newblock \emph{IJCV}, 115\penalty0 (3):\penalty0 211--252, 2015.
\newblock \doi{10.1007/s11263-015-0816-y}.

\bibitem[Saenko et~al.(2010)Saenko, Kulis, Fritz, and Darrell]{saenko_office}
Kate Saenko, Brian Kulis, Mario Fritz, and Trevor Darrell.
\newblock Adapting visual category models to new domains.
\newblock \emph{ECCV}, pages 213--226, 2010.
\newblock URL \url{http://dl.acm.org/citation.cfm?id=1888089.1888106}.

\bibitem[Sermanet et~al.(2014)Sermanet, Eigen, Zhang, Mathieu, Fergus, and
  Lecun]{sermanet_overfeat}
Pierre Sermanet, David Eigen, Xiang Zhang, Michael Mathieu, Rob Fergus, and
  Yann Lecun.
\newblock Overfeat: Integrated recognition, localization and detection using
  convolutional networks.
\newblock \emph{ICLR}, 2014.
\newblock URL \url{http://arxiv.org/abs/1312.6229}.

\bibitem[Simonyan and Zisserman(2015)]{simonyan_vgg}
K.~Simonyan and A.~Zisserman.
\newblock Very deep convolutional networks for large-scale image recognition.
\newblock \emph{ICLR}, abs/1409.1556, 2015.

\bibitem[Sun et~al.(2015)Sun, Feng, and Saenko]{frustrating_domain_return}
Baochen Sun, Jiashi Feng, and Kate Saenko.
\newblock Return of frustratingly easy domain adaptation.
\newblock \emph{CoRR}, abs/1511.05547, 2015.
\newblock URL \url{http://arxiv.org/abs/1511.05547}.

\bibitem[Szegedy et~al.(2015)Szegedy, Liu, Jia, Sermanet, Reed, Anguelov,
  Erhan, Vanhoucke, and Rabinovich]{google_net}
Christian Szegedy, Wei Liu, Yangqing Jia, Pierre Sermanet, Scott Reed, Dragomir
  Anguelov, Dumitru Erhan, Vincent Vanhoucke, and Andrew Rabinovich.
\newblock Going deeper with convolutions.
\newblock \emph{CVPR}, 2015.
\newblock URL \url{http://arxiv.org/abs/1409.4842}.

\bibitem[Tommasi et~al.(2010)Tommasi, Orabona, and Caputo]{tommasi_cvpr10}
Tatiana Tommasi, Francesco Orabona, and Barbara Caputo.
\newblock Safety in numbers: Learning categories from few examples with multi
  model knowledge transfer.
\newblock \emph{CVPR}, pages 3081--3088, 2010.
\newblock \doi{10.1109/CVPR.2010.5540064}.

\bibitem[Tommasi et~al.(2014)Tommasi, Tuytelaars, and Caputo]{tomassi_tesbed}
Tatiana Tommasi, Tinne Tuytelaars, and Barbara Caputo.
\newblock A testbed for cross-dataset analysis.
\newblock \emph{Technical Report}, 2014.
\newblock URL \url{https://arxiv.org/abs/1402.5923}.

\bibitem[Tommasi et~al.(2016)Tommasi, Lanzi, Russo, and Caputo]{tommasi_eccv16}
Tatiana Tommasi, Martina Lanzi, Paolo Russo, and Barbara Caputo.
\newblock Learning the roots of visual domain shift.
\newblock \emph{ECCV Workshop}, 2016.

\bibitem[Tzeng et~al.(2015)Tzeng, Hoffman, Darrell, and Saenko]{tzeng_transfer}
E.~Tzeng, J.~Hoffman, T.~Darrell, and K.~Saenko.
\newblock Simultaneous deep transfer across domains and tasks.
\newblock \emph{ICCV}, pages 4068--4076, 2015.

\bibitem[Venkateswara et~al.(2017{\natexlab{a}})Venkateswara, Eusebio,
  Chakraborty, and Panchanathan]{DHN_DA}
Hemanth Venkateswara, Jose Eusebio, Shayok Chakraborty, and Sethuraman
  Panchanathan.
\newblock Deep hashing network for unsupervised domain adaptation.
\newblock \emph{CoRR}, abs/1706.07522, 2017{\natexlab{a}}.
\newblock URL \url{http://arxiv.org/abs/1706.07522}.

\bibitem[Venkateswara et~al.(2017{\natexlab{b}})Venkateswara, Eusebio,
  Chakraborty, and Panchanathan]{office_home}
Hemanth Venkateswara, Jose Eusebio, Shayok Chakraborty, and Sethuraman
  Panchanathan.
\newblock Deep hashing network for unsupervised domain adaptation.
\newblock In \emph{CVPR}, 2017{\natexlab{b}}.

\bibitem[Wang and Hebert(2016)]{xiong_eccv16}
Yu-Xiong Wang and Martial Hebert.
\newblock Learning to learn: Model regression networks for easy small sample
  learning.
\newblock \emph{ECCV}, 2016.

\bibitem[Yeh et~al.(2014)Yeh, Huang, and Wang]{yeh_cca_hetero}
Yi-Ren Yeh, Chun-Hao Huang, and Yu-Chiang~Frank Wang.
\newblock Heterogeneous domain adaptation and classification by exploiting the
  correlation subspace.
\newblock \emph{Transactions on Image Processing}, 23\penalty0 (5), 2014.

\bibitem[Zhou et~al.(2014)Zhou, Lapedriza, Xiao, Torralba, and
  Oliva]{places_dataset}
B.~Zhou, A.~Lapedriza, J.~Xiao, A.~Torralba, and A.~Oliva.
\newblock Learning deep features for scene recognition using places database.
\newblock \emph{NIPS}, 2014.

\end{thebibliography}

}

\end{document}